\documentclass[accepted]{uai2026} 
                        
\usepackage[american]{babel}

\usepackage{natbib} 
\usepackage{mathtools} 
\usepackage{booktabs} 
\usepackage{tikz} 

\usepackage{setspace}

\usepackage{graphicx}

\usepackage{enumitem}

\usepackage{microtype}
\usepackage{subfigure}
\usepackage{comment}
\usepackage{wrapfig}
\usepackage{arydshln}
\usepackage{enumitem}
\usepackage{multirow}
\usepackage{url}
\usepackage{natbib}
\usepackage{authblk}

\usepackage{amsmath}
\usepackage{amssymb}
\usepackage{amsfonts}
\usepackage{bm}
\usepackage{bbm}

\usepackage{algorithm}
\usepackage{algorithmic}

\def\eqref#1{equation~\ref{#1}}

\def\1{\bm{1}}

\def\rmd{{\mathrm{d}}}

\DeclareMathAlphabet{\mathsfit}{\encodingdefault}{\sfdefault}{m}{sl}
\SetMathAlphabet{\mathsfit}{bold}{\encodingdefault}{\sfdefault}{bx}{n}

\def\calB{{\mathcal{B}}}

\def\calD{{\mathcal{D}}}

\def\calF{{\mathcal{F}}}

\def\calH{{\mathcal{H}}}

\def\calJ{{\mathcal{J}}}

\def\calW{{\mathcal{W}}}
\def\calX{{\mathcal{X}}}
\def\calY{{\mathcal{Y}}}
\def\calZ{{\mathcal{Z}}}

\def\bbE{{\mathbb{E}}}

\def\bbP{{\mathbb{P}}}

\def\bbR{{\mathbb{R}}}

\newcommand{\KL}{D_{\mathrm{KL}}}

\newcommand{\p}[1]{\left(#1\right)}
\newcommand{\sqb}[1]{\left[#1\right]}
\newcommand{\cb}[1]{\left\{#1\right\}}

\newcommand{\Bigp}[1]{\Big(#1\Big)}

\newcommand{\abs}[1]{\left|#1\right|}

\usepackage{amsthm}

\theoremstyle{plain}

\newtheorem{theorem}{Theorem}[section]

\newtheorem{corollary}[theorem]{Corollary}
\newtheorem{proposition}[theorem]{Proposition}

\newtheorem{assumption}{Assumption}[section]

\newtheorem*{remark}{Remark}

\usepackage[textsize=tiny]{todonotes}
\usepackage{multirow}
\usepackage{wrapfig}
\usepackage{subfigure}

\renewcommand{\eqref}[1]{(\ref{#1})}

\usepackage{xcolor}
\newcount\Comments  
\newcommand{\kibitz}[2]{\ifnum\Comments=1\textcolor{#1}{#2}\fi}

\usepackage{listings}

\allowdisplaybreaks

\title{General Bayesian Policy Learning}

\author{\href{mkato-csecon@g.ecc.u-tokyo.ac.jp}{Masahiro Kato}{}}
\affil{Mizuho-DL Financial Technology Co., Ltd.\\
    Tokyo, Japan
}
  
\begin{document}
\maketitle

\begin{abstract}
This study proposes a General Bayes framework for policy learning. We consider decision problems in which a decision-maker chooses an action from a given set to maximize expected welfare. Typical examples include treatment choice and portfolio optimization. In such problems, the statistical target is a decision rule, and predicting each potential outcome is not necessarily of primary interest. We formulate this policy-learning problem through loss-based Bayesian updating. Our main technical device is a squared-loss surrogate for welfare maximization. We show that maximizing empirical welfare over a policy class with a quadratic penalty controlled by a tuning parameter $\zeta>0$ is equivalent to minimizing a scaled squared error in the outcome difference. The resulting General Bayes posterior over decision rules admits two equivalent characterizations: a Gaussian pseudo-likelihood representation and a decision-theoretic loss-based characterization. As one implementation, we introduce GBPLNet, a neural network implementation with a tanh-squashed output. Finally, we establish PAC-Bayes-type guarantees for the surrogate risk and the corresponding penalized welfare.
\end{abstract}

\section{Introduction}
Policy learning aims to train a policy function $\delta(x)$ that maps context features $x\in\calX$ to an action $a$ in a constrained action set, so as to maximize an expected outcome or, equivalently, to minimize an expected task loss. When the action set is discrete, many objectives can be written as a cost-sensitive $0$-$1$ loss, where the loss penalizes choosing a suboptimal action with a weight given by an outcome gap. This perspective suggests using a learning rule that targets a task loss directly rather than fitting a potentially misspecified generative model.

General Bayes is a framework for updating beliefs using a loss function rather than a likelihood function \citep{Bissiri2016generalframework}.
Given a prior distribution and an empirical loss, General Bayes yields a generalized posterior that is coherent from a decision-theoretic viewpoint. This framework is attractive when a probabilistic model is misspecified, inconvenient, or unnecessary for the downstream task.
This study develops a General Bayes framework for policy learning. The generalized posterior provides uncertainty over the decision rule itself. It can be used to summarize uncertainty in policy welfare and to assess whether the preferred action is stable across posterior draws. The goal is not to obtain uniform welfare improvements over empirical welfare maximization, weighted classification, or plug-in regression, but to retain a welfare-based objective while providing posterior information about policies and welfare.

However, a practical obstacle is that welfare objectives are typically linear in the policy, so they do not directly correspond to a convenient likelihood. We address this by rewriting welfare maximization as a squared-loss minimization problem.
In the binary action case, the key surrogate loss has the form $\p{\frac{1}{\sqrt{\zeta}}\p{y\p{1}-y\p{0}}-\sqrt{\zeta}f\p{x}}^2$, 
where $f\p{x}\in\sqb{-1,1}$ encodes a policy and $\zeta>0$ is a tuning parameter.
This surrogate admits a Gaussian pseudo-likelihood interpretation, so it enables a computationally convenient General Bayes posterior and standard approximation methods.

More generally, General Bayes allows any loss, so one could update directly using the negative welfare contribution as a loss.
This direct update yields objectives that are linear in the policy. It does not induce the quadratic regularization in Theorem~\ref{thm:binary-equivalence}, and it does not provide a Gaussian pseudo-likelihood that can be exploited by standard Bayesian computation. 
The squared-loss surrogate yields a regression-style objective with stable gradients, an explicit regularization path indexed by $\zeta$, and a unified extension to $K$ actions and missing outcome settings.

For $K$ actions, a baseline-gap formulation in terms of outcome differences is natural, but it introduces a dependence on the baseline action through the regularization term.
We therefore develop a baseline-free symmetric surrogate that operates on the full feedback vector.
Furthermore, in missing outcome settings such as observational studies and logged bandit feedback, outcome differences are not observed.
We show that inverse propensity weighting (IPW) and doubly robust (DR) pseudo-outcomes yield explicit empirical losses that can be used for General Bayes updating, and we provide population-level target characterizations.

\paragraph{Contributions.}
We list our contributions as follows:
\begin{itemize}[noitemsep, topsep=0pt, leftmargin=0.40cm]
  \item We propose a General Bayes framework for policy learning that updates a prior over decision rules.
\item For binary actions, we show that empirical welfare maximization is equivalent to minimizing a scaled squared-loss surrogate, up to a quadratic regularization controlled by $\zeta$ (Section~\ref{sec:binary}).
\item We define the associated General Bayes posterior, give its Gaussian pseudo-likelihood and loss-based characterizations, and discuss the role of the temperature parameter $\eta$ (Section~\ref{sec:binary}).
  \item For $K$ actions, we present a baseline-gap surrogate and a baseline-free symmetric full-vector surrogate, and we discuss baseline choice (Section~\ref{sec:K}).
  \item For missing outcomes, we define empirical losses based on IPW and DR pseudo-outcomes, and we provide population-level target results (Section~\ref{sec:missing}).
  \item We give PAC-Bayes bounds for the surrogate risk and translate them into bounds for penalized welfare (Section~\ref{sec:theory}).
  \item We describe GBPLNet, neural networks with tanh-squashed outputs, as one implementation example for bounded scores (Section~\ref{sec:computation} and Appendix~\ref{appdx:detail_gbpl}).
\end{itemize}
We review related work in Appendix~\ref{app:related}. 
We refer to our framework as \emph{General Bayesian Policy Learning (GBPL)} and our proposed implementation with neural networks as \emph{GBPLNet}.
Our proposed method can be applied to various tasks, including treatment choice \citep{Kitagawa2018whoshould,Athey2021policylearning,Zhou2023offlinemultiaction} and portfolio optimization \citep{Tallman2023bayesianpredictive,Tallman2024predictivedecision,Kato2024bayesianportfolio,Kato2024generalbayesian}.
Note that we use empirical welfare maximization in a general sense, not restricted to counterfactual risk minimization \citep{Swaminathan2015batchlearning,Kitagawa2018whoshould}.

\section{Setup}
\label{sec:setup}
Assume that there are $K$ actions, indexed by $1,2,\dots,K$.
Let $X\in\calX\subseteq \bbR^d$ be a regressor and $Y\p{a}\in\calY\subseteq \bbR$ be an outcome for each action $a\in\cb{1,\dots,K}$.
Let $Z=\p{X,Y\p{1},\dots,Y\p{K}}$ be a random vector following a distribution $P$ on $\calZ=\calX\times\calY^K$.
Given i.i.d. observations $\calD=\cb{z_i}_{i=1}^n$ with $z_i=\p{x_i,y_i\p{1},\dots,y_i\p{K}}$, a decision-maker trains a decision rule $\delta\colon \calX\to\cb{1,\dots,K}$ to maximize the expected welfare
\begin{align}
  V\p{\delta}
  =
  \bbE\sqb{\sum_{a\in\cb{1,\dots,K}}\mathbbm{1}\sqb{\delta\p{X}=a}Y\p{a}}.
  \label{eq:welfare}
\end{align}
The Bayes optimal rule satisfies
\begin{align*}
  \delta^\star\p{x}\in\arg\max_{a\in\cb{1,\dots,K}}\bbE\sqb{Y\p{a}\mid X=x}
\end{align*}
for almost every $x$.

In many applications, outcomes are not fully observed, for example, under the potential outcome framework in causal inference and in bandit feedback in online learning.
The missing outcome setting is introduced in Section~\ref{sec:missing}.
Sections \ref{sec:binary} to \ref{sec:K} focus on the full feedback setting to isolate the surrogate construction.

\section{Background: Generalized Bayesian Updating}
Let $\Theta$ be a parameter space and let $\theta\in\Theta$ index a model for a decision rule or a score function.
Let $\Pi$ be a prior distribution on $\Theta$.
Given a loss function $\ell\p{\theta;z}$ and data $\calD=\cb{z_i}_{i=1}^n$, generalized Bayesian updating defines a generalized posterior
\begin{align}
  \rmd \Pi_{\eta}\p{\theta\mid \calD}
  \propto
  \rmd \Pi\p{\theta}\exp\p{-\eta\sum^n_{i=1}\ell\p{\theta;z_i}},
  \label{eq:gb-posterior}
\end{align}
where $\eta>0$ is a temperature parameter.

\paragraph{Decision-theoretic derivation.}
A key result of \citet{Bissiri2016generalframework} is that \eqref{eq:gb-posterior} is the optimizer of a variational problem.
Let $Q$ be any probability distribution on $\Theta$ such that $Q$ is absolutely continuous with respect to $\Pi$.
Define
\begin{align}
  \calJ\p{Q}
  =
  \eta \bbE_{\theta\sim Q}\sqb{\sum^n_{i=1}\ell\p{\theta;z_i}}
  +
  \KL\p{Q\|\Pi}.
  \label{eq:variational-objective}
\end{align}

\begin{proposition}
\label{prop:variational}
Assume that the normalizing constant $Z_{\eta}\p{\calD}=\int \exp\p{-\eta\sum^n_{i=1}\ell\p{\theta;z_i}}\rmd \Pi\p{\theta}$
is finite.
Then the unique minimizer of $\calJ\p{Q}$ over all such $Q$ is $Q=\Pi_{\eta}\p{\cdot\mid \calD}$.
\end{proposition}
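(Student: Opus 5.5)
The plan is to use the Gibbs variational principle (Donsker--Varadhan). The key identity rewrites $\calJ\p{Q}$ as a Kullback--Leibler divergence to $\Pi_{\eta}\p{\cdot\mid\calD}$ plus a constant. Write $L_n\p{\theta}=\sum^n_{i=1}\ell\p{\theta;z_i}$. First I would note that $Z_{\eta}\p{\calD}>0$: the integrand $\exp\p{-\eta L_n}$ is strictly positive, assuming $\ell$ is real-valued. Together with the finiteness assumption, this shows that $\Pi_{\eta}\p{\cdot\mid\calD}$ is a well-defined probability measure. Its density with respect to $\Pi$ is
\begin{align*}
  \frac{\rmd \Pi_{\eta}\p{\cdot\mid\calD}}{\rmd \Pi}\p{\theta}=\frac{\exp\p{-\eta L_n\p{\theta}}}{Z_{\eta}\p{\calD}},
\end{align*}
and this density is strictly positive. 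Consequently $\Pi$ and $\Pi_{\eta}\p{\cdot\mid\calD}$ are mutually absolutely continuous, so any $Q\ll\Pi$ also satisfies $Q\ll\Pi_{\eta}\p{\cdot\mid\calD}$.

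Next, for $Q\ll\Pi$ I would apply the chain rule for Radon--Nikodym derivatives:
\begin{align*}
  \log\frac{\rmd Q}{\rmd \Pi}=\log\frac{\rmd Q}{\rmd \Pi_{\eta}\p{\cdot\mid\calD}}-\eta L_n-\log Z_{\eta}\p{\calD}.
\end{align*}
Integrating against $Q$ gives
\begin{align*}
  \KL\p{Q\|\Pi}=\KL\p{Q\|\Pi_{\eta}\p{\cdot\mid\calD}}-\eta\bbE_{Q}\sqb{L_n}-\log Z_{\eta}\p{\calD}.
\end{align*}
Hence
\begin{align*}
  \calJ\p{Q}=\KL\p{Q\|\Pi_{\eta}\p{\cdot\mid\calD}}-\log Z_{\eta}\p{\calD}.
\end{align*}
Since $\KL\ge 0$, with equality if and only if $Q=\Pi_{\eta}\p{\cdot\mid\calD}$ (Gibbs' inequality), the posterior minimizes $\calJ$, and it does so uniquely. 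At the minimum, $\calJ=-\log Z_{\eta}\p{\calD}$, which is finite.

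The main obstacle is making the integration step rigorous, because $\bbE_Q\sqb{L_n}$ may be $+\infty$ or undefined. Splitting the expectation then risks an $\infty-\infty$ expression. I would first restrict to the $Q$ for which $\calJ\p{Q}$ is well defined. For a loss bounded below, which is the case for the squared surrogates in this paper, $\bbE_Q\sqb{L_n}\in(-\infty,\infty]$ is always defined. If $\bbE_Q\sqb{L_n}=\infty$ or $\KL\p{Q\|\Pi}=\infty$, then $\calJ\p{Q}=\infty>-\log Z_{\eta}\p{\calD}$, so such $Q$ cannot be minimizers. In the remaining case both terms are finite, so the integrand decomposes into integrable pieces and the identity holds exactly. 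For a general loss I would work with the combined integrand $\log\p{\rmd Q/\rmd\Pi}+\eta L_n=\log\p{\rmd Q/\rmd\Pi_{\eta}}-\log Z_{\eta}$. Its negative part is $Q$-integrable because $\int\p{\log\p{\rmd Q/\rmd\Pi_\eta}}^-\rmd Q\le 1/e$. This yields the same conclusion with $\calJ\p{Q}$ interpreted as the integral of the combined integrand.
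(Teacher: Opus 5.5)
Your proposal is correct and is essentially the paper's proof: both rewrite $\calJ\p{Q}$ as $\KL\p{Q\|\Pi_{\eta}\p{\cdot\mid\calD}}-\log Z_{\eta}\p{\calD}$ via the Radon--Nikodym density $\exp\p{-\eta L_n}/Z_{\eta}\p{\calD}$ and conclude uniqueness from Gibbs' inequality. Your extra checks fill in details the paper leaves implicit: that $Z_{\eta}\p{\calD}>0$, that $\Pi$ and $\Pi_{\eta}\p{\cdot\mid\calD}$ are mutually absolutely continuous, and how to handle infinite or undefined $\bbE_Q\sqb{L_n}$ through the combined integrand.
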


The proof is provided in Appendix~\ref{app:proof-variational}.

\paragraph{Relation to ordinary Bayes.}
If the loss is chosen as the negative log-likelihood, $\ell\p{\theta;z}=-\log p\p{z\mid \theta}$, and $\eta=1$, then \eqref{eq:gb-posterior} reduces to ordinary Bayes' rule.
For other losses, \eqref{eq:gb-posterior} yields a coherent update even when no global likelihood model is posited \citep{Bissiri2016generalframework}.

\section{GBPL with Binary Actions}
\label{sec:binary}
For simplicity, we first consider the binary action case, $K=2$.
We reindex actions $1$ and $2$ as $1$ and $0$.
We consider a possibly randomized policy $\delta\colon\calX\to\sqb{0,1}$.
The expected welfare can be written as
\begin{align*}
  V\p{\delta}
  =
  \bbE\sqb{\delta\p{X}Y\p{1}+\p{1-\delta\p{X}}Y\p{0}}.
\end{align*}

\subsection{Empirical Welfare Maximization}
Let $\calH_{\mathrm{pol}}$ be a class of measurable maps $\delta\colon\calX\to\sqb{0,1}$.
The empirical welfare is
\begin{align}
  \widehat{V}\p{\delta}
  =
  \frac{1}{n}\sum^n_{i=1}\p{\delta\p{X_i}Y_i\p{1}+\p{1-\delta\p{X_i}}Y_i\p{0}}.
  \label{eq:empwelfare}
\end{align}
A natural estimator is
\begin{align}
  \widehat{\delta}
  \in
  \arg\max_{\delta\in\calH_{\mathrm{pol}}}\widehat{V}\p{\delta}.
  \label{eq:empwelfare-opt}
\end{align}
From a Bayesian perspective, \eqref{eq:empwelfare-opt} does not specify a likelihood, so it is unclear how to define a posterior distribution over $\delta$ without further modeling choices.

\subsection{Squared-loss surrogate}
Define the score function $f\p{x}=2\delta\p{x}-1$ and the induced class $\calF_{\calH_{\mathrm{pol}}}
  =
  \cb{f\colon f\p{x}=2\delta\p{x}-1,\delta\in\calH_{\mathrm{pol}}}$. 
Let $\zeta>0$ be a constant.
We consider the squared-loss surrogate
\begin{align}
\label{eq:surrogate-empirical}
  &\widehat{f}\p{\zeta}
  \in\\
  &\arg\min_{f\in\calF_{\calH_{\mathrm{pol}}}}
    \frac{1}{n}\sum^n_{i=1}\p{\frac{1}{\sqrt{\zeta}}\p{Y_i\p{1}-Y_i\p{0}}-\sqrt{\zeta}f\p{X_i}}^2.\notag
\end{align}

To state the equivalence, define the penalized welfare maximizer
\begin{align}
  \widetilde{\delta}\p{\lambda}
  \in
  \arg\max_{\delta\in\calH_{\mathrm{pol}}}\cb{
    \widehat{V}\p{\delta}
    -
    \lambda\frac{1}{n}\sum^n_{i=1}\p{2\delta\p{X_i}-1}^2
  }.
  \label{eq:empwelfare-penalized}
\end{align}

\begin{theorem}
\label{thm:binary-equivalence}
For any $\zeta>0$, a score $\widehat{f}\p{\zeta}$ minimizes \eqref{eq:surrogate-empirical} if and only if there exists a maximizer $\widetilde{\delta}\p{\zeta/4}$ of \eqref{eq:empwelfare-penalized} such that $\widehat{f}\p{\zeta}=2\widetilde{\delta}\p{\zeta/4}-1$.
\end{theorem}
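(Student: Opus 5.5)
The plan is to expand the square in \eqref{eq:surrogate-empirical} and show that, as a function of $f$, the surrogate objective is a positive multiple of the negative penalized welfare in \eqref{eq:empwelfare-penalized} plus a constant independent of $f$. Write $\Delta_i = Y_i\p{1}-Y_i\p{0}$. Then
\begin{align*}
  \p{\frac{1}{\sqrt{\zeta}}\Delta_i-\sqrt{\zeta}f\p{X_i}}^2
  =
  \frac{1}{\zeta}\Delta_i^2 - 2\Delta_i f\p{X_i} + \zeta f\p{X_i}^2 .
\end{align*}
The first term does not depend on $f$ and can be dropped. Thus minimizing the surrogate over $\calF_{\calH_{\mathrm{pol}}}$ is equivalent to minimizing $\frac{1}{n}\sum_i\cb{-2\Delta_i f\p{X_i}+\zeta f\p{X_i}^2}$.

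Next, substitute $f=2\delta-1$ and rewrite the empirical welfare \eqref{eq:empwelfare}. We have $\delta\p{X_i}Y_i\p{1}+\p{1-\delta\p{X_i}}Y_i\p{0} = Y_i\p{0}+\delta\p{X_i}\Delta_i$, and $\delta\p{X_i}\Delta_i = \frac{1}{2}\Delta_i f\p{X_i}+\frac{1}{2}\Delta_i$. Hence
\begin{align*}
  \widehat{V}\p{\delta} = C_n + \frac{1}{2n}\sum_{i=1}^n \Delta_i f\p{X_i},
\end{align*}
where $C_n=\frac{1}{n}\sum_i\p{Y_i\p{0}+\Delta_i/2}$ depends only on the data. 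The penalized objective with $\lambda=\zeta/4$ therefore equals
\begin{align*}
  C_n + \frac{1}{2n}\sum_i \Delta_i f\p{X_i} - \frac{\zeta}{4n}\sum_i f\p{X_i}^2
  =
  C_n - \frac{1}{4}\cdot\frac{1}{n}\sum_i\cb{-2\Delta_i f\p{X_i}+\zeta f\p{X_i}^2}.
\end{align*}
So the surrogate loss equals $4\p{C_n-\text{penalized welfare}}+\frac{1}{n\zeta}\sum_i\Delta_i^2$, an affine transformation with negative slope. The argmin sets and argmax sets therefore correspond exactly.

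Finally, handle the ``if and only if'' and the class bookkeeping. The map $\delta\mapsto 2\delta-1$ is a bijection from $\calH_{\mathrm{pol}}$ onto $\calF_{\calH_{\mathrm{pol}}}$ by definition of the induced class, with inverse $f\mapsto\p{f+1}/2$.

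For the forward direction, given a minimizer $\widehat f\p{\zeta}$, set $\widetilde\delta=\p{\widehat f\p{\zeta}+1}/2\in\calH_{\mathrm{pol}}$. Since its penalized welfare is an affine decreasing function of the surrogate loss, it is a maximizer.

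For the converse, given a maximizer $\widetilde\delta\p{\zeta/4}$ with $\widehat f = 2\widetilde\delta-1$, the same identity shows $\widehat f$ is a minimizer.

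No step is a real obstacle. The only care needed is to verify the constant $\lambda=\zeta/4$ from the factor matching (the coefficient $-2$ versus $\frac{1}{2}$ gives the scale $4$), and to note that the constants $C_n$ and $\sum_i\Delta_i^2$ are free of $f$. No existence of optimizers is needed, because the statement is about equality of solution sets.
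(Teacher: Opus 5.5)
Your proof is correct and follows essentially the same route as the paper's: expand the square, drop the $f$-free term $\frac{1}{\zeta}\Delta_i^2$, substitute $f=2\delta-1$, and match coefficients to get $\lambda=\zeta/4$. It is somewhat more explicit than the paper's version, because you write the exact affine identity (the surrogate equals $4(C_n-P)+\frac{1}{n\zeta}\sum_i\Delta_i^2$, where $P$ is the penalized welfare) and you spell out the bijection $\delta\mapsto 2\delta-1$ that both directions of the equivalence need.
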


The proof is provided in Appendix~\ref{app:proof-binary}.
As $\zeta\to 0$, the penalty in \eqref{eq:empwelfare-penalized} vanishes.
If the maximizer in \eqref{eq:empwelfare-opt} is unique, then $\widetilde{\delta}\p{\lambda}\to \widehat{\delta}$ as $\lambda\to 0$.
Without uniqueness, any limit point of $\widetilde{\delta}\p{\lambda}$ belongs to the set of maximizers in \eqref{eq:empwelfare-opt}.
The penalty term in \eqref{eq:empwelfare-penalized} is minimized at $\delta\p{x}=1/2$, so for finite $\zeta$ the surrogate induces shrinkage toward randomized policies.

\subsection{Loss and Generalized Posterior}
Let $f_{\theta}$ be a parametric score model indexed by $\theta\in\Theta$.
In the binary action case, we define the loss as
\begin{align}
  \ell\p{\theta;\p{x,y\p{1},y\p{0}}}
  =
  \frac{1}{2}\p{\frac{1}{\sqrt{\zeta}}\p{y\p{1}-y\p{0}}-\sqrt{\zeta}f_{\theta}\p{x}}^2.
  \label{eq:loss-binary}
\end{align}
The generalized posterior is then
\begin{align}
  \rmd \Pi_{\eta}\p{\theta\mid \calD}
  \propto
  \rmd \Pi\p{\theta}\exp\p{-\eta\sum^n_{i=1}\ell\p{\theta;z_i}},
  \label{eq:gb-binary}
\end{align}
where $z_i=\p{x_i,y_i\p{1},y_i\p{0}}$.

\subsection{Gaussian Pseudo-Likelihood Representation}
Let $u=y\p{1}-y\p{0}$ denote the outcome difference.
The exponential weight in \eqref{eq:gb-binary} can be rewritten as
\[
\exp\p{-\eta\ell\p{\theta;\p{x,y\p{1},y\p{0}}}}
=
\exp\p{-\frac{\eta}{2\zeta}\p{u-\zeta f_{\theta}\p{x}}^2}.
\]
Therefore, the exponential weight in \eqref{eq:gb-binary} is identical to that induced by the Gaussian pseudo-likelihood
\begin{align}
  U\mid X=x,\quad \theta \sim \mathcal{N}\p{\zeta f_{\theta}\p{x},\zeta/\eta}.
  \label{eq:gaussian-pseudolikelihood}
\end{align}
Equation~\eqref{eq:gaussian-pseudolikelihood} is an algebraic representation of the General Bayes update and is not an assumption that the true conditional distribution of $U$ given $X$ is Gaussian.

\subsection{General Bayes Interpretation}
The posterior \eqref{eq:gb-binary} has two equivalent characterizations.
First, under the Gaussian pseudo-likelihood in \eqref{eq:gaussian-pseudolikelihood}, it has the form of an ordinary Bayesian posterior.
Second, without committing to any likelihood, it is the unique optimizer of the variational problem in Proposition~\ref{prop:variational}, so it is a coherent loss-based belief update \citep{Bissiri2016generalframework}.

The parameters $\zeta$ and $\eta$ play different roles.
The parameter $\zeta$ changes the learning objective itself, because it determines the strength of the quadratic regularization in Theorem~\ref{thm:binary-equivalence}.
The parameter $\eta$ changes posterior concentration, and it can be viewed as a calibration parameter.
Moreover, the parameter $\eta$ affects point estimators derived from the generalized posterior, for example, the maximum a posteriori (MAP) estimator, because it rescales the empirical loss relative to the prior through \eqref{eq:variational-objective}.
Under the Gaussian pseudo-likelihood in \eqref{eq:gaussian-pseudolikelihood}, $\zeta$ sets the scale of the conditional mean and $\zeta/\eta$ is the conditional variance, so $\zeta$ and $\eta$ jointly determine the pseudo-likelihood scale.
Selecting $\eta$ is therefore a calibration problem, and practical strategies include Fisher-information matching through the loss-likelihood bootstrap \citep{Lyddon2019generalbayesian} and adaptive learning-rate procedures \citep{Grunwald2012thesafe}.

\section{GBPL with Multiple Actions}
\label{sec:K}
We now extend the squared-loss surrogate and the posterior construction to $K$ actions.
We present two surrogate constructions.
The first is based on outcome gaps relative to a baseline action.
The second is baseline-free and symmetric, and it uses the full feedback vector.

\subsection{Baseline-Gap Surrogate}
Fix a baseline action $K$.
Define outcome gaps $\overline{U}\p{a}=Y\p{a}-Y\p{K}$ for $a\in\cb{1,\dots,K-1}$. 
For a possibly randomized policy $\delta\colon\calX\to\Delta_K$, where $\Delta_K=\cb{p\in\bbR^K\colon p_a\ge 0,\sum_{a=1}^K p_a=1}$, welfare can be written as
\begin{align*}
V\p{\delta}
=
\bbE\sqb{Y\p{K}+\sum_{a=1}^{K-1}\delta_a\p{X}\overline{U}\p{a}}.
\end{align*}
The baseline term $\bbE\sqb{Y\p{K}}$ does not depend on $\delta$.

Let $\calH_{\mathrm{pol}}^{\p{K}}$ be a class of policies $\delta\colon\calX\to\Delta_K$.
For $a\in\cb{1,\dots,K-1}$, define the componentwise score transform $f_a\p{x}=2\delta_a\p{x}-1$. 
Let $\calF_{\calH_{\mathrm{pol}}^{\p{K}}}$ denote the induced class of score vectors $f\p{x}=\p{f_1\p{x},\dots,f_{K-1}\p{x}}$.
Because this is an induced class, its components retain the simplex constraint inherited from $\delta$ and cannot be chosen independently in $\sqb{-1,1}^{K-1}$.
For $\zeta>0$, define the empirical surrogate
\begin{align}
\label{eq:surrogate-K-gap}
&\widehat{f}^{\mathrm{Gap}}\p{\zeta}
\in \arg\min_{f\in\calF_{\calH_{\mathrm{pol}}^{\p{K}}}}\\
&
\frac{1}{n}\sum^n_{i=1}\sum_{a=1}^{K-1}\p{\frac{1}{\sqrt{\zeta}}\p{Y_i\p{a}-Y_i\p{K}}-\sqrt{\zeta}f_a\p{X_i}}^2.\notag
\end{align}

Define the empirical welfare $\widehat{V}\p{\delta}
=
\frac{1}{n}\sum^n_{i=1}\sum_{a=1}^{K}\delta_a\p{X_i}Y_i\p{a}$
and the penalized empirical welfare maximizer
\begin{align}
\label{eq:penalized-K-gap}
&\widetilde{\delta}^{\mathrm{Gap}}\p{\lambda}
\in\\
&\arg\max_{\delta\in\calH_{\mathrm{pol}}^{\p{K}}}\cb{
\widehat{V}\p{\delta}
-
\lambda\frac{1}{n}\sum^n_{i=1}\sum_{a=1}^{K-1}\p{2\delta_a\p{X_i}-1}^2
}.\notag
\end{align}

\begin{theorem}
\label{thm:K-equivalence-gap}
For any $\zeta>0$, a score $\widehat{f}^{\mathrm{Gap}}\p{\zeta}$ minimizes \eqref{eq:surrogate-K-gap} if and only if there exists a maximizer $\widetilde{\delta}^{\mathrm{Gap}}\p{\zeta/4}$ of \eqref{eq:penalized-K-gap} such that $\widehat{f}^{\mathrm{Gap}}_a\p{\zeta}=2\widetilde{\delta}^{\mathrm{Gap}}_a\p{\zeta/4}-1$ for every $a\in\cb{1,\dots,K-1}$.
\end{theorem}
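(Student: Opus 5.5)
The plan is to expand the squared surrogate objective in \eqref{eq:surrogate-K-gap} termwise. We then show that it equals a negative affine transformation of the penalized welfare objective in \eqref{eq:penalized-K-gap}, with a positive scale factor and an additive constant that does not depend on $f$. This mirrors the argument behind Theorem~\ref{thm:binary-equivalence}, now applied coordinatewise for $a\in\cb{1,\dots,K-1}$.

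\textbf{Expanding the surrogate.} Write $\overline{U}_i\p{a}=Y_i\p{a}-Y_i\p{K}$. For each $i$ and $a$,
\[
\p{\tfrac{1}{\sqrt{\zeta}}\overline{U}_i\p{a}-\sqrt{\zeta}f_a\p{X_i}}^2=\tfrac{1}{\zeta}\overline{U}_i\p{a}^2-2\overline{U}_i\p{a}f_a\p{X_i}+\zeta f_a\p{X_i}^2 .
\]
The first term does not depend on $f$.

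\textbf{Rewriting the cross term.} Substitute $f_a=2\delta_a-1$ into the cross term, which gives
\[
-2\overline{U}_i\p{a}f_a\p{X_i}=-4\delta_a\p{X_i}\overline{U}_i\p{a}+2\overline{U}_i\p{a}.
\]
Next use the simplex constraint $\delta_K=1-\sum_{a<K}\delta_a$. This yields the identity
\[
\sum_{a=1}^{K}\delta_a\p{X_i}Y_i\p{a}=Y_i\p{K}+\sum_{a=1}^{K-1}\delta_a\p{X_i}\overline{U}_i\p{a}.
\]
Therefore
\[
\frac{1}{n}\sum_i\sum_{a<K}\delta_a\p{X_i}\overline{U}_i\p{a}=\widehat{V}\p{\delta}-\frac{1}{n}\sum_i Y_i\p{K}.
\]

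\textbf{Collecting terms.} Averaging over $i$ and combining the pieces, the surrogate objective equals
\[
C-4\Bigl\{\widehat{V}\p{\delta}-\tfrac{\zeta}{4}\tfrac{1}{n}\sum_i\sum_{a<K}\p{2\delta_a\p{X_i}-1}^2\Bigr\},
\]
where $C$ collects all data-only terms. Since $-4<0$, minimizing the surrogate over $f$ is the same as maximizing the penalized objective with $\lambda=\zeta/4$.

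\textbf{Transferring the argmin.} To conclude, we note that the map $\delta\mapsto f=\p{2\delta_a-1}_{a<K}$ is a surjection from $\calH_{\mathrm{pol}}^{\p{K}}$ onto the induced class $\calF_{\calH_{\mathrm{pol}}^{\p{K}}}$, by definition of that class. Both objectives depend on $\delta$ only through its first $K-1$ components, since $\delta_K$ is determined by the simplex constraint. Hence the two optimization problems coincide under this reparametrization, and their optimal sets correspond. A minimizer $\widehat{f}$ therefore arises as $2\widetilde{\delta}_a-1$ for some maximizer $\widetilde{\delta}$, and conversely.

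\textbf{Main obstacle.} The algebra is routine. The only delicate step is this last bookkeeping of the induced class. Distinct $\delta$'s in $\calH_{\mathrm{pol}}^{\p{K}}$ might share the same values at the sample points, or even the same first $K-1$ components, so the correspondence between optima must be stated in the ``there exists a maximizer'' form. We handle this by noting that the objective value is a function of $f$ alone, so any $\delta$ in the preimage of a minimizing $f$ is a maximizer.
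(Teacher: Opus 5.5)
Your proposal is correct and follows essentially the same route as the paper's proof. Both expand the square and drop the $f$-free term $\frac{1}{\zeta}\overline{U}_i(a)^2$, substitute $f_a=2\delta_a-1$, and use $\sum_{a=1}^K\delta_a=1$ to identify $\frac{1}{n}\sum_i\sum_{a<K}\delta_a(X_i)\overline{U}_i(a)$ with $\widehat{V}(\delta)$ up to a constant, which yields $\lambda=\zeta/4$. Your explicit argmin-transfer step through the induced class tightens what the paper leaves as ``up to constants.'' Since $\delta_K=1-\sum_{a<K}\delta_a$, the map $\delta\mapsto f$ is in fact a bijection onto that class, so your worry about distinct policies sharing the same first $K-1$ components cannot arise.
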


The proof is provided in Appendix~\ref{app:proof-K-gap}.

\subsection{Baseline-Free Symmetric Full-Vector Surrogate}
The baseline-gap surrogate is convenient. However, its regularization depends on the chosen baseline through the term $\sum_{a=1}^{K-1}\p{2\delta_a\p{X}-1}^2$.
To remove baseline dependence, we use a full-vector surrogate that treats all actions symmetrically.

Let $\calH_{\mathrm{pol}}^{\p{K}}$ be a class of policies $\delta\colon\calX\to\Delta_K$.
For $\zeta>0$, define the empirical full-vector surrogate
\begin{align}
\label{eq:surrogate-K-full}
&\widehat{\delta}^{\mathrm{Full}}\p{\zeta}
\in\\
&\arg\min_{\delta\in\calH_{\mathrm{pol}}^{\p{K}}}\cb{
\frac{1}{n}\sum^n_{i=1}\sum_{a=1}^{K}\p{\frac{1}{\sqrt{\zeta}}Y_i\p{a}-\sqrt{\zeta}\delta_a\p{X_i}}^2
}.\notag
\end{align}
Define the penalized empirical welfare maximizer
\begin{align}
\widetilde{\delta}^{\mathrm{Full}}\p{\lambda}
\in
\arg\max_{\delta\in\calH_{\mathrm{pol}}^{\p{K}}}\cb{
\widehat{V}\p{\delta}
-
\lambda\frac{1}{n}\sum^n_{i=1}\sum_{a=1}^{K}\delta_a\p{X_i}^2
}.
\label{eq:penalized-K-full}
\end{align}

The penalty term in \eqref{eq:penalized-K-full} is minimized at $\delta_a\p{x}=1/K$ for all $a$, so for finite $\zeta$ the full-vector surrogate induces shrinkage toward uniform randomization.

\begin{theorem}
\label{thm:K-equivalence-full}
For any $\zeta>0$, a policy $\widehat{\delta}^{\mathrm{Full}}\p{\zeta}$ minimizes \eqref{eq:surrogate-K-full} if and only if it maximizes \eqref{eq:penalized-K-full} with $\lambda=\zeta/2$.
\end{theorem}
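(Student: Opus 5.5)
We expand the square in the full-vector surrogate \eqref{eq:surrogate-K-full} and show that its objective is an affine, strictly decreasing transformation of the penalized welfare in \eqref{eq:penalized-K-full}. For each $i$ and $a$,
\begin{align*}
\p{\frac{1}{\sqrt{\zeta}}Y_i\p{a}-\sqrt{\zeta}\delta_a\p{X_i}}^2
=
\frac{1}{\zeta}Y_i\p{a}^2-2Y_i\p{a}\delta_a\p{X_i}+\zeta\delta_a\p{X_i}^2 .
\end{align*}
The cross term no longer depends on $\zeta$, because the two factors $1/\sqrt{\zeta}$ and $\sqrt{\zeta}$ cancel. Summing over $a\in\cb{1,\dots,K}$ and averaging over $i$ gives
\begin{align*}
\frac{1}{n}\sum_{i=1}^n\sum_{a=1}^K\frac{1}{\zeta}Y_i\p{a}^2
-2\widehat{V}\p{\delta}
+\zeta\frac{1}{n}\sum_{i=1}^n\sum_{a=1}^K\delta_a\p{X_i}^2 .
\end{align*}
Here we use the definition $\widehat{V}\p{\delta}=\frac{1}{n}\sum_i\sum_a\delta_a\p{X_i}Y_i\p{a}$.

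Next we factor out $-2$. The surrogate objective equals
\begin{align*}
C_n-2\cb{\widehat{V}\p{\delta}-\frac{\zeta}{2}\frac{1}{n}\sum_{i=1}^n\sum_{a=1}^K\delta_a\p{X_i}^2},
\end{align*}
where $C_n=\frac{1}{n\zeta}\sum_i\sum_a Y_i\p{a}^2$. The constant $C_n$ depends only on the data and not on $\delta$. The bracketed term is exactly the penalized objective in \eqref{eq:penalized-K-full} with $\lambda=\zeta/2$.

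Since $t\mapsto C_n-2t$ is strictly decreasing, the two problems order policies identically over $\calH_{\mathrm{pol}}^{\p{K}}$. For any two policies $\delta,\delta'$, the surrogate value of $\delta$ is at most that of $\delta'$ if and only if the penalized welfare of $\delta$ is at least that of $\delta'$. So the argmin set of \eqref{eq:surrogate-K-full} equals the argmax set of \eqref{eq:penalized-K-full}, which gives both directions of the equivalence at once.

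Unlike Theorems~\ref{thm:binary-equivalence} and \ref{thm:K-equivalence-gap}, there is no reparametrization $f=2\delta-1$ here. The optimization variable is $\delta$ itself in both problems, so no bijection between classes needs to be checked. This is also why the penalty coefficient is $\zeta/2$ rather than $\zeta/4$: without the factor $2$ from $f=2\delta-1$, dividing the quadratic coefficient $\zeta$ by $2$ gives $\lambda=\zeta/2$.

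There is no real obstacle. The only care needed is to confirm that $C_n$ is finite, which holds because each $Y_i\p{a}\in\bbR$ and there are finitely many terms. Equality of the optimal sets then holds whether they are empty or not, so no attainment assumption is required.
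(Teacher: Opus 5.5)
Your proposal is correct and follows the paper's proof: expand the square, split off the $\delta$-free constant $\frac{1}{n\zeta}\sum_i\sum_a Y_i(a)^2$, and recognize the rest as $-2$ times the penalized objective with $\lambda=\zeta/2$. Your observation that $t\mapsto C_n-2t$ is strictly decreasing, so the argmin and argmax sets coincide, just makes explicit the ``dropping the constant term and rescaling'' step that the paper leaves implicit.
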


The proof is provided in Appendix~\ref{app:proof-K-full}.

\subsection{Baseline Dependence and Symmetric Alternatives}
Theorems \ref{thm:K-equivalence-gap} and \ref{thm:K-equivalence-full} show that both surrogate constructions correspond to penalized welfare maximization, but with different regularizers.
In particular, the baseline-gap penalty depends on which action is chosen as baseline, while the full-vector penalty is baseline-free and symmetric across actions.

If the interest is the unregularized welfare maximizer, one may take $\zeta$ small so that the regularization is negligible.
In that regime, baseline choice does not materially affect the maximizer, but it can affect numerical stability, because it changes the scaling and the variance of estimated gaps in missing outcome settings.
If one prefers an invariant formulation for finite $\zeta$, the full-vector surrogate provides a direct baseline-free alternative.

Another symmetric alternative is to average baseline-gap surrogates over baselines.
This produces a pairwise gap objective that depends on all differences $Y\p{a}-Y\p{b}$.
We do not pursue this direction further, because it introduces $K\p{K-1}/2$ pairwise terms.

\subsection{Shift Invariance and Centering}
Welfare is invariant to adding an action-common shift that depends on $X$, because $\sum_{a=1}^K \delta_a\p{X}=1$.
That is, for any measurable $c\p{X}$, we have $\bbE\sqb{\sum_{a=1}^K \delta_a\p{X}\p{Y\p{a}+c\p{X}}}
=
V\p{\delta}+\bbE\sqb{c\p{X}}$. 
The argmax over $\delta$ is therefore unchanged.
Furthermore, the full-vector surrogate inherits the same invariance, because replacing $Y_i\p{a}$ by $Y_i\p{a}+c_i$ for any $c_i$ that does not depend on $a$ changes \eqref{eq:surrogate-K-full} only through additive constants, using again that $\sum_{a=1}^K \delta_a\p{X_i}=1$.
This observation provides a simple baseline-free variance reduction idea. One can center outcomes by an action-common baseline without changing the minimizer. 

In the full feedback setting, a symmetric choice is $c_i=-\frac{1}{K}\sum_{a=1}^K Y_i\p{a}$, 
which centers the outcomes across actions for each unit.
In missing outcome settings, one may use a regression estimate $c\p{x}$ of an action-common component of the outcome and replace pseudo-outcomes by centered pseudo-outcomes, which can reduce the variance of IPW and DR constructions without changing the optimal policy.

\subsection{General Bayes Posterior}
Let $\delta_{\theta}\p{x}\in\Delta_K$ be a parametric policy model indexed by $\theta\in\Theta$.
For the full-vector surrogate, define the loss
\begin{align}
\label{eq:loss-K-full}
&\ell^{\mathrm{Full}}_K\p{\theta;\p{x,y\p{1},\dots,y\p{K}}}\\
&=
\frac{1}{2}\sum_{a=1}^{K}\p{\frac{1}{\sqrt{\zeta}}y\p{a}-\sqrt{\zeta}\delta_{\theta,a}\p{x}}^2.\notag
\end{align}
The generalized posterior is
\[
\rmd \Pi_{\eta}\p{\theta\mid \calD}
\propto
\rmd \Pi\p{\theta}\exp\p{-\eta\sum^n_{i=1}\ell^{\mathrm{Full}}_K\p{\theta;z_i}}.
\]
As in the binary case, \eqref{eq:loss-K-full} corresponds to a Gaussian pseudo-likelihood with conditionally independent components, $Y\p{a}\mid X=x,\theta\sim\mathcal{N}\p{\zeta\delta_{\theta,a}\p{x},\zeta/\eta}$ for $a\in\cb{1,\dots,K}$.

\section{GBPLNet: A Neural Network Implementation}
\label{sec:computation}
The generalized posterior is typically intractable for flexible models.
Standard approximation approaches include MAP estimation, Gaussian approximations, and stochastic gradient Langevin dynamics (SGLD) \citep{Welling2011bayesianlearning}.
As one implementation example, we parameterize the bounded score $f_{\theta}\p{x}\in\sqb{-1,1}$ by a neural network with a tanh-squashed output,
$f_{w}\p{x}=\tanh\p{g_{w}\p{x}}$.
We refer to this implementation as GBPLNet.

Given a fitted score, we form a policy by $\delta_{w}\p{x}=\p{f_{w}\p{x}+1}/2$.
For evaluation in the full feedback setting, it is also convenient to use the deterministic decision rule $a_{w}\p{x}=\mathbbm{1}\sqb{f_{w}\p{x}\ge 0}$.
This deterministic rule selects the more likely action under the randomized policy $\delta_{w}\p{x}$.
Furthermore, for $K$ actions, a deterministic decision rule can be formed by $a_{\theta}\p{x}\in\arg\max_{a\in\cb{1,\dots,K}} \delta_{\theta,a}\p{x}$.
MAP computation corresponds to minimizing the negative log generalized posterior, which is the empirical surrogate loss plus the negative log prior.
Gaussian approximations and SGLD provide complementary ways to represent posterior uncertainty.
However, neural networks and tanh are not essential to the framework, and any model class that produces bounded scores can be used. Appendix~\ref{appdx:detail_gbpl} provides computational details, and Appendix~\ref{app:posterior-visualization} illustrates posterior score and welfare summaries.

\paragraph{Implementation summary}
The GBPL update is determined by a choice of empirical loss, a prior, and tuning parameters.
\begin{itemize}[noitemsep, topsep=0pt, leftmargin=0.40cm]
  \item Inputs are data, a parametric score or policy model, a prior $\Pi$, and tuning parameters $\zeta$ and $\eta$.
  \item In the full feedback setting, the loss is \eqref{eq:loss-binary} for binary actions and \eqref{eq:loss-K-full} for $K$ actions.
  \item In missing outcome settings, the loss is constructed as $\widehat{L}^{\mathrm{IPW}}_n\p{\theta}$ or $\widehat{L}^{\mathrm{DR}}_n\p{\theta}$.
  \item Outputs are the generalized posterior in \eqref{eq:gb-posterior} and derived decision rules, including a point estimate such as a MAP or approximate samples such as from SGLD.
\end{itemize}

\section{GBPL with Missing Outcomes}
\label{sec:missing}
We now consider a missing outcome setting where only the outcome under the chosen action is observed.
We focus on a standard observational study setting with bandit feedback.

\subsection{Observation Model and Assumptions}
We observe i.i.d. data $\cb{\p{X_i,A_i,Y_i}}_{i=1}^n$, where $A_i\in\cb{1,\dots,K}$ is the chosen action and $Y_i=Y_i\p{A_i}$ is the observed outcome.
Let $e_a\p{x}=\bbP\p{A=a\mid X=x}$ be the propensity score.
We impose standard conditions \citep{Rosenbaum1983centralrole}.

\begin{assumption}
\label{ass:unconfounded}
\textbf{Unconfoundedness.}
The potential outcomes are conditionally independent of the action given the context, $\p{Y\p{1},\dots,Y\p{K}}\perp A\mid X$.
\end{assumption}

\begin{assumption}
\label{ass:overlap}
\textbf{Overlap.}
There exists $\epsilon>0$ such that $e_a\p{X}\ge \epsilon$ almost surely for all $a\in\cb{1,\dots,K}$.
\end{assumption}

Under these assumptions, the welfare of a policy $\delta\colon\calX\to\Delta_K$ satisfies $V\p{\delta}=\bbE\sqb{\sum_{a=1}^K \delta_a\p{X}\bbE\sqb{Y\p{a}\mid X}}$. 

\subsection{IPW Pseudo-Outcomes}
If the propensity score $e_a\p{x}$ is known, or replaced by an estimator $\widehat e_a\p{x}$, IPW constructs pseudo-outcomes \citep{Horvitz1952generalization}
\begin{align}
\widetilde{Y}^{\mathrm{IPW}}_i\p{a}
=
\frac{\mathbbm{1}\sqb{A_i=a}Y_i}{\widehat e_a\p{X_i}}.
\label{eq:ipw-pseudo}
\end{align}
Using the full-vector surrogate, we define the IPW empirical loss for a parametric policy $\delta_{\theta}$ as $\widehat{L}^{\mathrm{IPW}}_n\p{\theta}
=
\sum^n_{i=1}\frac{1}{2}\sum_{a=1}^{K}\p{\frac{1}{\sqrt{\zeta}}\widetilde{Y}^{\mathrm{IPW}}_i\p{a}-\sqrt{\zeta}\delta_{\theta,a}\p{X_i}}^2$. 
The corresponding generalized posterior is $\rmd \Pi^{\mathrm{IPW}}_{\eta}\p{\theta\mid \calD}
\propto
\rmd \Pi\p{\theta}\exp\p{-\eta \widehat{L}^{\mathrm{IPW}}_n\p{\theta}}$. 

\begin{theorem}
\label{thm:ipw-equivalence}
Assume that $\widehat e_a\p{X_i}>0$ for every observation $i$ and action $a$. 
Then minimizing $\widehat{L}^{\mathrm{IPW}}_n\p{\theta}$ over $\theta$ is equivalent to maximizing the IPW welfare estimator with a quadratic penalty,
\[
\frac{1}{n}\sum^n_{i=1}\sum_{a=1}^K \delta_{\theta,a}\p{X_i}\widetilde{Y}^{\mathrm{IPW}}_i\p{a}
-
\frac{\zeta}{2}\frac{1}{n}\sum^n_{i=1}\sum_{a=1}^K \delta_{\theta,a}\p{X_i}^2.
\]
\end{theorem}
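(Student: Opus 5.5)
Expanding the square in $\widehat{L}^{\mathrm{IPW}}_n\p{\theta}$ reduces the claim to a statement about positive affine rescalings, exactly as in the full-feedback proof of Theorem~\ref{thm:K-equivalence-full}. The assumption $\widehat e_a\p{X_i}>0$ is needed only to make each pseudo-outcome $\widetilde{Y}^{\mathrm{IPW}}_i\p{a}$ in \eqref{eq:ipw-pseudo} a well-defined finite real number. Once this holds, the IPW loss is the full-vector surrogate \eqref{eq:loss-K-full} evaluated at the data points $\p{X_i,\widetilde{Y}^{\mathrm{IPW}}_i\p{1},\dots,\widetilde{Y}^{\mathrm{IPW}}_i\p{K}}$. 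The argument is purely algebraic and pointwise in the sample, so no unconfoundedness or overlap is used.

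For each $i$ and $a$, expanding the square gives
\[
\frac{1}{2}\p{\frac{1}{\sqrt{\zeta}}\widetilde{Y}^{\mathrm{IPW}}_i\p{a}-\sqrt{\zeta}\delta_{\theta,a}\p{X_i}}^2
=
\frac{1}{2\zeta}\widetilde{Y}^{\mathrm{IPW}}_i\p{a}^2
-\delta_{\theta,a}\p{X_i}\widetilde{Y}^{\mathrm{IPW}}_i\p{a}
+\frac{\zeta}{2}\delta_{\theta,a}\p{X_i}^2 .
\]
Summing over $i$ and $a$ yields
\[
\widehat{L}^{\mathrm{IPW}}_n\p{\theta}=C_n - n\,\widehat{W}_n\p{\theta},
\]
where $\widehat{W}_n\p{\theta}$ is the penalized IPW welfare objective displayed in the statement. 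The constant $C_n=\frac{1}{2\zeta}\sum_{i}\sum_a \widetilde{Y}^{\mathrm{IPW}}_i\p{a}^2$ depends only on the data, not on $\theta$.

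Since $n>0$, the map $\theta\mapsto\widehat{L}^{\mathrm{IPW}}_n\p{\theta}$ is a strictly decreasing affine transformation of $\theta\mapsto\widehat{W}_n\p{\theta}$. Hence the two functions have identical sets of optimizers: $\theta$ minimizes the first if and only if it maximizes the second, and infima and suprema correspond even when they are not attained. This is the sense of "equivalent" I would state explicitly.

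The main obstacle is only bookkeeping. First, the penalty coefficient must come out as $\zeta/2$; this follows from the $\frac{1}{2}$ normalization in the loss and matches $\lambda=\zeta/2$ in Theorem~\ref{thm:K-equivalence-full}. Second, the cross term must carry the factor $1$ after dividing by $n$. Neither step needs the simplex constraint: unlike the centering discussion, the squared-outcome term here is already $\theta$-free.
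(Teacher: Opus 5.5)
Your proposal is correct and takes essentially the same approach as the paper: expand the square to get $\widehat{L}^{\mathrm{IPW}}_n\p{\theta}=\frac{1}{2\zeta}\sum_{i}\sum_{a}\widetilde{Y}^{\mathrm{IPW}}_i\p{a}^2-n\,\widehat{W}_n\p{\theta}$, and conclude that the two problems have the same optimizers. Your remarks that the argument is pointwise algebra and that the positivity assumption is needed only to make the pseudo-outcomes well defined are accurate. Making the sense of ``equivalent'' explicit, namely identical optimizer sets and corresponding infima and suprema, is slightly more careful than the paper's one-line proof.
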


The relation is exact because $\widehat{L}^{\mathrm{IPW}}_n\p{\theta}$ equals a term independent of $\theta$ minus $n$ times the displayed criterion.
The proof is provided in Appendix~\ref{app:proof-ipw-equivalence}.

\subsection{DR Pseudo-Outcomes}
Let $\gamma_a\p{x}=\bbE\sqb{Y\p{a}\mid X=x}$.
Let $\widehat\gamma_a\p{x}$ be an estimator of $\gamma_a\p{x}$.
The DR pseudo-outcome is given as follows \citep{Bang2005doublyrobust}:
\begin{align}
\widetilde{Y}^{\mathrm{DR}}_i\p{a}
=
\widehat\gamma_a\p{X_i}
+
\frac{\mathbbm{1}\sqb{A_i=a}\p{Y_i-\widehat\gamma_a\p{X_i}}}{\widehat e_a\p{X_i}}.
\label{eq:dr-pseudo}
\end{align}
Using the full-vector surrogate, define the DR empirical loss $\widehat{L}^{\mathrm{DR}}_n\p{\theta}
=
\sum^n_{i=1}\frac{1}{2}\sum_{a=1}^{K}\p{\frac{1}{\sqrt{\zeta}}\widetilde{Y}^{\mathrm{DR}}_i\p{a}-\sqrt{\zeta}\delta_{\theta,a}\p{X_i}}^2$. 
The generalized posterior is $\rmd \Pi^{\mathrm{DR}}_{\eta}\p{\theta\mid \calD}
\propto
\rmd \Pi\p{\theta}\exp\p{-\eta \widehat{L}^{\mathrm{DR}}_n\p{\theta}}$. 

However, in practice, $\widehat\gamma_a$ and $\widehat e_a$ may be estimated from the same data used to fit $\delta_{\theta}$.
To reduce bias from the empirical process, we usually assume Donsker conditions for the nuisance estimators, or apply sample splitting, also called cross-fitting \citep{Klaassen1987consistentestimation,VanderVaart2002semiparametricstatistics}. See \citet{Chernozhukov2018doubledebiased}, \citet{Zhou2023offlinemultiaction}, and \citet{Schuler2024introductionmodern} for details.

\subsection{Binary Action Case}
When $K=2$, the surrogate in Section~\ref{sec:binary} is expressed in terms of the outcome difference $U=Y\p{1}-Y\p{0}$.
Under missing outcomes, however, $U$ is not observed, but it can be replaced by an IPW or DR pseudo-difference.

For IPW, define $\widetilde{U}^{\mathrm{IPW}}_i
=
\frac{\mathbbm{1}\sqb{A_i=1}Y_i}{\widehat e_1\p{X_i}}
-
\frac{\mathbbm{1}\sqb{A_i=0}Y_i}{\widehat e_0\p{X_i}}$. 
For DR, define $\widetilde{U}^{\mathrm{DR}}_i
=
\Bigp{\widehat\gamma_1\p{X_i}-\widehat\gamma_0\p{X_i}} +
\frac{\mathbbm{1}\sqb{A_i=1}\p{Y_i-\widehat\gamma_1\p{X_i}}}{\widehat e_1\p{X_i}}
-
\frac{\mathbbm{1}\sqb{A_i=0}\p{Y_i-\widehat\gamma_0\p{X_i}}}{\widehat e_0\p{X_i}}$. 
Let $\widetilde{U}_i$ denote a generic pseudo-difference, with $\widetilde{U}_i=\widetilde{U}^{\mathrm{IPW}}_i$ for IPW and $\widetilde{U}_i=\widetilde{U}^{\mathrm{DR}}_i$ for DR.
Let $f_{\theta}\p{x}$ be a score model with $f_{\theta}\p{x}\in\sqb{-1,1}$ and define the binary loss by replacing $y\p{1}-y\p{0}$ in \eqref{eq:loss-binary} with $\widetilde{U}_i$.
For example, the DR empirical loss is $\widehat{L}^{\mathrm{DR}}_{n,\mathrm{bin}}\p{\theta}
=
\sum^n_{i=1}\frac{1}{2}\p{\frac{1}{\sqrt{\zeta}}\widetilde{U}^{\mathrm{DR}}_i-\sqrt{\zeta}f_{\theta}\p{X_i}}^2$, 
and the corresponding generalized posterior is defined as in \eqref{eq:gb-binary} with $\widehat{L}^{\mathrm{DR}}_{n,\mathrm{bin}}$.
The objective identity in Theorem \ref{thm:binary-equivalence} continues to hold after replacing $U_i$ by $\widetilde{U}_i$, see Appendix~\ref{app:proof-ipw-dr-binary}.

\subsection{Population Targets}
The following proposition summarizes the population interpretation of IPW and DR pseudo-outcomes.
Its proof is provided in Appendix~\ref{app:proof-dr-target}.

\begin{proposition}
\label{prop:dr-target}
Assume Assumptions \ref{ass:unconfounded} and \ref{ass:overlap}.
Fix an action $a$.
\begin{itemize}[noitemsep, topsep=0pt, leftmargin=0.40cm]
  \item If $\widehat e_a=e_a$ almost surely, then $\bbE\sqb{\widetilde{Y}^{\mathrm{IPW}}\p{a}\mid X}=\gamma_a\p{X}$.
  \item If either $\widehat e_a=e_a$ almost surely or $\widehat\gamma_a=\gamma_a$ almost surely, then $\bbE\sqb{\widetilde{Y}^{\mathrm{DR}}\p{a}\mid X}=\gamma_a\p{X}$.
\end{itemize}
Consequently, in either case the population minimizer of the full-vector squared-loss surrogate based on $\widetilde{Y}\p{a}$ matches the full feedback target, in the sense that the pointwise minimizer is the Euclidean projection of $\p{\gamma_1\p{x},\dots,\gamma_K\p{x}}/\zeta$ onto the simplex $\Delta_K$.
\end{proposition}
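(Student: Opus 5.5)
The plan is to prove the two conditional-expectation identities by a direct computation that conditions on $X$ and invokes Assumption~\ref{ass:unconfounded}. I will then deduce the population minimizer pointwise from a bias--variance decomposition of the squared loss. Throughout, $\widehat e_a$ and $\widehat\gamma_a$ are treated as fixed functions of $X$, for example obtained by cross-fitting, so they can be pulled out of conditional expectations given $X$. Assumption~\ref{ass:overlap} ensures that every ratio is well defined and integrable, provided the outcomes are integrable. For IPW I will write $\mathbbm{1}\sqb{A=a}Y=\mathbbm{1}\sqb{A=a}Y\p{a}$. Then
\[
\bbE\sqb{\mathbbm{1}\sqb{A=a}Y\p{a}\mid X}=\bbE\sqb{\mathbbm{1}\sqb{A=a}\mid X}\bbE\sqb{Y\p{a}\mid X}=e_a\p{X}\gamma_a\p{X}
\]
by unconfoundedness. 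Dividing by $\widehat e_a\p{X}=e_a\p{X}$ gives the first claim.

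For DR I will use the same computation to get
\[
\bbE\sqb{\widetilde{Y}^{\mathrm{DR}}\p{a}\mid X}=\widehat\gamma_a\p{X}+\frac{e_a\p{X}}{\widehat e_a\p{X}}\p{\gamma_a\p{X}-\widehat\gamma_a\p{X}}.
\]
If $\widehat e_a=e_a$, the ratio equals one and the expression collapses to $\gamma_a\p{X}$. If $\widehat\gamma_a=\gamma_a$, the correction term vanishes. In both cases we obtain $\gamma_a\p{X}$.

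For the consequence, I will work conditionally on $X=x$. Write $\widetilde{Y}\p{a}=\gamma_a\p{x}+\varepsilon_a$ with $\bbE\sqb{\varepsilon_a\mid X=x}=0$, and assume finite second moments. Because $\delta_a\p{x}$ is $X$-measurable, the cross terms vanish, and
\[
\bbE\sqb{\sum_a\p{\tfrac{1}{\sqrt\zeta}\widetilde{Y}\p{a}-\sqrt\zeta\delta_a\p{x}}^2\mid X=x}=\zeta\sum_a\p{\gamma_a\p{x}/\zeta-\delta_a\p{x}}^2+C\p{x},
\]
where $C\p{x}$ does not depend on $\delta$. Minimizing the population risk over unrestricted measurable $\delta\colon\calX\to\Delta_K$ therefore reduces to minimizing the first term pointwise over $\Delta_K$. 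That minimizer is the Euclidean projection of $\gamma\p{x}/\zeta$ onto $\Delta_K$, and it is unique because the simplex is closed and convex. The full-feedback surrogate has the same conditional mean $\gamma\p{x}$, so its minimizer is the same, which gives the claimed matching.

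The main obstacle I anticipate is the measure-theoretic bookkeeping rather than the algebra. I need to justify treating $\widehat e_a$ and $\widehat\gamma_a$ as fixed, which requires conditioning on an independent nuisance sample. I also need the integrability conditions that make the conditional variance term $C\p{x}$ finite. Finally, the pointwise projection must define a measurable policy; this holds because the projection is continuous and $\gamma$ is measurable.
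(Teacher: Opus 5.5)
Your proposal is correct and follows essentially the same route as the paper. Both condition on $X$ and use unconfoundedness for the IPW and DR identities (the paper applies the tower property through $(X,A)$ where you factor the conditional independence, which makes no real difference), then expand the conditional squared risk so that only $\frac{\zeta}{2}\sum_a(\delta_a-\gamma_a(x)/\zeta)^2$ depends on $\delta$. One small correction: Assumption~\ref{ass:overlap} bounds only $e_a$, so in the branch $\widehat\gamma_a=\gamma_a$, $\widehat e_a\neq e_a$ you need positivity of $\widehat e_a$ as a separate condition (as in Theorem~\ref{thm:ipw-equivalence}), not from overlap.
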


\subsection{Stabilization under Weak Overlap}
Small estimated propensities can produce large IPW and DR pseudo-outcomes. In the additional experiments, we clip the binary propensity estimate as
$\widehat e^{\mathrm{clip}}_1\p{x}
=
\max\p{\epsilon_{\mathrm{clip}},
\min\p{1-\epsilon_{\mathrm{clip}},\widehat e_1\p{x}}}$
and set
$\widehat e^{\mathrm{clip}}_0\p{x}
=
1-\widehat e^{\mathrm{clip}}_1\p{x}$.
The algebraic equivalences in Theorems~\ref{thm:binary-equivalence} and \ref{thm:ipw-equivalence} continue to hold after replacing the pseudo-outcomes by their clipped versions, because these equivalences follow from expansion of the squared loss. However, the population target in Proposition~\ref{prop:dr-target} need not be preserved exactly after clipping. 
Appendix~\ref{app:cf-exp} reports synthetic and UCI experiments with logged feedback using the IPW and DR losses.

\section{Theoretical Analysis}
\label{sec:theory}
This section summarizes theoretical guarantees for the surrogate loss and their implications for welfare.
Proofs are provided in the Appendix.

\subsection{Population Surrogate Minimizer}
\paragraph{Binary case. }
In the binary action case, define $U=Y\p{1}-Y\p{0}$.
For a measurable $f\colon\calX\to\sqb{-1,1}$, define the population surrogate risk $R_{\zeta}\p{f}
=
\bbE\sqb{
\frac{1}{2}\p{\frac{1}{\sqrt{\zeta}}U-\sqrt{\zeta}f\p{X}}^2
}$. 
Let $m\p{x}=\bbE\sqb{U\mid X=x}$.
Then the unconstrained minimizer satisfies $f^\star\p{x}=m\p{x}/\zeta$.
Under the constraint $f\p{x}\in\sqb{-1,1}$, the minimizer is the clipped conditional mean $f^\star\p{x}=\max\p{-1,\min\p{1,m\p{x}/\zeta}}$. Clipping preserves the sign of $m\p{x}$.

\paragraph{$K \geq 3$ case.}
For the full-vector surrogate, define $\gamma\p{x}=\p{\gamma_1\p{x},\dots,\gamma_K\p{x}}$ where $\gamma_a\p{x}=\bbE\sqb{Y\p{a}\mid X=x}$.
Consider the population risk associated with \eqref{eq:loss-K-full}.
Conditionally on $X=x$, minimizing the conditional risk over $\delta\in\Delta_K$ yields the Euclidean projection of $\gamma\p{x}/\zeta$ onto $\Delta_K$. Euclidean projection onto the simplex preserves coordinate order. Consequently, except at ties, the deterministic action induced by either population minimizer agrees with the plug-in Bayes rule. For deterministic action selection, the distinction is therefore in the loss-based posterior and the regularized finite-sample construction rather than in a different population action ranking. A proof is given in Appendix~\ref{app:proof-population}.

\subsection{PAC-Bayes Bound under a Moment Condition}
We state a PAC-Bayes bound for the surrogate loss under a sub-exponential moment condition, following unbounded-loss PAC-Bayes analyses \citep{Grunwald2020fastrates,Haddouche2021pacbayes,Alquier2024userfriendly}.
Let $\widehat{R}_{\zeta}\p{\theta}=\frac{1}{n}\sum^n_{i=1}\ell\p{\theta;z_i}$ and $R_{\zeta}\p{\theta}=\bbE\sqb{\ell\p{\theta;Z}}$.

\begin{assumption}
\label{ass:subexp}
There exist constants $v>0$ and $b>0$ such that, for every $\theta\in\Theta$, the centered loss $\ell\p{\theta;Z}-R_{\zeta}\p{\theta}$ satisfies
$\log\bbE\sqb{\exp\p{t\p{\ell\p{\theta;Z}-R_{\zeta}\p{\theta}}}}\leq t^2v/2$
for every $t$ such that $\lvert t\rvert<1/b$.
\end{assumption}

Assumption \ref{ass:subexp} is a moment condition on the loss.
In the full feedback binary setting with bounded scores, it holds under standard tail conditions on the outcome difference $U=Y\p{1}-Y\p{0}$.
In missing outcome settings, the IPW and DR pseudo-outcomes in \eqref{eq:ipw-pseudo} and \eqref{eq:dr-pseudo} involve inverse propensity weights, so the constants $v$ and $b$ can deteriorate as the overlap constant $\epsilon$ in Assumption \ref{ass:overlap} decreases.
Moreover, when nuisance estimators are learned from the same sample, the empirical losses are no longer i.i.d., and PAC-Bayes analyses typically rely on sample splitting or cross-fitting to control this dependence.
In heavy-tailed regimes, one can replace the squared surrogate by a robust loss and apply generalized Bayesian updating with the modified loss.

\begin{theorem}
\label{thm:pacbayes}
Assume Assumption~\ref{ass:subexp}.
Fix $\rho\in\p{0,1}$ and $t\in\p{0,1/b}$.
With probability at least $1-\rho$ over $\calD$, every probability distribution $Q$ on $\Theta$ with $\KL\p{Q\|\Pi}<\infty$ satisfies
$\bbE_{\theta\sim Q}\sqb{R_{\zeta}\p{\theta}}
\leq
\bbE_{\theta\sim Q}\sqb{\widehat{R}_{\zeta}\p{\theta}}
+
\frac{\KL\p{Q\|\Pi}+\log\p{1/\rho}}{tn}
+
\frac{tv}{2}$.
\end{theorem}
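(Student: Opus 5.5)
The plan is the standard change-of-measure argument: the Donsker--Varadhan variational formula combined with Markov's inequality applied to an exponential moment taken under the prior. Fix $t\in\p{0,1/b}$ and define, for each $\theta$, the function
\[
h\p{\theta}=tn\p{R_{\zeta}\p{\theta}-\widehat{R}_{\zeta}\p{\theta}}.
\]
The first step is to control $\bbE_{\calD}\sqb{\exp\p{h\p{\theta}}}$ for each fixed $\theta$. Since the $z_i$ are i.i.d., the exponential factorizes:
\[
\bbE_{\calD}\sqb{\exp\p{h\p{\theta}}}=\prod_{i=1}^n\bbE\sqb{\exp\p{-t\p{\ell\p{\theta;Z_i}-R_{\zeta}\p{\theta}}}}.
\]
Assumption~\ref{ass:subexp} applies at $-t$, because $\abs{-t}<1/b$. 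This bounds each factor by $\exp\p{t^2v/2}$, so
\[
\bbE_{\calD}\sqb{\exp\p{h\p{\theta}}}\le\exp\p{nt^2v/2}.
\]

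Next, integrate this bound over the prior and swap the order of integration by Tonelli, which is valid because the integrand is nonnegative and $\Pi$ does not depend on the data:
\[
\bbE_{\calD}\,\bbE_{\theta\sim\Pi}\sqb{\exp\p{h\p{\theta}}}\le\exp\p{nt^2v/2}.
\]
Markov's inequality then gives, with probability at least $1-\rho$ over $\calD$,
\[
\bbE_{\theta\sim\Pi}\sqb{\exp\p{h\p{\theta}}}\le\exp\p{nt^2v/2}/\rho.
\]
This event does not depend on $Q$, which is what makes the final bound hold simultaneously for all $Q$.

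On this event, apply the Donsker--Varadhan formula. This is the same Gibbs variational inequality that underlies Proposition~\ref{prop:variational}, used here with the sign reversed: for any $Q$ with $\KL\p{Q\|\Pi}<\infty$,
\[
\bbE_{Q}\sqb{h}\le\KL\p{Q\|\Pi}+\log\bbE_{\Pi}\sqb{e^{h}}\le\KL\p{Q\|\Pi}+\frac{nt^2v}{2}+\log\p{1/\rho}.
\]
Dividing by $tn>0$ and rearranging yields the claimed inequality.

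The main obstacle is integrability: we need $\bbE_Q\sqb{h}$ to be well defined and the Donsker--Varadhan step to be legitimate. I would handle this as follows. On the event, $\bbE_{\Pi}\sqb{e^h}$ is finite. Apply the inequality to the truncation $h\wedge M$, which is bounded above, and then let $M\to\infty$ by monotone convergence. If $\bbE_Q\sqb{\widehat R_\zeta}$ is infinite, the bound holds trivially.

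Measurability of $\theta\mapsto\ell\p{\theta;z}$ is tacitly assumed, as it already is in the definition of the generalized posterior~\eqref{eq:gb-posterior}.
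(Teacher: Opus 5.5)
Your proposal is correct and follows the paper's proof step for step. The common route is: apply the sub-exponential condition at $-t$ to get $\bbE_{\calD}\sqb{\exp\p{tn\p{R_{\zeta}-\widehat{R}_{\zeta}}}}\le\exp\p{nt^2v/2}$, integrate against $\Pi$ via Fubini/Tonelli, use Markov's inequality to obtain an event that does not depend on $Q$, then apply the Donsker--Varadhan change-of-measure inequality and divide by $tn$. Your explicit i.i.d.\ factorization and the truncation argument for integrability are careful additions that the paper leaves implicit.
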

A proof is provided in Appendix~\ref{app:proof-pacbayes}.

\begin{corollary}
\label{cor:pacbayes-twosided}
Assume Assumption~\ref{ass:subexp}.
Fix $\rho\in\p{0,1}$ and $t\in\p{0,1/b}$.
With probability at least $1-\rho$ over $\calD$, every probability distribution $Q$ on $\Theta$ with $\KL\p{Q\|\Pi}<\infty$ satisfies
$\abs{
\bbE_{\theta\sim Q}\sqb{R_{\zeta}\p{\theta}}
-
\bbE_{\theta\sim Q}\sqb{\widehat{R}_{\zeta}\p{\theta}}
}
\leq
\frac{\KL\p{Q\|\Pi}+\log\p{2/\rho}}{tn}
+
\frac{tv}{2}$.
\end{corollary}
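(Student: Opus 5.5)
We prove Corollary~\ref{cor:pacbayes-twosided} by running the argument behind Theorem~\ref{thm:pacbayes} twice, once for each sign of the deviation, and combining the two events with a union bound at confidence $\rho/2$ each.

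\textbf{Upper direction.} Assumption~\ref{ass:subexp} is two-sided, since it holds for every $|t|<1/b$. For fixed $t\in(0,1/b)$ it therefore controls the moment generating function of $\ell(\theta;Z)-R_\zeta(\theta)$ at $t$ and also at $-t$. Applying Theorem~\ref{thm:pacbayes} with $\rho$ replaced by $\rho/2$ gives an event $E_+$ of probability at least $1-\rho/2$. On $E_+$, simultaneously for all $Q$,
\[
\bbE_Q[R_\zeta]-\bbE_Q[\widehat R_\zeta]\le \frac{\KL(Q\|\Pi)+\log(2/\rho)}{tn}+\frac{tv}{2}.
\]

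\textbf{Lower direction.} We repeat the Theorem~\ref{thm:pacbayes} argument for the reversed deviation. By independence,
\[
\bbE\Big[\exp\Big(tn\big(\widehat R_\zeta(\theta)-R_\zeta(\theta)\big)\Big)\Big]=\prod_i \bbE\big[\exp\big(-t(R_\zeta(\theta)-\ell(\theta;z_i))\big)\big].
\]
Each factor equals the moment generating function of $\ell(\theta;Z)-R_\zeta(\theta)$ evaluated at $-t$, so the product is at most $\exp(nt^2v/2)$. Integrating over $\theta\sim\Pi$ and using Fubini, then Markov's inequality, yields an event $E_-$ of probability at least $1-\rho/2$. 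On $E_-$,
\[
\int \exp\big(tn(\widehat R_\zeta-R_\zeta)-nt^2v/2\big)\,\rmd\Pi\le 2/\rho .
\]
The Donsker--Varadhan variational formula $\log\int e^{h}\,\rmd\Pi=\sup_Q\{\bbE_Q h-\KL(Q\|\Pi)\}$, which is the same duality as in Proposition~\ref{prop:variational}, then gives, for all $Q$ with finite KL,
\[
tn\,\bbE_Q[\widehat R_\zeta-R_\zeta]-\frac{nt^2v}{2}\le \KL(Q\|\Pi)+\log(2/\rho).
\]
Dividing by $tn$ gives the lower bound.

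\textbf{Combining.} The union bound gives $\bbP(E_+\cap E_-)\ge 1-\rho$. On this intersection both one-sided inequalities hold uniformly over $Q$, which is exactly the absolute-value bound in the corollary.

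\textbf{Main obstacle.} The only delicate point is measure-theoretic: the exchange $\bbE_{\calD}\int\cdots\,\rmd\Pi=\int\bbE_{\calD}\cdots\,\rmd\Pi$ and the applicability of Donsker--Varadhan. Both are justified as in Theorem~\ref{thm:pacbayes}: the integrand is nonnegative and jointly measurable, so Tonelli applies, and the restriction to $\KL(Q\|\Pi)<\infty$ makes each $\bbE_Q$ term well defined. It is also worth checking that the symmetric moment condition is genuinely needed for $-t$; Assumption~\ref{ass:subexp} supplies it because it holds for all $|t|<1/b$, not only for positive $t$.
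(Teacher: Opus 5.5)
Your proof is correct and is essentially the paper's argument: apply Theorem~\ref{thm:pacbayes} at level $\rho/2$ to $\ell$, apply it again to $-\ell$ (your rerun of its argument for the reversed deviation amounts to exactly this), and take a union bound, which works because Assumption~\ref{ass:subexp} holds for all $\lvert t\rvert<1/b$. One harmless slip: in your lower direction each factor $\bbE[\exp(t(\ell(\theta;Z)-R_\zeta(\theta)))]$ is the moment generating function of the centered loss at $+t$, not $-t$ (the $-t$ case is the one used in the theorem's upper direction), and this changes nothing since the assumption covers both signs.
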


A proof is provided in Appendix~\ref{app:proof-pacbayes-twosided}.

\subsection{Welfare Corollaries}
The squared-loss surrogate and penalized welfare are linked by an identity. The proofs in this subsection are provided in Appendices \ref{app:proof-welfare-cor-binary}--\ref{app:proof-welfare-missing}.

\paragraph{Binary case.}
Define the population penalized welfare $W_{\lambda}\p{\delta}=V\p{\delta}-\lambda\bbE\sqb{\p{2\delta\p{X}-1}^2}$. 
Let $\lambda=\zeta/4$ and let $f=2\delta-1$.
Then $W_{\zeta/4}\p{\delta}$ differs from $R_{\zeta}\p{f}$ only through an additive constant that does not depend on $\delta$.
The next corollary makes the link explicit.

\begin{corollary}
\label{cor:welfare-risk-binary}
Let $\lambda=\zeta/4$.
For any two policies $\delta_1,\delta_2$ in the binary action setting, let $f_j=2\delta_j-1$.
Then, we have $W_{\lambda}\p{\delta_1}-W_{\lambda}\p{\delta_2}
=
\frac{1}{2}\p{R_{\zeta}\p{f_2}-R_{\zeta}\p{f_1}}$. 
In particular, if $\delta^\star_{\lambda}$ maximizes $W_{\lambda}$ over a class and $f^\star_{\zeta}$ minimizes $R_{\zeta}$ over the induced class, then we have $W_{\lambda}\p{\delta^\star_{\lambda}}-W_{\lambda}\p{\delta}
=
\frac{1}{2}\p{R_{\zeta}\p{f}-R_{\zeta}\p{f^\star_{\zeta}}}$. 
\end{corollary}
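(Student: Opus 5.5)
The plan is to prove an exact affine identity $W_{\lambda}(\delta)=C-\tfrac{1}{2}R_{\zeta}(2\delta-1)$, with $C$ independent of $\delta$ whenever $\lambda=\zeta/4$. Both claims then follow by subtraction. This is the population analogue of the expansion behind Theorem~\ref{thm:binary-equivalence}. Throughout I assume $\bbE|Y(0)|<\infty$ and $\bbE[U^2]<\infty$, so that every expectation below is finite. The scores are bounded in $[-1,1]$, so $\bbE[f(X)U]$ and $\bbE[f(X)^2]$ are finite as well.

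\textbf{Step 1 (expand the risk).} Expanding the square in $R_{\zeta}$ gives
\[
R_{\zeta}(f)=\frac{1}{2\zeta}\bbE[U^2]-\bbE[f(X)U]+\frac{\zeta}{2}\bbE[f(X)^2].
\]

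\textbf{Step 2 (rewrite the welfare).} Write $V(\delta)=\bbE[Y(0)]+\bbE[\delta(X)U]$ and substitute $\delta=(f+1)/2$. This gives $V(\delta)=\bbE[Y(0)]+\tfrac{1}{2}\bbE[U]+\tfrac{1}{2}\bbE[f(X)U]$. Hence
\[
W_{\zeta/4}(\delta)=\bbE[Y(0)]+\tfrac{1}{2}\bbE[U]+\tfrac{1}{2}\bbE[f(X)U]-\tfrac{\zeta}{4}\bbE[f(X)^2].
\]

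\textbf{Step 3 (compare).} The $\delta$-dependent part of $W_{\zeta/4}$ equals $-\tfrac{1}{2}$ times the $f$-dependent part of $R_{\zeta}$. Therefore $W_{\zeta/4}(\delta)=C-\tfrac{1}{2}R_{\zeta}(f)$ with
\[
C=\bbE[Y(0)]+\tfrac{1}{2}\bbE[U]+\tfrac{1}{4\zeta}\bbE[U^2].
\]
Subtracting this identity for $\delta_1$ and $\delta_2$ cancels $C$ and yields the first display.

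\textbf{Step 4 (optimizers).} The map $\delta\mapsto 2\delta-1$ is a bijection between the class and its induced class, and $W_{\lambda}$ is a strictly decreasing affine function of $R_{\zeta}$. So $\delta$ maximizes $W_{\lambda}$ if and only if $2\delta-1$ minimizes $R_{\zeta}$, and all minimizers of $R_{\zeta}$ share the same optimal value. Hence $R_{\zeta}(2\delta^\star_{\lambda}-1)=R_{\zeta}(f^\star_{\zeta})$, even when $f^\star_{\zeta}$ is not literally $2\delta^\star_{\lambda}-1$. Applying the first display with $\delta_1=\delta^\star_{\lambda}$ and $\delta_2=\delta$ then gives the second display.

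\textbf{Main obstacle.} The only delicate point is the integrability needed for the constant $C$ to be finite and subtracted legitimately. I handle this by stating the second-moment assumption on $U$ and the first-moment assumption on $Y(0)$ explicitly. These are implicit whenever $R_{\zeta}$ is finite, for instance under Assumption~\ref{ass:subexp}. The algebra itself is routine.
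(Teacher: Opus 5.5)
Your proof is correct and matches the paper's: both expand the square in $R_{\zeta}$, substitute $f=2\delta-1$ to obtain $W_{\zeta/4}(\delta)=C-\tfrac{1}{2}R_{\zeta}(f)$ with $C$ free of $\delta$, and subtract. Your Step~4 usefully spells out why $R_{\zeta}(2\delta^\star_{\lambda}-1)=R_{\zeta}(f^\star_{\zeta})$, which the paper leaves implicit. One small quibble: finiteness of $R_{\zeta}$ gives $\bbE[U^2]<\infty$ but not $\bbE|Y(0)|<\infty$, so the latter should be attributed to $V$ being well defined rather than to $R_{\zeta}$.
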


\begin{corollary}
\label{cor:welfare-unregularized}
Let $\lambda=\zeta/4$.
For any policy $\delta$ in the binary action setting, we have $W_{\lambda}\p{\delta}\le V\p{\delta}\le W_{\lambda}\p{\delta}+\lambda$. 
Consequently, if $\delta^\star$ maximizes $V$ over a class and $\delta^\star_{\lambda}$ maximizes $W_{\lambda}$ over the same class, then we have $V\p{\delta^\star}-V\p{\delta}
\le
W_{\lambda}\p{\delta^\star_{\lambda}}-W_{\lambda}\p{\delta}+\lambda$. 
\end{corollary}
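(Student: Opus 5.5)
The proof only needs the boundedness of the penalty term, so I will not use Corollary~\ref{cor:welfare-risk-binary} or any PAC-Bayes machinery.

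\textbf{Step 1: the two-sided bound.} For $\delta(x)\in[0,1]$ we have $f(x)=2\delta(x)-1\in[-1,1]$. Hence $0\le (2\delta(X)-1)^2\le 1$ almost surely, and taking expectations gives $0\le \bbE[(2\delta(X)-1)^2]\le 1$. Multiplying by $\lambda=\zeta/4>0$ and using $W_{\lambda}(\delta)=V(\delta)-\lambda\bbE[(2\delta(X)-1)^2]$ yields $W_{\lambda}(\delta)\le V(\delta)\le W_{\lambda}(\delta)+\lambda$.

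The only care needed is that $V(\delta)$ is finite, so that the subtraction is meaningful. I will assume integrability of $Y(1)$ and $Y(0)$, which is implicit in the definition of $V$. Since $\delta$ is bounded, $\delta(X)Y(1)+(1-\delta(X))Y(0)$ is then integrable.

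\textbf{Step 2: the consequence.} Fix the class and let $\delta^\star$ maximize $V$ and $\delta^\star_{\lambda}$ maximize $W_{\lambda}$ over it. I chain three inequalities:
\begin{align*}
V(\delta^\star)\le W_{\lambda}(\delta^\star)+\lambda\le W_{\lambda}(\delta^\star_{\lambda})+\lambda .
\end{align*}
The first inequality is the upper bound from Step 1 applied to $\delta^\star$. The second holds because $\delta^\star_{\lambda}$ maximizes $W_\lambda$ and $\delta^\star$ lies in the same class.

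The lower bound from Step 1 applied to $\delta$ gives $-V(\delta)\le -W_{\lambda}(\delta)$. Adding this to the chain yields $V(\delta^\star)-V(\delta)\le W_{\lambda}(\delta^\star_{\lambda})-W_{\lambda}(\delta)+\lambda$.

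\textbf{Where the difficulty lies.} There is no real obstacle; both steps are direct inequality chaining. The only point I will make explicit is that the comparison $W_\lambda(\delta^\star)\le W_\lambda(\delta^\star_\lambda)$ requires $\delta^\star$ to belong to the class over which $\delta^\star_\lambda$ is optimal, and the statement guarantees this by taking both maximizers over the same class.
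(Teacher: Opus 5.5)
Your proof is correct and matches the paper's argument essentially step for step. You bound $\bbE[(2\delta(X)-1)^2]$ in $[0,1]$ to get the two-sided bound, then combine $V(\delta^\star)\le W_\lambda(\delta^\star)+\lambda\le W_\lambda(\delta^\star_\lambda)+\lambda$ with $W_\lambda(\delta)\le V(\delta)$; your remark on the integrability of $Y(0),Y(1)$ is a harmless clarification that the paper leaves implicit.
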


\paragraph{$K \geq 3$ case.}
For the full-vector surrogate, define the penalized welfare $W^{\mathrm{Full}}_{\lambda}\p{\delta}
=
V\p{\delta}
-
\lambda\bbE\sqb{\sum_{a=1}^K \delta_a\p{X}^2}$. 
The next corollary parallels Corollary \ref{cor:welfare-risk-binary}.

\begin{corollary}
\label{cor:welfare-risk-K}
Let $\lambda=\zeta/2$ and consider the full-vector surrogate risk associated with \eqref{eq:loss-K-full}.
For any two policies $\delta_1,\delta_2$, we have $W^{\mathrm{Full}}_{\lambda}\p{\delta_1}-W^{\mathrm{Full}}_{\lambda}\p{\delta_2}
=
\p{R^{\mathrm{Full}}_{\zeta}\p{\delta_2}-R^{\mathrm{Full}}_{\zeta}\p{\delta_1}}$, 
where $R^{\mathrm{Full}}_{\zeta}\p{\delta}$ is the population risk induced by \eqref{eq:loss-K-full} after substituting $\delta$ for $\delta_{\theta}$.
\end{corollary}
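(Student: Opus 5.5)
Expanding the square in the full-vector loss gives the identity directly. For a policy $\delta\colon\calX\to\Delta_K$, the population risk from \eqref{eq:loss-K-full} is
\[
R^{\mathrm{Full}}_{\zeta}\p{\delta}=\bbE\sqb{\frac{1}{2}\sum_{a=1}^{K}\p{\frac{1}{\zeta}Y\p{a}^2-2Y\p{a}\delta_a\p{X}+\zeta\delta_a\p{X}^2}}.
\]
The cross term arises because $\frac{1}{\sqrt{\zeta}}\cdot\sqrt{\zeta}=1$. This splits the risk into three pieces:
\[
R^{\mathrm{Full}}_{\zeta}\p{\delta}=C-\bbE\sqb{\sum_{a=1}^K\delta_a\p{X}Y\p{a}}+\frac{\zeta}{2}\bbE\sqb{\sum_{a=1}^K\delta_a\p{X}^2},
\]
where $C=\frac{1}{2\zeta}\bbE\sqb{\sum_a Y\p{a}^2}$ does not depend on $\delta$.

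Next I would identify the middle term with $V\p{\delta}$. For a randomized policy in $\Delta_K$, the welfare \eqref{eq:welfare} is read as $V\p{\delta}=\bbE\sqb{\sum_a\delta_a\p{X}Y\p{a}}$, the same convention used in Section~\ref{sec:K} and in Theorem~\ref{thm:K-equivalence-full}. With $\lambda=\zeta/2$ the expansion then gives
\[
R^{\mathrm{Full}}_{\zeta}\p{\delta}=C-W^{\mathrm{Full}}_{\zeta/2}\p{\delta}.
\]
Evaluating this at $\delta_1$ and $\delta_2$ and subtracting cancels $C$, which is exactly the claimed identity. Unlike the binary case in Corollary~\ref{cor:welfare-risk-binary}, no factor $1/2$ appears. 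In the binary case the loss is written in terms of $f=2\delta-1$, and the cross term there contributes $2\cdot\frac{1}{2}U f=U(2\delta-1)$, which picks up an extra factor of two. Here the loss is written directly in $\delta$, so the coefficient is exactly one.

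The main obstacle is integrability. Subtracting two risks requires that $C$ be finite, so I would assume $\bbE\sqb{Y\p{a}^2}<\infty$ for every $a$. This is implicit whenever $R^{\mathrm{Full}}_{\zeta}$ is finite, and Assumption~\ref{ass:subexp} already presupposes that $R_\zeta$ is finite. The policy terms cause no difficulty: since $\delta_a\in\sqb{0,1}$, $\bbE\abs{\delta_aY\p{a}}$ is finite whenever $\bbE\abs{Y\p{a}}$ is, and $\sum_a\delta_a^2\le1$. If second moments are not assumed, I would state the identity under the convention that both sides are defined, noting that it holds whenever either risk is finite, since finiteness of one risk forces $C<\infty$.
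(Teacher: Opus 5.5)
Your proposal is correct and follows essentially the same route as the paper's proof. Both expand the square to get $R^{\mathrm{Full}}_{\zeta}\p{\delta}=C-W^{\mathrm{Full}}_{\zeta/2}\p{\delta}$ with $C=\frac{1}{2\zeta}\sum_{a}\bbE\sqb{Y\p{a}^2}$, then take differences. Your added second-moment remark is a sound refinement that the paper leaves implicit.
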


\begin{corollary}
\label{cor:welfare-missing}
Assume Assumptions \ref{ass:unconfounded} and \ref{ass:overlap}.
Consider the full-vector surrogate loss $\widehat{L}^{\mathrm{IPW}}_n\p{\theta}$ or $\widehat{L}^{\mathrm{DR}}_n$, and assume the conditional mean correctness property in Proposition \ref{prop:dr-target}, so that $\bbE\sqb{\widetilde{Y}\p{a}\mid X}=\gamma_a\p{X}$ for all $a$.
Then the welfare identity in Corollary \ref{cor:welfare-risk-K} continues to hold after replacing $R^{\mathrm{Full}}_{\zeta}$ by the population risk defined using $\widetilde{Y}\p{a}$.
In particular, for $\lambda=\zeta/2$ and any two policies $\delta_1,\delta_2$,
 we have $W^{\mathrm{Full}}_{\lambda}\p{\delta_1}-W^{\mathrm{Full}}_{\lambda}\p{\delta_2}
=
\p{\widetilde{R}^{\mathrm{Full}}_{\zeta}\p{\delta_2}-\widetilde{R}^{\mathrm{Full}}_{\zeta}\p{\delta_1}}$, 
where $\widetilde{R}^{\mathrm{Full}}_{\zeta}$ denotes the surrogate risk computed with $\widetilde{Y}\p{a}$.
\end{corollary}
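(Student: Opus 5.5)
We prove Corollary~\ref{cor:welfare-missing} by expanding the squared loss pointwise, exactly as in Corollary~\ref{cor:welfare-risk-K}, and then replacing $\widetilde{Y}\p{a}$ by $\gamma_a\p{X}$ through iterated expectations. We assume the relevant moments are finite: $\bbE\sqb{\widetilde{Y}\p{a}^2}<\infty$, which overlap and bounded nuisance estimates give.

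\textbf{Step 1 (expansion).} For any $\delta\colon\calX\to\Delta_K$, expanding the square gives
\[
\frac{1}{2}\sum_{a=1}^K\p{\frac{1}{\sqrt{\zeta}}\widetilde{Y}\p{a}-\sqrt{\zeta}\delta_a\p{X}}^2
=
\frac{1}{2\zeta}\sum_{a=1}^K\widetilde{Y}\p{a}^2-\sum_{a=1}^K\delta_a\p{X}\widetilde{Y}\p{a}+\frac{\zeta}{2}\sum_{a=1}^K\delta_a\p{X}^2.
\]
Taking expectations yields
\[
\widetilde{R}^{\mathrm{Full}}_{\zeta}\p{\delta}=C-\bbE\sqb{\sum_a\delta_a\p{X}\widetilde{Y}\p{a}}+\frac{\zeta}{2}\bbE\sqb{\sum_a\delta_a\p{X}^2},
\]
where $C=\frac{1}{2\zeta}\bbE\sqb{\sum_a\widetilde{Y}\p{a}^2}$ does not depend on $\delta$.

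\textbf{Step 2 (conditioning).} Because $\delta_a\p{X}$ is $X$-measurable and bounded in $\sqb{0,1}$, the tower property and the assumed identity $\bbE\sqb{\widetilde{Y}\p{a}\mid X}=\gamma_a\p{X}$ (Proposition~\ref{prop:dr-target}) give
\[
\bbE\sqb{\delta_a\p{X}\widetilde{Y}\p{a}}=\bbE\sqb{\delta_a\p{X}\gamma_a\p{X}}.
\]
Summing over $a$, this equals $V\p{\delta}$ by the identification formula $V\p{\delta}=\bbE\sqb{\sum_a\delta_a\p{X}\gamma_a\p{X}}$ stated under Assumptions~\ref{ass:unconfounded} and~\ref{ass:overlap}. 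Therefore
\[
\widetilde{R}^{\mathrm{Full}}_{\zeta}\p{\delta}=C-W^{\mathrm{Full}}_{\zeta/2}\p{\delta}.
\]

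\textbf{Step 3 (differencing).} Evaluating this at $\delta_1$ and $\delta_2$ and subtracting cancels $C$, which gives
\[
W^{\mathrm{Full}}_{\lambda}\p{\delta_1}-W^{\mathrm{Full}}_{\lambda}\p{\delta_2}=\widetilde{R}^{\mathrm{Full}}_{\zeta}\p{\delta_2}-\widetilde{R}^{\mathrm{Full}}_{\zeta}\p{\delta_1}
\]
with $\lambda=\zeta/2$. This is also the form of Corollary~\ref{cor:welfare-risk-K}, so that identity continues to hold with $R^{\mathrm{Full}}_{\zeta}$ replaced by $\widetilde{R}^{\mathrm{Full}}_{\zeta}$.

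\textbf{Main obstacle.} The only delicate point is integrability. $C$ must be finite so that it can be cancelled, and the cross term must be integrable so that the tower property applies. Overlap gives $\widehat e_a\ge\epsilon$ when $\widehat e_a=e_a$; in the DR case with $\widehat\gamma_a=\gamma_a$ we additionally need $\widehat e_a$ bounded away from zero. Under this and square-integrable outcomes, $\widetilde{Y}\p{a}$ is square integrable, and the cross term is then integrable since $\delta_a\in\sqb{0,1}$. This is where we would state the finite-second-moment condition explicitly.
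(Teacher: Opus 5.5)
Your proposal is correct and follows the paper's proof essentially step for step. You expand the square to isolate the constant $C=\frac{1}{2\zeta}\sum_a\bbE[\widetilde{Y}(a)^2]$, use the tower property with $\bbE[\widetilde{Y}(a)\mid X]=\gamma_a(X)$ to turn the cross term into $V(\delta)$, and then take differences to cancel $C$; your explicit finite-second-moment condition is a reasonable addition that the paper leaves implicit.
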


\subsection{PAC-Bayes to Welfare}
Combining Theorem \ref{thm:pacbayes} with Corollaries \ref{cor:welfare-risk-binary} and \ref{cor:welfare-risk-K} yields welfare bounds.
For example, in the binary case, we have $W_{\lambda}\p{\delta^\star_{\lambda}}-W_{\lambda}\p{\delta}
=
\frac{1}{2}\p{R_{\zeta}\p{f}-R_{\zeta}\p{f^\star_{\zeta}}}$. 
Thus, an upper bound on the excess surrogate risk $R_{\zeta}\p{f}-R_{\zeta}\p{f^\star_{\zeta}}$ yields an upper bound on penalized-welfare regret relative to the best-in-class penalized policy.
Furthermore, the same translation applies in missing outcome settings when the conditional mean correctness property in Proposition~\ref{prop:dr-target} holds after replacing $Y\p{a}$ by IPW or DR pseudo-outcomes. Theorem~\ref{thm:ipw-equivalence} gives the corresponding empirical optimization identity.

\begin{table}[ht]
\centering
\caption{Synthetic binary results, $100$ trials. Columns report welfare mean, welfare variance, welfare standard error, regret mean, and regret standard error across trials.}
\label{tab:synthetic-binary}
\resizebox{\linewidth}{!}{
\begin{tabular}{llcrrrrr}
\toprule
DGP & method & $\zeta$ & welfare\_mean & welfare\_var & welfare\_se & regret\_mean & regret\_se \\
\midrule
\multirow{7}{*}{DGP1} & DiffReg & -- & 1.0272 & 0.0081 & 0.0090 & 0.2291 & 0.0033 \\
\cmidrule(lr){2-8}
 & \multirow{4}{*}{GBPLNet} & 0.01 & 1.0240 & 0.0084 & 0.0091 & 0.2323 & 0.0034 \\
 &  & 0.1 & 1.0253 & 0.0081 & 0.0090 & 0.2310 & 0.0033 \\
 &  & 1.0 & 1.0298 & 0.0083 & 0.0091 & 0.2265 & 0.0034 \\
 &  & CV & 1.0295 & 0.0083 & 0.0091 & 0.2268 & 0.0033 \\
\cmidrule(lr){2-8}
 & PluginReg & -- & 1.0265 & 0.0085 & 0.0092 & 0.2298 & 0.0034 \\
\cmidrule(lr){2-8}
 & WeightedLogistic & -- & 1.0331 & 0.0083 & 0.0091 & 0.2232 & 0.0034 \\
\midrule
\multirow{7}{*}{DGP2} & DiffReg & -- & 0.6810 & 0.0038 & 0.0061 & 0.3368 & 0.0029 \\
\cmidrule(lr){2-8}
 & \multirow{4}{*}{GBPLNet} & 0.01 & 0.7165 & 0.0040 & 0.0063 & 0.3013 & 0.0030 \\
 &  & 0.1 & 0.7192 & 0.0037 & 0.0061 & 0.2986 & 0.0030 \\
 &  & 1.0 & 0.7108 & 0.0038 & 0.0061 & 0.3070 & 0.0032 \\
 &  & CV & 0.7126 & 0.0037 & 0.0061 & 0.3052 & 0.0029 \\
\cmidrule(lr){2-8}
 & PluginReg & -- & 0.6908 & 0.0039 & 0.0063 & 0.3270 & 0.0032 \\
\cmidrule(lr){2-8}
 & WeightedLogistic & -- & 0.6746 & 0.0041 & 0.0064 & 0.3432 & 0.0034 \\
\midrule
\multirow{7}{*}{DGP3} & DiffReg & -- & 0.4847 & 0.0046 & 0.0068 & 0.3019 & 0.0032 \\
\cmidrule(lr){2-8}
 & \multirow{4}{*}{GBPLNet} & 0.01 & 0.4847 & 0.0048 & 0.0070 & 0.3019 & 0.0031 \\
 &  & 0.1 & 0.4839 & 0.0049 & 0.0070 & 0.3026 & 0.0034 \\
 &  & 1.0 & 0.4875 & 0.0048 & 0.0069 & 0.2991 & 0.0032 \\
 &  & CV & 0.4878 & 0.0046 & 0.0068 & 0.2988 & 0.0032 \\
\cmidrule(lr){2-8}
 & PluginReg & -- & 0.4851 & 0.0047 & 0.0069 & 0.3015 & 0.0031 \\
\cmidrule(lr){2-8}
 & WeightedLogistic & -- & 0.4863 & 0.0045 & 0.0067 & 0.3003 & 0.0031 \\
\bottomrule
\end{tabular}
}
\end{table}

\begin{figure}[ht]
\centering
\includegraphics[width=0.95\linewidth]{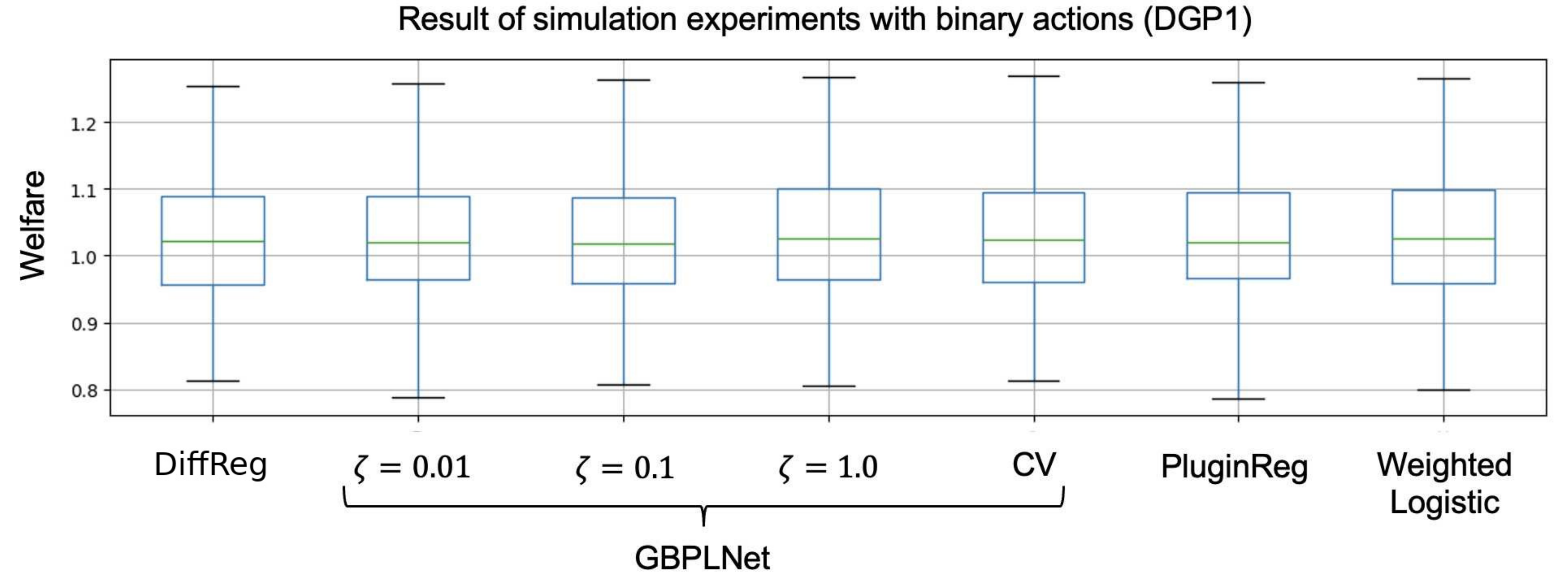}
\includegraphics[width=0.95\linewidth]{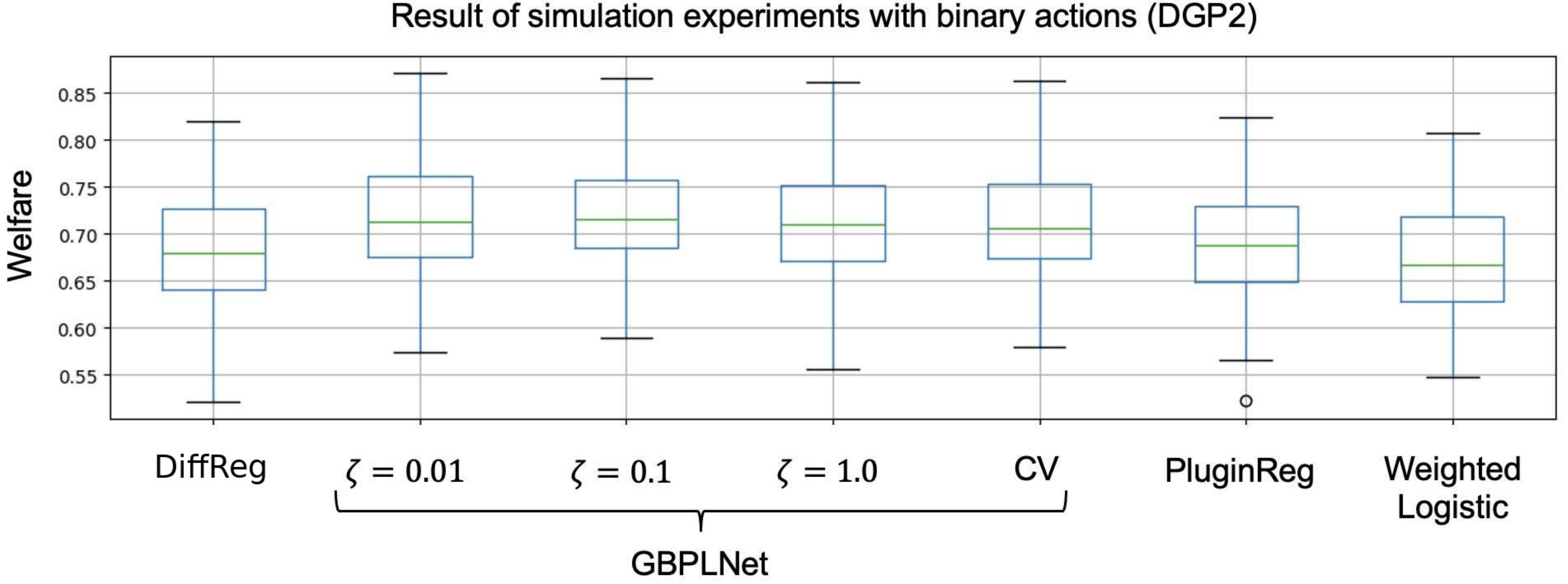}
\includegraphics[width=0.95\linewidth]{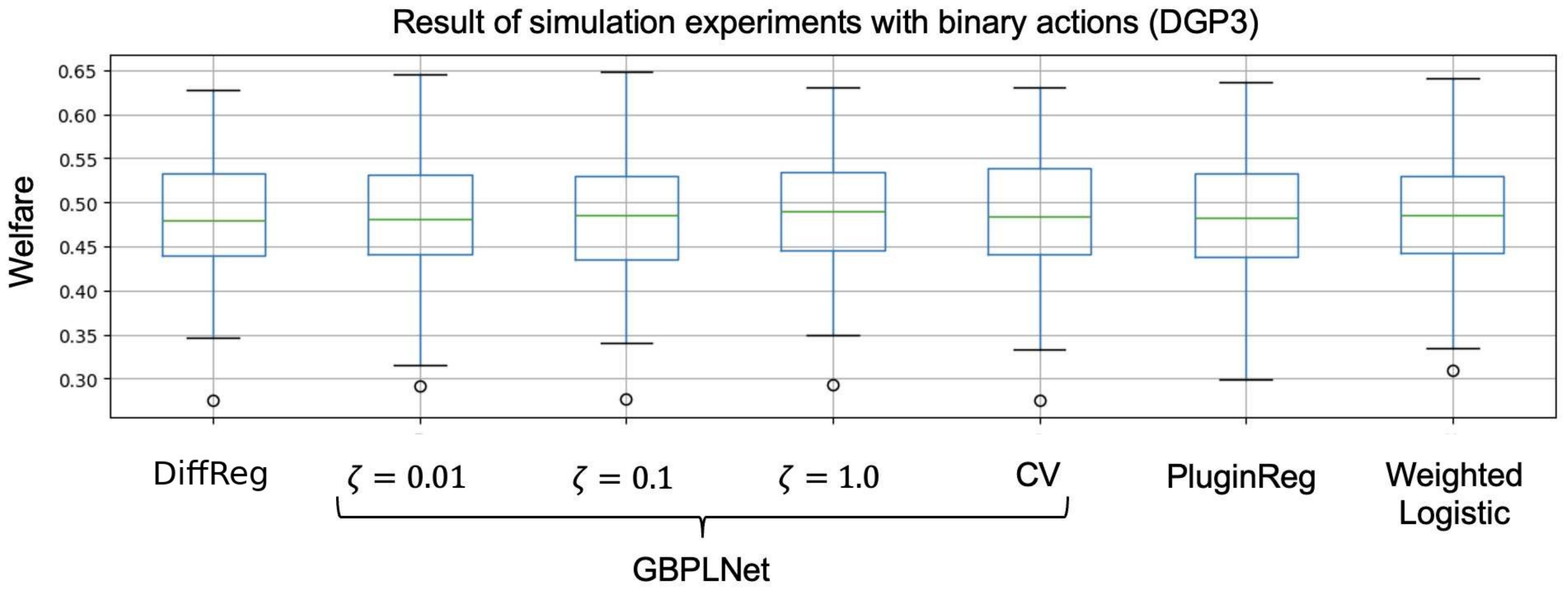}
\caption{Synthetic binary welfare boxplots across $100$ trials for DGP1, DGP2, and DGP3.}
\label{fig:synthetic-binary-boxplots}
\end{figure}

\section{Experiments}
\label{sec:experiments}
This section evaluates GBPLNet as an implementation example of the proposed General Bayes policy learning framework.
Throughout, we focus on the full feedback setting, in which each observation contains the entire outcome vector $\p{Y\p{1},\dots,Y\p{K}}$.
We report welfare and regret computed on held-out test data, and we aggregate results over $100$ independent trials. In this section, we only show the results with binary actions using simulation datasets. The experimental details and additional results are shown in Appendices~\ref{app:additional-posterior-experiments}--\ref{app:cf-exp}. 

\paragraph{Evaluation protocol}
For each trial, we generate or load a dataset and split it into training, validation, and test sets.
We train each method on the training set and select tuning parameters using the validation set.
We then evaluate welfare on the test set using the empirical welfare formulas in \eqref{eq:empwelfare} and their analogs for $K$ actions.
For binary actions, we report the realized test welfare of the deterministic policy induced by the learned score function $f$, $\widehat{\delta}\p{x}=\mathbbm{1}\sqb{f\p{x}\ge 0}$.
Moreover, for $K$ actions, we evaluate the learned stochastic policy by the test-sample average $\frac{1}{n_{\mathrm{test}}}\sum_i\sum_{a=1}^{K}\widehat{\delta}_a\p{X_i}Y_i\p{a}$.

We define test regret relative to the realized-outcome oracle policy, which maximizes the realized outcome for each test observation.
In the binary case, the oracle action is $\delta^\mathrm{orc}\p{X_i}\in\arg\max_{a\in\cb{0,1}} Y_i\p{a}$.
We report the empirical regret on the test set, $\widehat{V}_\mathrm{test}\p{\delta^\mathrm{orc}}-\widehat{V}_\mathrm{test}\p{\widehat{\delta}}$, and refer to it as realized-outcome regret. Because the binary experiments use deterministic rules while the multi-action experiments evaluate stochastic policies, their regret values are not directly comparable.
For each scenario and method, the tables report the mean, variance, and standard error of welfare across $100$ trials, and the mean and standard error of regret across $100$ trials.

We compare four values of the surrogate tuning parameter $\zeta$ for the proposed method.
Three are fixed at $\zeta\in\cb{1,0.1,0.01}$.
One is selected by validation from the grid $\cb{1,0.1,0.01,0.001}$, and is reported as GBPLNet (CV).

For GBPLNet (CV), we select $\zeta$ by maximizing validation welfare rather than minimizing the validation surrogate loss. 
In the binary case, expanding the squared loss in \eqref{eq:loss-binary} yields the term $\frac{1}{2\zeta}\frac{1}{n}\sum^n_{i=1}U_i^2$, where $U_i=Y_i\p{1}-Y_i\p{0}$, which does not depend on the fitted score, see Proposition \ref{prop:gbplnet-decomposition} in Appendix~\ref{appdx:detail_gbpl}.
If the surrogate loss value itself is used for validation, this term can bias selection toward larger $\zeta$ values.

\paragraph{Synthetic Experiments}
We consider three synthetic data generating processes, denoted by DGP1, DGP2, and DGP3.
Each trial uses $n=5000$ observations and $d=10$ features.
Table~\ref{tab:synthetic-binary} reports welfare and regret, and Figure~\ref{fig:synthetic-binary-boxplots} visualizes the distribution of welfare. We consider cases with $K = 2$. For the results with $K = 5$, see Appendix~\ref{appdx:additionalsim}.

The comparison methods are as follows.
DiffReg fits a regression model for the outcome difference $Y\p{1}-Y\p{0}$ and chooses actions by its sign.
PluginReg fits separate regression models for $Y\p{1}$ and $Y\p{0}$ and chooses the action with the larger predicted outcome.
WeightedLogistic fits a weighted logistic classifier for the best action, with weights given by the observed outcome gap magnitude.
GBPLNet trains the tanh-squashed neural score model in Section~\ref{sec:computation} using the squared-loss surrogate.

Table~\ref{tab:synthetic-binary} shows that relative performance is scenario dependent.
In DGP1 and DGP3, the methods have similar welfare. However, in DGP2, GBPLNet improves on DiffReg and PluginReg. WeightedLogistic attains the highest mean welfare in DGP1.
The sensitivity to $\zeta$ is also scenario dependent.

\section{Conclusion}
This study proposes a General Bayes framework for policy learning (GBPL). The key technical device is a squared-loss surrogate that rewrites welfare maximization as a regression-style objective, with a Gaussian pseudo-likelihood interpretation. We demonstrate its implementation in both full-feedback and bandit-feedback settings. In our theoretical analysis, we give PAC-Bayes bounds for the surrogate risk and explicit corollaries for penalized welfare.

\bibliography{UAI.bbl}

\onecolumn

\appendix

\section{Related Work}
\label{app:related}

This appendix reviews related work and clarifies how the proposed GBPL framework is connected to existing lines of work.
Our focus is on loss-based Bayesian updating and decision-theoretic learning rules for welfare objectives.
Policy learning in causal inference and offline learning provides an important application area, but our framework is not tied to counterfactual semantics.

\subsection{Loss-Based Bayes, Gibbs Posteriors, and Learning Rate Selection}
Generalized Bayesian updating, also called loss-based Bayes, replaces a likelihood by a task loss and yields a Gibbs-type posterior distribution \citep{Bissiri2016generalframework}.
This update is closely connected to the PAC-Bayes literature \citep{McAllester1999pacbayes,Seeger2002pacbayes,Catoni2008pacbayesian} and to exponential weighting and aggregation methods for prediction under squared loss \citep{Leung2006information,Dalalyan2008aggregation,Audibert2009fast}.
From this viewpoint, the generalized posterior is an exponentially weighted distribution over decision rules, where the prior acts as a regularizer through a KL penalty \citep{Bissiri2016generalframework}.

A central practical issue is the choice of the learning rate, since the loss scale is not identified by probability axioms \citep{Bissiri2016generalframework}.
Several calibration principles have been developed, including the loss-likelihood bootstrap \citep{Lyddon2019generalbayesian}, generalized posterior calibration \citep{Syring2019calibrating}, and comparative studies of learning rate selection criteria in misspecified settings \citep{Wu2023comparison}.
Moreover, Safe Bayes is a data-driven approach that targets robustness to model misspecification \citep{Grunwald2012thesafe}.

Furthermore, when the loss is a negative log-likelihood, the generalized posterior becomes a tempered likelihood posterior, which connects General Bayes to power posteriors and fractional posteriors \citep{HolmesWalker2017powerlikelihood,Bhattacharya2019fractionalposterior}.
For the squared-loss surrogate, the Gaussian pseudo-likelihood produces the same exponential weight as the General Bayes update, and Proposition~\ref{prop:variational} gives its loss-based variational characterization \citep{Bissiri2016generalframework}. 

Finally, when exact sampling is infeasible, variational approximations provide tractable alternatives with theoretical support in the PAC-Bayes literature \citep{Alquier2016onthe}.
Optimization-centric viewpoints also emphasize that generalized Bayesian procedures can be studied as regularized empirical risk minimization problems over distributions \citep{Knoblauch2022optimizationcentric}.
For unbounded losses, including squared loss with unbounded outcomes, recent work develops fast-rate guarantees and learning-theoretic analyses for generalized Bayes under suitable moment conditions \citep{Grunwald2020fastrates}.

\subsection{Bayesian Decision Making and Predictive Decision Analysis}
Bayesian inference is naturally connected to decision making through the Bayes act, which minimizes posterior expected loss, and through statistical decision theory more broadly \citep{DeGroot1970optimal,Berger1985statistical,Robert2007bayesianchoice}.
This connection motivates Bayesian predictive decision rules in which decisions are evaluated under a predictive distribution and the loss encodes the target task.
Recent work studies predictive decision analysis for welfare objectives in portfolio choice and other economic problems, including generalized Bayesian predictive decision rules \citep{Tallman2023bayesianpredictive,Tallman2024predictivedecision,Kato2024generalbayesian}. General Bayesian predictive synthesis combines predictive distributions supplied by experts through a loss-minimization framework, whereas GBPL updates a prior directly over score or policy parameters from outcome data.

Moreover, a related line in machine learning and operations research is decision-focused learning, which trains predictive components to minimize a downstream decision objective.
Representative examples include predict then optimize methods \citep{Elmachtoub2022smartpredict,Donti2017taskbased,Wilder2019endtoend}.
These works typically deliver point estimates of predictors or policies.
Our focus is complementary because we construct a Gibbs posterior over decision rules via loss-based Bayesian updating.

\subsection{Portfolio Choice}
Classical portfolio choice originates in mean-variance analysis \citep{Markowitz1952portfolioselection,Markowitz2000meanvarianceanalysis}.
Moreover, Bayesian portfolio rules based on predictive distributions have been studied, including early Bayesian treatments \citep{Winkler1975abayesian} and influential empirical Bayes and shrinkage approaches \citep{Jorion1986bayesstein,BlackLitterman1992global}.
Recent work revisits portfolio choice from a predictive decision perspective and connects posterior predictive distributions to portfolio welfare, including generalized Bayesian formulations \citep{Tallman2024predictivedecision,Kato2024bayesianportfolio,Kato2024generalbayesian}.

\subsection{Policy Learning with Counterfactuals and Offline Feedback}
In causal inference, treatment choice is a central application of welfare maximization \citep{Manski2002treatmentchoice,Manski2004statisticaltreatment,Stoye2009minimaxregret}.
Policy learning methods train flexible decision rules using counterfactual identification strategies, including DR and IPW objectives \citep{Dudik2011doublyrobust,Swaminathan2015batchlearning,Kitagawa2018whoshould,Athey2021policylearning,Zhou2023offlinemultiaction}.
Furthermore, in offline learning from logged bandit feedback, London and Sandler derive PAC-Bayesian regularization for counterfactual risk minimization, while Sakhi et al. optimize PAC-Bayesian guarantees by representing policies as mixtures of decision rules \citep{London2019bayesiancounterfactual,Sakhi2023pacbayesianoffline}.
These literatures motivate practical empirical welfare objectives when outcomes are not fully observed.
In contrast, GBPL constructs a Gibbs posterior from squared welfare surrogates and uses the same loss-based update in full-feedback and pseudo-outcome settings.

\subsection{Classification-Based Surrogates and Multi-Action Extensions}
A large literature reduces welfare maximization to classification or regression by designing surrogate losses that target welfare directly.
Outcome-weighted learning casts individualized treatment rules as weighted classification and motivates surrogate losses for tractable estimation \citep{Zhao2012outcomeweighted,Zhang2012optimalclassification}.
Multi-action extensions must account for all actions simultaneously. A baseline-gap formulation provides a convenient reduction in our construction.
Our baseline-free full-vector surrogate in Section~\ref{sec:K} is motivated by the desire to retain symmetry across actions while keeping a squared-loss structure that supports Gaussian pseudo-likelihood computation.

\subsection{PAC-Bayes Bounds and Generalization Guarantees}
PAC-Bayes bounds control the generalization error of randomized estimators and apply naturally to Gibbs posteriors \citep{McAllester1999pacbayes,Seeger2002pacbayes,Catoni2008pacbayesian}.
Recent work develops user-friendly PAC-Bayes bounds and extensions to unbounded losses under moment and tail conditions \citep{Haddouche2021pacbayes,Alquier2024userfriendly,Grunwald2020fastrates}.
Our theoretical results in Section~\ref{sec:theory} follow this line and provide corollaries that translate surrogate risk bounds into welfare guarantees.

\section{Proof of Proposition \ref{prop:variational}}
\label{app:proof-variational}
\begin{proof}
Let $L\p{\theta}=\sum^n_{i=1}\ell\p{\theta;z_i}$ and let $Z_{\eta}\p{\calD}=\int \exp\p{-\eta L\p{\theta}}\rmd \Pi\p{\theta}$.
Define $\Pi_{\eta}\p{\rmd\theta\mid\calD}=\exp\p{-\eta L\p{\theta}}Z_{\eta}\p{\calD}^{-1}\Pi\p{\rmd\theta}$.
For any $Q$ such that $Q\ll \Pi$,
\[
\KL\p{Q\|\Pi_{\eta}\p{\cdot\mid\calD}}
=
\bbE_{\theta\sim Q}\sqb{\log\p{\frac{\rmd Q}{\rmd \Pi}}}
+
\eta\bbE_{\theta\sim Q}\sqb{L\p{\theta}}
+
\log Z_{\eta}\p{\calD}.
\]
Rearranging gives
\[
\calJ\p{Q}
=
\KL\p{Q\|\Pi_{\eta}\p{\cdot\mid\calD}}
-
\log Z_{\eta}\p{\calD}.
\]
Since the last term does not depend on $Q$, $\calJ\p{Q}$ is minimized uniquely at $Q=\Pi_{\eta}\p{\cdot\mid\calD}$.
\end{proof}

\section{Proof of Theorem \ref{thm:binary-equivalence}}
\label{app:proof-binary}
\begin{proof}
Write $U_i=Y_i\p{1}-Y_i\p{0}$.
Expanding the objective in \eqref{eq:surrogate-empirical} gives
\[
\frac{1}{n}\sum^n_{i=1}\p{\frac{1}{\sqrt{\zeta}}U_i-\sqrt{\zeta}f\p{X_i}}^2
=
\frac{1}{n}\sum^n_{i=1}\p{\frac{1}{\zeta}U_i^2-2U_if\p{X_i}+\zeta f\p{X_i}^2}.
\]
The first term does not depend on $f$.
Thus minimizing the surrogate is equivalent to maximizing
\[
\frac{1}{n}\sum^n_{i=1}\p{U_if\p{X_i}-\frac{\zeta}{2}f\p{X_i}^2}.
\]
Substituting $f\p{x}=2\delta\p{x}-1$ and using \eqref{eq:empwelfare} yields the penalized welfare criterion in \eqref{eq:empwelfare-penalized} with $\lambda=\zeta/4$, up to constants that do not depend on $\delta$.

\paragraph{Limit as \texorpdfstring{$\zeta\to 0$}{zeta}.}
Let $\lambda_m\to 0$.
For any $\delta\in\calH_{\mathrm{pol}}$,
\[
\widehat{V}\p{\widetilde{\delta}\p{\lambda_m}}
-
\lambda_m\frac{1}{n}\sum^n_{i=1}\p{2\widetilde{\delta}\p{\lambda_m}\p{X_i}-1}^2
\ge
\widehat{V}\p{\delta}
-
\lambda_m\frac{1}{n}\sum^n_{i=1}\p{2\delta\p{X_i}-1}^2.
\]
Since $\p{2\delta\p{X_i}-1}^2\le 1$, taking $m\to\infty$ yields
\[
\liminf_{m\to\infty}\widehat{V}\p{\widetilde{\delta}\p{\lambda_m}}
\ge
\widehat{V}\p{\delta}.
\]
Therefore, any limit point of $\widetilde{\delta}\p{\lambda_m}$ is a maximizer of $\widehat{V}$.
If the maximizer is unique, convergence follows.
\end{proof}

\section{Proofs in Section~\ref{sec:K} (Policy Learning with Multiple Actions)}
\subsection{Proof of Theorem \ref{thm:K-equivalence-gap}}
\label{app:proof-K-gap}
\begin{proof}
For each $i$ and each $a\in\cb{1,\dots,K-1}$, let $\overline{U}_i\p{a}=Y_i\p{a}-Y_i\p{K}$.
Expanding the objective in \eqref{eq:surrogate-K-gap} yields
\[
\frac{1}{n}\sum^n_{i=1}\sum_{a=1}^{K-1}\p{\frac{1}{\sqrt{\zeta}}\overline{U}_i\p{a}-\sqrt{\zeta}f_a\p{X_i}}^2
=
\frac{1}{n}\sum^n_{i=1}\sum_{a=1}^{K-1}\p{\frac{1}{\zeta}\overline{U}_i\p{a}^2-2\overline{U}_i\p{a}f_a\p{X_i}+\zeta f_a\p{X_i}^2}.
\]
Dropping constants and rescaling, minimizing is equivalent to maximizing
\[
\frac{1}{n}\sum^n_{i=1}\sum_{a=1}^{K-1}\p{\overline{U}_i\p{a}f_a\p{X_i}-\frac{\zeta}{2}f_a\p{X_i}^2}.
\]
Substituting $f_a\p{x}=2\delta_a\p{x}-1$ yields the penalized welfare criterion in \eqref{eq:penalized-K-gap} with $\lambda=\zeta/4$, up to constants that do not depend on $\delta$.
\end{proof}

\subsection{Proof of Theorem \ref{thm:K-equivalence-full}}
\label{app:proof-K-full}
\begin{proof}
Expanding the objective in \eqref{eq:surrogate-K-full} yields
\[
\frac{1}{n}\sum^n_{i=1}\sum_{a=1}^{K}\p{\frac{1}{\sqrt{\zeta}}Y_i\p{a}-\sqrt{\zeta}\delta_a\p{X_i}}^2
=
\frac{1}{n}\sum^n_{i=1}\sum_{a=1}^{K}\p{\frac{1}{\zeta}Y_i\p{a}^2-2Y_i\p{a}\delta_a\p{X_i}+\zeta \delta_a\p{X_i}^2}.
\]
Dropping the constant term and rescaling, minimizing is equivalent to maximizing
\[
\frac{1}{n}\sum^n_{i=1}\sum_{a=1}^{K}\p{Y_i\p{a}\delta_a\p{X_i}-\frac{\zeta}{2}\delta_a\p{X_i}^2},
\]
which is \eqref{eq:penalized-K-full} with $\lambda=\zeta/2$.
\end{proof}

\section{Proofs in Section~\ref{sec:missing} (Policy Learning with Missing Outcomes)}
\subsection{Proof of Theorem \ref{thm:ipw-equivalence}}
\label{app:proof-ipw-equivalence}
\begin{proof}
Expanding $\widehat{L}^{\mathrm{IPW}}_n\p{\theta}$ gives
\[
\sum^n_{i=1}\frac{1}{2}\sum_{a=1}^{K}\p{\frac{1}{\sqrt{\zeta}}\widetilde{Y}^{\mathrm{IPW}}_i\p{a}-\sqrt{\zeta}\delta_{\theta,a}\p{X_i}}^2
=
\sum^n_{i=1}\frac{1}{2}\sum_{a=1}^{K}\p{\frac{1}{\zeta}\widetilde{Y}^{\mathrm{IPW}}_i\p{a}^2-2\widetilde{Y}^{\mathrm{IPW}}_i\p{a}\delta_{\theta,a}\p{X_i}+\zeta\delta_{\theta,a}\p{X_i}^2}.
\]
Thus, $\widehat{L}^{\mathrm{IPW}}_n\p{\theta}$ equals a term independent of $\theta$ minus $n$ times the criterion in Theorem~\ref{thm:ipw-equivalence}.
\end{proof}

\subsection{Proof of Proposition \ref{prop:dr-target}}
\label{app:proof-dr-target}
\begin{proof}
Fix $a$.
For IPW with $\widehat e_a=e_a$,
\[
\bbE\sqb{\widetilde{Y}^{\mathrm{IPW}}\p{a}\mid X}
=
\bbE\sqb{\frac{\mathbbm{1}\sqb{A=a}Y\p{A}}{e_a\p{X}}\mid X}
=
\bbE\sqb{\bbE\sqb{\frac{\mathbbm{1}\sqb{A=a}Y\p{a}}{e_a\p{X}}\mid X,A}\mid X}.
\]
Under unconfoundedness, $\bbE\sqb{Y\p{a}\mid X,A}=\bbE\sqb{Y\p{a}\mid X}=\gamma_a\p{X}$.
Thus the inner conditional expectation equals $\mathbbm{1}\sqb{A=a}\gamma_a\p{X}/e_a\p{X}$ and taking expectation over $A\mid X$ yields $\gamma_a\p{X}$.

For DR, write
\[
\widetilde{Y}^{\mathrm{DR}}\p{a}
=
\widehat\gamma_a\p{X}
+
\frac{\mathbbm{1}\sqb{A=a}\p{Y\p{a}-\widehat\gamma_a\p{X}}}{\widehat e_a\p{X}}.
\]
Taking conditional expectation given $X$ and using unconfoundedness yields
\[
\bbE\sqb{\widetilde{Y}^{\mathrm{DR}}\p{a}\mid X}
=
\widehat\gamma_a\p{X}
+
\bbE\sqb{\frac{\mathbbm{1}\sqb{A=a}}{\widehat e_a\p{X}}\p{\gamma_a\p{X}-\widehat\gamma_a\p{X}}\mid X}.
\]
If $\widehat e_a=e_a$, then $\bbE\sqb{\mathbbm{1}\sqb{A=a}/\widehat e_a\p{X}\mid X}=1$ and the right-hand side equals $\gamma_a\p{X}$.
If $\widehat\gamma_a=\gamma_a$, then the second term is zero, and the right-hand side equals $\gamma_a\p{X}$.
The final statement about the projection target follows from the conditional risk minimization argument in Appendix~\ref{app:proof-population}.
\end{proof}

\section{Proofs in Section~\ref{sec:theory} (Theoretical Analysis)}

\subsection{Population Minimizers}
\label{app:proof-population}
\paragraph{Binary case.}
\begin{proof}
For any fixed $x$, minimizing
\[
\bbE\sqb{\frac{1}{2}\p{\frac{1}{\sqrt{\zeta}}U-\sqrt{\zeta}q}^2\mid X=x}
=
\frac{1}{2\zeta}\bbE\sqb{\p{U-\zeta q}^2\mid X=x}
\]
over $q\in\bbR$ yields $q=m\p{x}/\zeta$ where $m\p{x}=\bbE\sqb{U\mid X=x}$.
If $q$ is constrained to $\sqb{-1,1}$, the minimizer is the projection of $m\p{x}/\zeta$ onto $\sqb{-1,1}$, which is $\max\p{-1,\min\p{1,m\p{x}/\zeta}}$. Because $\zeta>0$, this projection has the same sign as $m\p{x}$, except when $m\p{x}=0$.
\end{proof}

\paragraph{$K\geq 3$ case.}
\begin{proof}
Fix $x$ and consider the conditional risk for $\delta\in\Delta_K$,
\[
\bbE\sqb{\frac{1}{2}\sum_{a=1}^K\p{\frac{1}{\sqrt{\zeta}}Y\p{a}-\sqrt{\zeta}\delta_a}^2\mid X=x}
=
\frac{1}{2\zeta}\sum_{a=1}^K \bbE\sqb{\p{Y\p{a}-\zeta\delta_a}^2\mid X=x}.
\]
Expanding the square yields a term that does not depend on $\delta$ and a quadratic term
\[
\frac{1}{2\zeta}\sum_{a=1}^K \p{\gamma_a\p{x}-\zeta\delta_a}^2
=
\frac{\zeta}{2}\sum_{a=1}^K\p{\delta_a-\gamma_a\p{x}/\zeta}^2.
\]
Thus the minimizer is the Euclidean projection of $\gamma\p{x}/\zeta$ onto $\Delta_K$. The simplex projection has coordinates $\max\p{\gamma_a\p{x}/\zeta-c,0}$ for a common threshold $c$, so it preserves coordinate order. Hence its argmax agrees with the argmax of $\gamma\p{x}$, except at ties.
\end{proof}

\subsection{PAC-Bayes bound}
\label{app:proof-pacbayes}
\begin{proof}
Fix $t\in\p{0,1/b}$.
For each $\theta\in\Theta$, Assumption~\ref{ass:subexp} applied with $-t$ gives
\[
\bbE_{\calD}\sqb{
\exp\p{
tn\p{R_{\zeta}\p{\theta}-\widehat{R}_{\zeta}\p{\theta}}
}
}
\leq
\exp\p{\frac{nt^2v}{2}}.
\]
Integrating this inequality with respect to $\Pi$ and applying Fubini's theorem gives
\[
\bbE_{\calD}\sqb{
\int
\exp\p{
tn\p{R_{\zeta}\p{\theta}-\widehat{R}_{\zeta}\p{\theta}}
}
\rmd\Pi\p{\theta}
}
\leq
\exp\p{\frac{nt^2v}{2}}.
\]
Markov's inequality implies that, with probability at least $1-\rho$,
\[
\int
\exp\p{
tn\p{R_{\zeta}\p{\theta}-\widehat{R}_{\zeta}\p{\theta}}
}
\rmd\Pi\p{\theta}
\leq
\frac{1}{\rho}\exp\p{\frac{nt^2v}{2}}.
\]
On this event, the variational inequality for KL divergence gives, for every $Q$ with $\KL\p{Q\|\Pi}<\infty$,
\[
tn\bbE_{\theta\sim Q}\sqb{
R_{\zeta}\p{\theta}-\widehat{R}_{\zeta}\p{\theta}
}
\leq
\KL\p{Q\|\Pi}
+
\log
\int
\exp\p{
tn\p{R_{\zeta}\p{\theta}-\widehat{R}_{\zeta}\p{\theta}}
}
\rmd\Pi\p{\theta}.
\]
Substituting the preceding bound and dividing by $tn$ yields the result.
\end{proof}

\subsection{Proof of Corollary \ref{cor:pacbayes-twosided}}
\label{app:proof-pacbayes-twosided}
\begin{proof}
Apply Theorem~\ref{thm:pacbayes} to $\ell$ with failure probability $\rho/2$ and the fixed parameter $t$.
This bounds $\bbE_{\theta\sim Q}\sqb{R_{\zeta}\p{\theta}}-\bbE_{\theta\sim Q}\sqb{\widehat{R}_{\zeta}\p{\theta}}$.
Apply the same theorem to $-\ell$ with failure probability $\rho/2$.
Assumption~\ref{ass:subexp} also holds for $-\ell$ because it is stated for every $t$ with $\lvert t\rvert<1/b$.
This yields an upper bound on $\bbE_{\theta\sim Q}\sqb{\widehat{R}_{\zeta}\p{\theta}}-\bbE_{\theta\sim Q}\sqb{R_{\zeta}\p{\theta}}$.
A union bound completes the proof.
\end{proof}

\subsection{Proof for binary missing-outcome objectives}
\label{app:proof-ipw-dr-binary}
\begin{proof}
The proof of Theorem \ref{thm:binary-equivalence} uses only algebraic expansion of the squared loss and does not rely on $U_i$ being the true outcome difference.
Therefore the same expansion yields the same optimization identity after replacing $U_i$ by any scalar pseudo-difference $\widetilde{U}_i$, including IPW and DR estimators.
\end{proof}

\subsection{Proof of Corollary \ref{cor:welfare-risk-binary}}
\label{app:proof-welfare-cor-binary}
\begin{proof}
Let $f=2\delta-1$.
Expanding $R_{\zeta}\p{f}$ yields
\[
R_{\zeta}\p{f}
=
\frac{1}{2\zeta}\bbE\sqb{U^2}-\bbE\sqb{Uf\p{X}}+\frac{\zeta}{2}\bbE\sqb{f\p{X}^2}.
\]
Since $Uf=2U\delta-U$, we have
\[
R_{\zeta}\p{f}
=
\frac{1}{2\zeta}\bbE\sqb{U^2}-2\bbE\sqb{U\delta\p{X}}+\bbE\sqb{U}+\frac{\zeta}{2}\bbE\sqb{\p{2\delta\p{X}-1}^2}.
\]
Rearranging yields
\[
\bbE\sqb{U\delta\p{X}}-\frac{\zeta}{4}\bbE\sqb{\p{2\delta\p{X}-1}^2}
=
C
-
\frac{1}{2}R_{\zeta}\p{f},
\]
where $C$ depends on $\bbE\sqb{U}$ and $\bbE\sqb{U^2}$ but not on $\delta$.
Adding $\bbE\sqb{Y\p{0}}$ on both sides gives $W_{\zeta/4}\p{\delta}=C'-R_{\zeta}\p{f}/2$ for a constant $C'$ that does not depend on $\delta$.
Taking differences between $\delta_1$ and $\delta_2$ cancels the constant.
\end{proof}

\subsection{Proof of Corollary \ref{cor:welfare-unregularized}}
\label{app:proof-welfare-cor-unreg}
\begin{proof}
Since $0\le \p{2\delta\p{X}-1}^2\le 1$, we have $0\le \bbE\sqb{\p{2\delta\p{X}-1}^2}\le 1$.
Therefore,
\[
W_{\lambda}\p{\delta}=V\p{\delta}-\lambda\bbE\sqb{\p{2\delta\p{X}-1}^2}
\le
V\p{\delta}
\le
W_{\lambda}\p{\delta}+\lambda.
\]
Since $\delta^\star_{\lambda}$ maximizes $W_{\lambda}$, we have $W_{\lambda}\p{\delta^\star}\le W_{\lambda}\p{\delta^\star_{\lambda}}$. Hence $V\p{\delta^\star}\le W_{\lambda}\p{\delta^\star_{\lambda}}+\lambda$, while $W_{\lambda}\p{\delta}\le V\p{\delta}$. Combining these inequalities yields the result.
\end{proof}

\subsection{Proof of Corollary \ref{cor:welfare-risk-K}}
\label{app:proof-welfare-cor-K}
\begin{proof}
Expanding the full-vector population risk yields
\[
R^{\mathrm{Full}}_{\zeta}\p{\delta}
=
\frac{1}{2\zeta}\sum_{a=1}^K \bbE\sqb{Y\p{a}^2}
-
\sum_{a=1}^K \bbE\sqb{Y\p{a}\delta_a\p{X}}
+
\frac{\zeta}{2}\bbE\sqb{\sum_{a=1}^K \delta_a\p{X}^2}.
\]
The welfare is $V\p{\delta}=\sum_{a=1}^K \bbE\sqb{Y\p{a}\delta_a\p{X}}$.
Thus,
\[
W^{\mathrm{Full}}_{\zeta/2}\p{\delta}
=
V\p{\delta}-\frac{\zeta}{2}\bbE\sqb{\sum_{a=1}^K \delta_a\p{X}^2}
=
C
-
R^{\mathrm{Full}}_{\zeta}\p{\delta},
\]
where $C=\frac{1}{2\zeta}\sum_{a=1}^K \bbE\sqb{Y\p{a}^2}$ is a constant that does not depend on $\delta$.
Taking differences yields the result.
\end{proof}

\subsection{Proof of Corollary \ref{cor:welfare-missing}}
\label{app:proof-welfare-missing}
\begin{proof}
Let $\widetilde{R}^{\mathrm{Full}}_{\zeta}\p{\delta}$ denote the population risk associated with $\widehat{L}^{\mathrm{IPW}}_n$ or $\widehat{L}^{\mathrm{DR}}_n$ after substituting a policy $\delta$ for $\delta_{\theta}$ and taking expectation.
Expanding the square yields
\[
\widetilde{R}^{\mathrm{Full}}_{\zeta}\p{\delta}
=
C
-
\sum_{a=1}^K \bbE\sqb{\widetilde{Y}\p{a}\delta_a\p{X}}
+
\frac{\zeta}{2}\bbE\sqb{\sum_{a=1}^K \delta_a\p{X}^2},
\]
where $C=\frac{1}{2\zeta}\sum_{a=1}^K \bbE\sqb{\widetilde{Y}\p{a}^2}$ does not depend on $\delta$.
Under the conditional mean correctness property in Proposition \ref{prop:dr-target},
\[
\bbE\sqb{\widetilde{Y}\p{a}\delta_a\p{X}}
=
\bbE\sqb{\bbE\sqb{\widetilde{Y}\p{a}\mid X}\delta_a\p{X}}
=
\bbE\sqb{\gamma_a\p{X}\delta_a\p{X}}
=
\bbE\sqb{Y\p{a}\delta_a\p{X}}.
\]
Thus, up to a constant, $\widetilde{R}^{\mathrm{Full}}_{\zeta}\p{\delta}$ differs from $W^{\mathrm{Full}}_{\zeta/2}\p{\delta}$ only through a sign, and taking differences between $\delta_1$ and $\delta_2$ yields the stated identity.
\end{proof}

\section{Details of GBPLNet}
\label{appdx:detail_gbpl}
This appendix summarizes computational details for GBPLNet, the neural network implementation example introduced in Section~\ref{sec:computation}.
We focus on the binary action setting with full feedback and comment on extensions.

\subsection{Binary Action Setting and GBPLNet Score Model}
We work with i.i.d. data $\calD=\cb{Z_i}_{i=1}^n$ where $Z_i=\p{X_i,Y_i\p{1},Y_i\p{0}}$.
Let $U_i=Y_i\p{1}-Y_i\p{0}$.
GBPLNet models a bounded score function by
$f_{w}\p{x}=\tanh\p{g_{w}\p{x}}$,
where $g_{w}\p{x}$ is an unconstrained neural network mapping $\calX$ to $\bbR$ and $w\in\calW$ denotes its weights.
By construction, $f_{w}\p{x}\in\p{-1,1}$ for all $x$.

The corresponding policy is $\delta_{w}\p{x}=\p{f_{w}\p{x}+1}/2$.
When a deterministic policy is required, we use the thresholding rule $a_{w}\p{x}=\mathbbm{1}\sqb{f_{w}\p{x}\ge 0}$.

\subsection{General Bayes Posterior and MAP Objective}
GBPLNet substitutes $f_{w}$ into the binary surrogate loss in \eqref{eq:loss-binary},
\[
\ell\p{w;\p{x,y\p{1},y\p{0}}}
=
\frac{1}{2}\p{\frac{1}{\sqrt{\zeta}}\p{y\p{1}-y\p{0}}-\sqrt{\zeta}f_{w}\p{x}}^2.
\]
The generalized posterior is
\[
\rmd \Pi_{\eta}\p{w\mid \calD}
\propto
\rmd \Pi\p{w}\exp\p{-\eta\sum^n_{i=1}\ell\p{w;Z_i}}.
\]

In many implementations, $\Pi$ is chosen as an isotropic Gaussian prior over weights, $\Pi=\mathcal{N}\p{0,\sigma_w^2 I}$.
Then $-\log \Pi\p{w}=\frac{1}{2\sigma_w^2}\sum_{j=1}^p w_j^2+\mathrm{const}$, where $p$ is the number of parameters.
The MAP estimator satisfies
\[
\widehat{w}_{\mathrm{MAP}}
\in
\arg\min_{w\in\calW}\cb{
\eta\sum^n_{i=1}\ell\p{w;Z_i}
+
\frac{1}{2\sigma_w^2}\sum_{j=1}^p w_j^2
}.
\]
This objective is standard neural network training under the squared surrogate loss with weight decay, up to the scaling by $\eta$.

\subsection{Objective Decomposition}
The binary surrogate loss admits a simple decomposition, which is useful for implementation and for interpreting the effect of $\zeta$.

\begin{proposition}
\label{prop:gbplnet-decomposition}
Let $U_i=Y_i\p{1}-Y_i\p{0}$.
For any $\zeta>0$ and any score function $f$, we have
\[
\frac{1}{2}\p{\frac{1}{\sqrt{\zeta}}U_i-\sqrt{\zeta}f\p{X_i}}^2
=
\frac{1}{2\zeta}U_i^2
-
U_i f\p{X_i}
+
\frac{\zeta}{2}f\p{X_i}^2.
\]
Consequently,
\[
\sum^n_{i=1}\frac{1}{2}\p{\frac{1}{\sqrt{\zeta}}U_i-\sqrt{\zeta}f\p{X_i}}^2
=
\frac{1}{2\zeta}\sum^n_{i=1}U_i^2
-
\sum^n_{i=1}U_i f\p{X_i}
+
\frac{\zeta}{2}\sum^n_{i=1}f\p{X_i}^2.
\]
\end{proposition}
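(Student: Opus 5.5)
The identity in Proposition~\ref{prop:gbplnet-decomposition} is purely algebraic, so the plan is to expand the square for a single observation and then sum over $i$. No property of $f$ is needed beyond $f\p{X_i}$ being a real number, and no property of $U_i$ beyond being real. Boundedness of $f$ and the tanh parameterization play no role.

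First, fix $i$ and write $a=U_i/\sqrt{\zeta}$ and $b=\sqrt{\zeta}f\p{X_i}$, which are well defined because $\zeta>0$. Apply $\p{a-b}^2=a^2-2ab+b^2$. Then $a^2=U_i^2/\zeta$ and $b^2=\zeta f\p{X_i}^2$. The cross term is $2ab=2U_if\p{X_i}$, since $\frac{1}{\sqrt{\zeta}}\cdot\sqrt{\zeta}=1$. Multiplying by $1/2$ gives
\[
\frac{1}{2\zeta}U_i^2-U_if\p{X_i}+\frac{\zeta}{2}f\p{X_i}^2 .
\]
This is the same expansion already used in the proof of Theorem~\ref{thm:binary-equivalence} (Appendix~\ref{app:proof-binary}), there without the factor $1/2$ and with the $1/n$ average.

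Second, sum the pointwise identity over $i=1,\dots,n$. By linearity of finite sums, the three terms separate into the displayed sums. This gives the second equation.

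There is no genuine obstacle here. The only point to check is the cancellation of the $\sqrt{\zeta}$ factors in the cross term, which relies on $\zeta>0$.

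Finally, I would remark on the consequence used in Section~\ref{sec:experiments}. The first sum, $\frac{1}{2\zeta}\sum_i U_i^2$, does not depend on $f$. Therefore comparing surrogate loss values across different $\zeta$ mixes in a $\zeta$-dependent constant, while for fixed $\zeta$ the minimizer in $f$ is governed only by the last two terms.
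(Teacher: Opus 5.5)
Your proof is correct and takes essentially the paper's approach. The paper gives no separate argument for this proposition beyond the direct expansion of the square already used in the proof of Theorem~\ref{thm:binary-equivalence}, followed by summing over $i$, which is exactly what you do.
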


Proposition \ref{prop:gbplnet-decomposition} implies that, for a fixed $\zeta$, minimizing the surrogate empirical loss is equivalent to minimizing
$-\sum^n_{i=1}U_i f\p{X_i}+\frac{\zeta}{2}\sum^n_{i=1}f\p{X_i}^2$, up to an additive constant that does not depend on $f$.
This is consistent with Theorem \ref{thm:binary-equivalence} after the reparameterization $f=2\delta-1$.

When tuning $\zeta$ by a validation criterion, Proposition \ref{prop:gbplnet-decomposition} clarifies that the full squared loss contains the term $\frac{1}{2\zeta}\sum^n_{i=1}U_i^2$ that does not depend on the fitted score.
If the validation criterion is taken as the surrogate loss itself, this term can bias selection toward larger $\zeta$ values.
A practical alternative is to tune $\zeta$ by validation welfare, or by the reduced objective that drops the constant term.

\subsection{Gaussian Approximation}
A Gaussian approximation to the generalized posterior is obtained by a second order expansion around a point estimator, such as the MAP.
Let $\widehat{w}_{\mathrm{MAP}}$ be a MAP estimate and define
\[
\Phi\p{w}
=
\eta\sum^n_{i=1}\ell\p{w;Z_i}
-
\log \Pi\p{w}.
\]
Let $H$ be the Hessian of $\Phi$ at $\widehat{w}_{\mathrm{MAP}}$.
A Gaussian approximation takes the form $w\mid\calD\approx \mathcal{N}\p{\widehat{w}_{\mathrm{MAP}},H^{-1}}$.
In large networks, $H$ is high dimensional, so implementations typically use diagonal, block-diagonal, or low-rank approximations.

\subsection{Stochastic Gradient Langevin Dynamics}
SGLD provides an approximate sampling method for generalized posteriors.
Let $\widehat{\Phi}_t\p{w}$ be a minibatch approximation of $\Phi\p{w}$ based on a batch $\calB_t\subset\cb{1,\dots,n}$, scaled to match the full sample objective.
Given a stepsize sequence $\cb{\epsilon_t}$, SGLD iterates
\[
w_{t+1}
=
w_t
-
\frac{\epsilon_t}{2}\nabla \widehat{\Phi}_t\p{w_t}
+
\sqrt{\epsilon_t}\xi_t,
\quad
\xi_t\sim \mathcal{N}\p{0,I}.
\]
After a burn-in period, draws are collected by thinning.
In practice, gradient clipping is often helpful when the outcomes have heavy tails, because the squared loss can yield large gradients.

\subsection{Posterior Predictive Welfare and Credible Intervals}
Let $\cb{w^{\p{s}}}_{s=1}^S$ be draws from the generalized posterior, obtained for example by SGLD.
For each draw, we can evaluate a policy on a held-out sample $\cb{Z_i}_{i=1}^n$.

If we use the deterministic decision rule $a_{w}\p{x}=\mathbbm{1}\sqb{f_{w}\p{x}\ge 0}$, the welfare draw is
\[
\widehat{V}^{\p{s}}
=
\frac{1}{n}\sum^n_{i=1}\Bigp{
a_{w^{\p{s}}}\p{X_i}Y_i\p{1}
+
\p{1-a_{w^{\p{s}}}\p{X_i}}Y_i\p{0}
}.
\]
If we instead report welfare for the randomized policy $\delta_{w}\p{x}=\p{f_{w}\p{x}+1}/2$, then
\[
\widehat{V}^{\p{s}}
=
\frac{1}{n}\sum^n_{i=1}\Bigp{
\delta_{w^{\p{s}}}\p{X_i}Y_i\p{1}
+
\p{1-\delta_{w^{\p{s}}}\p{X_i}}Y_i\p{0}
}.
\]
A $95\%$ credible interval for welfare is given by the empirical quantiles of $\cb{\widehat{V}^{\p{s}}}_{s=1}^S$, the $0.025$ and $0.975$ quantiles.
These intervals summarize posterior uncertainty under the generalized posterior, and their frequentist coverage depends on calibration of the loss scale.

\subsection{Multi-Action Extension}
For $K$ actions, the full-vector surrogate in \eqref{eq:loss-K-full} requires a policy model $\delta_{\theta}\p{x}\in\Delta_K$.
A neural implementation replaces $\delta_{\theta}$ by a network that outputs logits $h_{w}\p{x}\in\bbR^K$ and maps them to $\Delta_K$ by the softmax function.
This multi-action implementation is symmetric across actions, and it avoids baseline dependence in the surrogate construction.

\subsection{Minimal PyTorch Implementation for GBPLNet}
\label{app:code}
The following code illustrates MAP training for the binary surrogate with a tanh-squashed neural network score.
\begin{lstlisting}[language=Python]
import torch
import torch.nn as nn

class TanhScoreNet(nn.Module):
    def __init__(self, in_dim, hidden=(128, 128)):
        super().__init__()
        layers = []
        d = in_dim
        for h in hidden:
            layers += [nn.Linear(d, h), nn.ReLU()]
            d = h
        layers += [nn.Linear(d, 1)]
        self.net = nn.Sequential(*layers)

    def forward(self, x):
        return torch.tanh(self.net(x)).view(-1)

def surrogate_loss(y1, y0, f, zeta):
    d = y1 - y0
    return 0.5 * (d / (zeta ** 0.5) - (zeta ** 0.5) * f) ** 2

def map_objective(model, xb, y1b, y0b, n_total, eta, prior_sd, zeta):
    f = model(xb)
    loss_batch = surrogate_loss(y1b.view(-1), y0b.view(-1), f, zeta).sum()
    B = xb.size(0)
    data_term = eta * (n_total / B) * loss_batch
    prior_term = 0.0
    for p in model.parameters():
        prior_term = prior_term + (p ** 2).sum()
    prior_term = 0.5 * prior_term / (prior_sd ** 2)
    return data_term + prior_term
\end{lstlisting}

\section{Posterior Visualization Example}
\label{app:posterior-visualization}
This section illustrates how the generalized posterior in GBPLNet (Section~\ref{sec:computation}) induces uncertainty bands for the score function and credible intervals for welfare, under a controlled full feedback binary setting.

\subsection{Data Generating Process and Training Setup}
We generate one-dimensional covariates $X\in\bbR$ with $X\sim \mathrm{Unif}\p{-2.5,2.5}$.
The baseline mean function is $\gamma_0\p{x}=0.2x+0.2\sin\p{1.5x}$ and the conditional outcome gap is $\tau\p{x}=1.2\sin\p{x}$.
We generate potential outcomes as $Y\p{0}=\gamma_0\p{X}+\epsilon_0$ and $Y\p{1}=\gamma_0\p{X}+\tau\p{X}+\epsilon_1$, with independent $\epsilon_0,\epsilon_1\sim\mathcal{N}\p{0,0.6^2}$.
We set $\zeta=1.0$ to avoid trivial saturation of the bounded score in most of the covariate range.
The sample size is $n=1500$, and we split the data into training, validation, and test sets with proportions $0.6$, $0.2$, and $0.2$.

For the score model, we use the GBPLNet parameterization $f_w\p{x}=\tanh\p{g_w\p{x}}$ with a two-hidden-layer MLP $g_w$ of widths $\p{64,64}$ and ReLU activation.
We compute a MAP estimate under the squared-loss surrogate with a Gaussian prior $\Pi=\mathcal{N}\p{0,\sigma_w^2 I}$, using $\sigma_w=1.0$ and $\eta=1.0$.
We train with minibatches of size $128$, learning rate $10^{-3}$, and weight decay $10^{-4}$, and we apply early stopping based on the validation surrogate loss.
In this run, the best validation surrogate loss is $0.3482$.

\subsection{Generalized Posterior Sampling by SGLD}
To visualize the generalized posterior, we approximate draws from $\Pi_\eta\p{\cdot\mid\calD}$ by SGLD starting from the MAP solution.
We use a burn-in period of $1200$ iterations, collect $300$ draws, thin by a factor $8$, and use step size $2\times 10^{-5}$ with batch size $128$.
These settings define the finite-iteration SGLD approximation used in this illustration.

\subsection{Posterior Draws for the Score Function}
Figure~\ref{fig:gbplnet-posterior-fx} plots posterior draws of $f_w\p{x}$ on a grid, together with the posterior mean and a pointwise $95\%$ credible band.
The figure also overlays the population target $\max\p{-1,\min\p{1,\tau\p{x}/\zeta}}$ from Section~\ref{sec:theory}, which equals $\max\p{-1,\min\p{1,1.2\sin\p{x}}}$ for $\zeta=1.0$.
The posterior mean tracks the target in regions where the target lies in $\p{-1,1}$ and saturates near the boundaries when the target exceeds the action range.
The credible band widens in transition regions where the decision boundary $f_w\p{x}=0$ is sensitive to the fit, which is reflected in posterior variability in downstream welfare.

\begin{figure*}[t]
  \centering
  \includegraphics[width=0.90\textwidth]{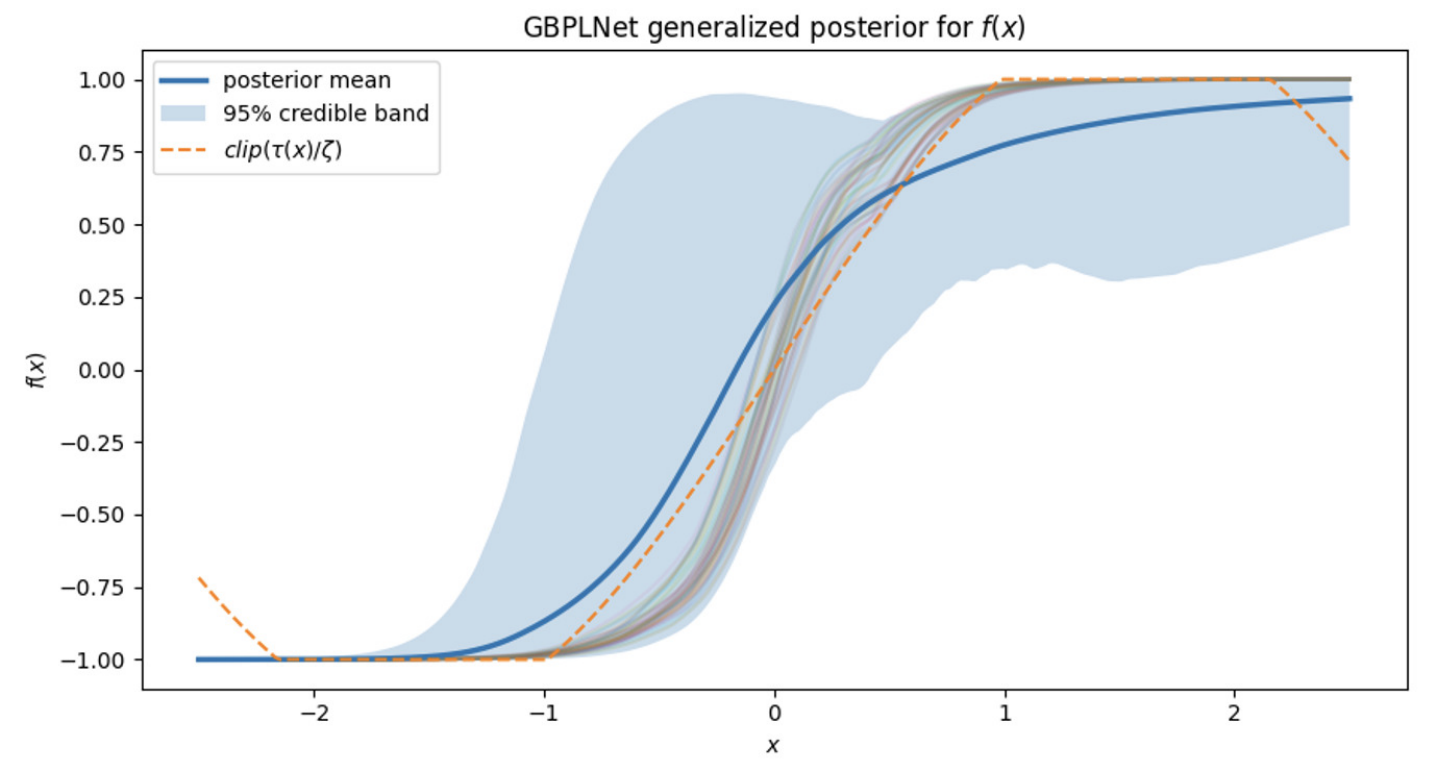}
  \caption{GBPLNet posterior visualization in the one-dimensional binary example.
  The plot shows posterior draws of the score function $f_w\p{x}$, the posterior mean, and a pointwise $95\%$ credible band, along with the population target $\max\p{-1,\min\p{1,\tau\p{x}/\zeta}}$.}
  \label{fig:gbplnet-posterior-fx}
\end{figure*}

\subsection{Posterior Distribution of Welfare and Credible Intervals}
For each posterior draw $w^{\p{s}}$, we compute welfare on the test set using the deterministic policy $a_{w^{\p{s}}}\p{x}=\mathbbm{1}\sqb{f_{w^{\p{s}}}\p{x}\ge 0}$.
The welfare draw is
\[
\widehat{V}^{\p{s}}
=
\frac{1}{n}\sum^n_{i=1}\Bigp{
a_{w^{\p{s}}}\p{X_i}Y_i\p{1}
+
\p{1-a_{w^{\p{s}}}\p{X_i}}Y_i\p{0}
}.
\]
Figure~\ref{fig:gbplnet-posterior-welfare} shows the empirical distribution of $\cb{\widehat{V}^{\p{s}}}_{s=1}^S$.
In this run, the posterior welfare mean is $0.4699$, and the $95\%$ credible interval given by the $0.025$ and $0.975$ quantiles is $\sqb{0.3327,0.5021}$.
These intervals summarize uncertainty under the generalized posterior.
Their frequentist coverage depends on calibration choices for $\eta$ and the prior, and they should be interpreted as loss-based posterior uncertainty summaries rather than as automatically calibrated confidence intervals.

\begin{figure*}[t]
  \centering
  \includegraphics[width=0.9\textwidth]{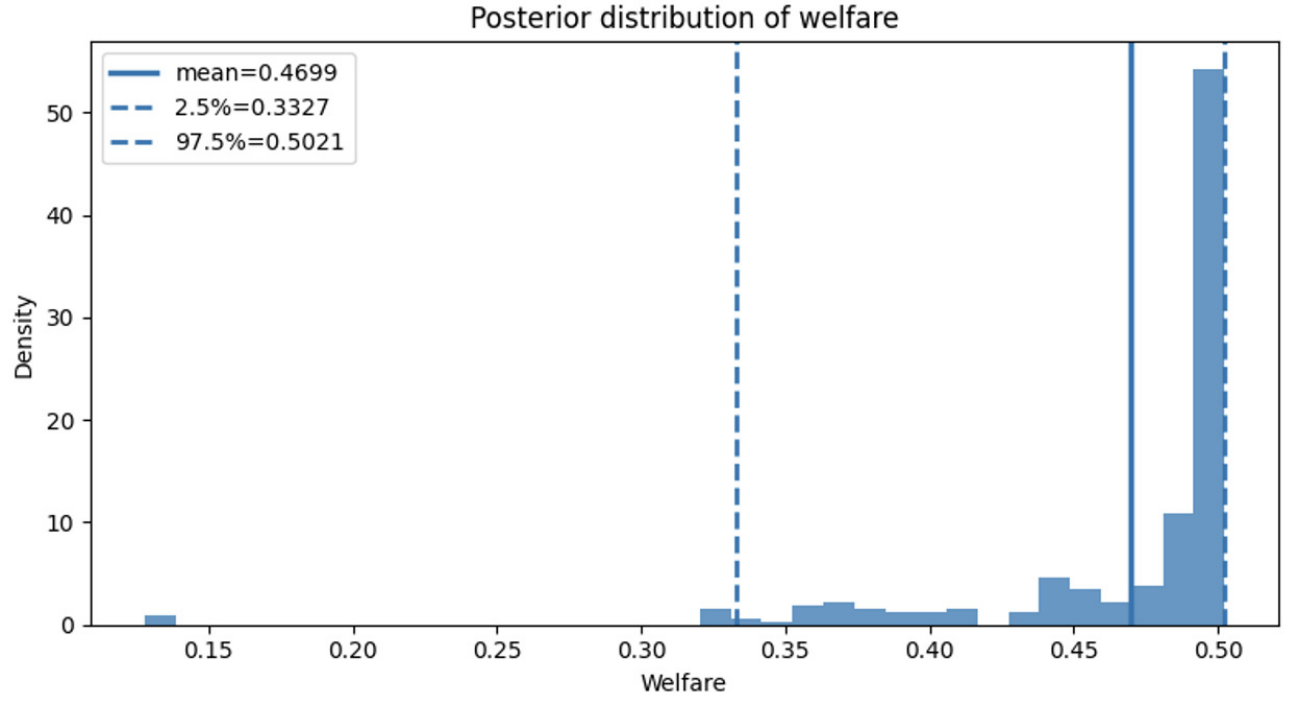}
  \caption{Posterior distribution of test welfare in the one-dimensional binary example.
  The histogram is computed from SGLD draws of $w$ and welfare evaluation under the induced deterministic policy.}
  \label{fig:gbplnet-posterior-welfare}
\end{figure*}

\subsubsection{How to use posterior uncertainty in practice}
Posterior summaries can be used for several purposes.
First, credible intervals for welfare provide an uncertainty report for the induced decision rule under the chosen loss scale.
Moreover, posterior draws can be used to assess decision stability, for example, by inspecting posterior variability of the decision boundary through the distribution of $f_w\p{x_0}$ at selected covariate values.
Figure~\ref{fig:gbplnet-posterior-fx0} illustrates these distributions at two selected points.
Finally, posterior lower quantiles of welfare computed on a validation set can be used to compare policies under the chosen loss scale.

\begin{figure*}[t]
  \centering
  \includegraphics[width=0.90\textwidth]{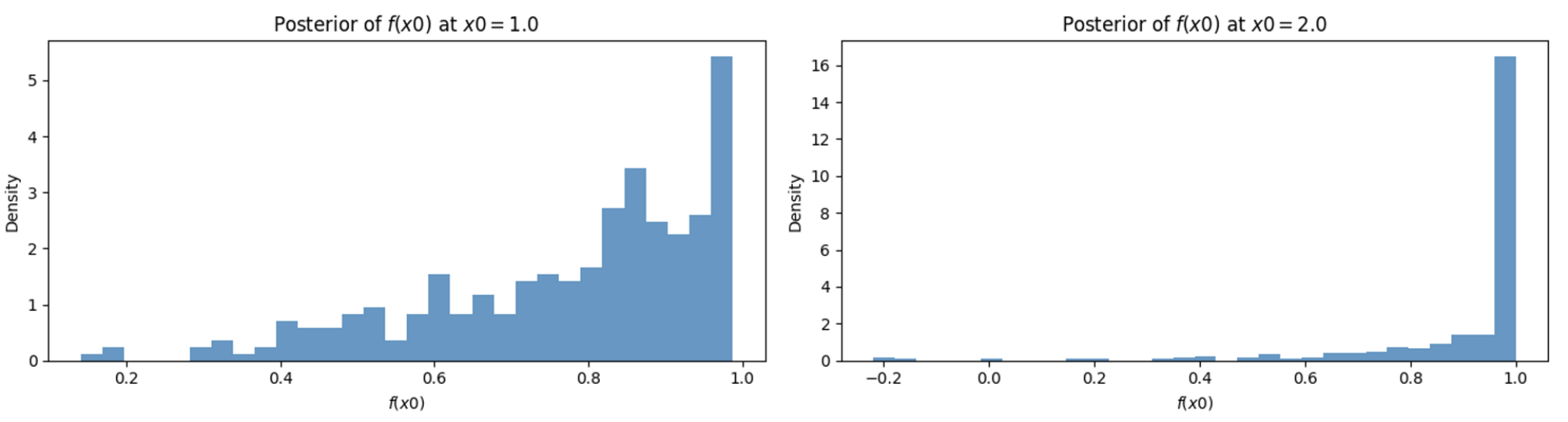}
  \caption{Posterior distributions of $f_w\p{x_0}$ at two fixed covariate values $x_0\in\cb{1,2}$ in the one-dimensional binary example.
  These histograms show posterior variation in the score at the selected covariate values.}
  \label{fig:gbplnet-posterior-fx0}
\end{figure*}

\section{Additional Experiments for Learning Curves, Weak Overlap, and Posterior Uncertainty}
\label{app:additional-posterior-experiments}

This appendix reports additional experiments that complement the main simulation tables. The experiments examine learning curves under full feedback and logged feedback, behavior under weak overlap, and posterior uncertainty summaries. In the two learning-curve subsections below, regret is measured relative to the Bayes-optimal action under the known conditional means and is not directly comparable with realized-outcome regret in Section~\ref{sec:experiments}. All reported values are computed over independent replications. Values in parentheses are Monte Carlo standard errors.

\subsection{Learning Curves under Full Feedback}
\label{app:full-feedback-learning-curves}

We first report full-feedback learning curves for binary DGP2. For each observed sample size, we train the methods on the observed sample and evaluate the learned deterministic rule on an independent evaluation sample using the known conditional means. Figure~\ref{fig:appendix-full-feedback-learning} shows welfare and regret as functions of the observed sample size. Table~\ref{tab:appendix-full-feedback-learning} gives the corresponding numerical summaries.

\begin{figure*}[t]
\centering
\includegraphics[width=0.49\textwidth]{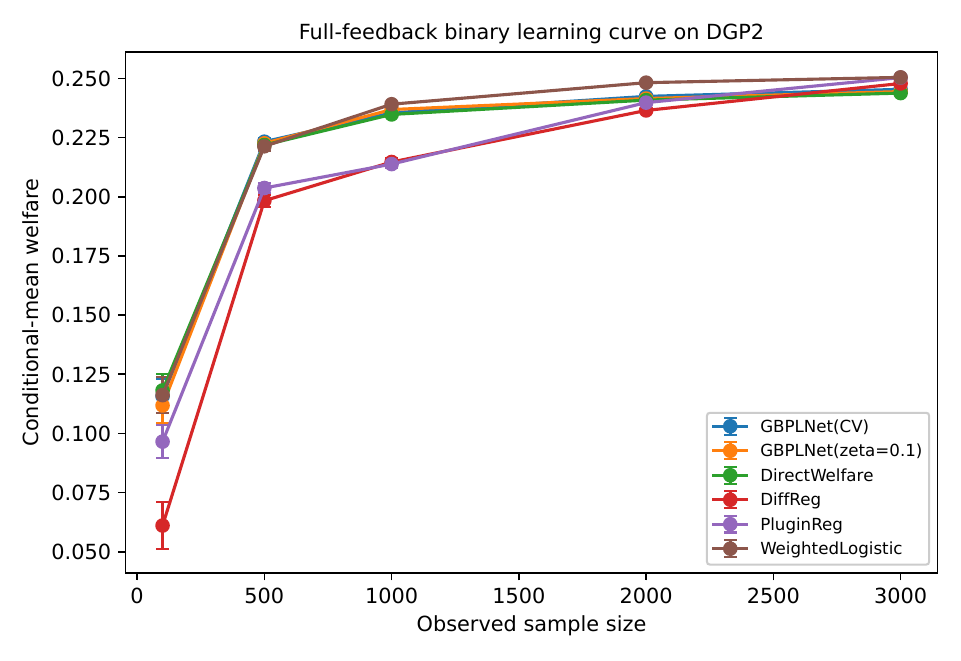}
\includegraphics[width=0.49\textwidth]{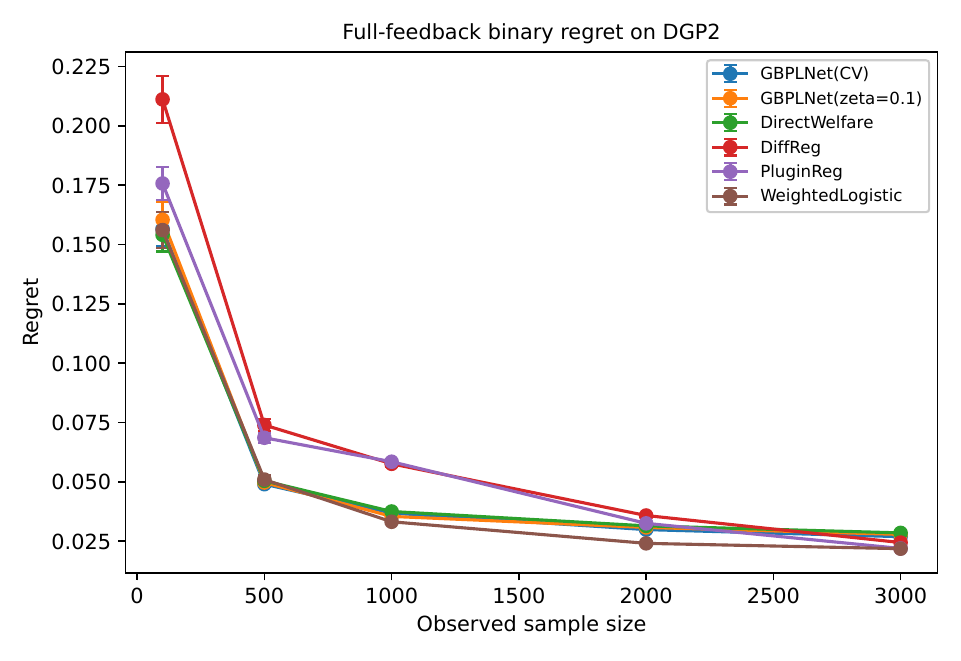}
\caption{Full-feedback learning curves for binary DGP2 over $100$ replications. The left panel reports welfare, and the right panel reports regret. Error bars show Monte Carlo standard errors.}
\label{fig:appendix-full-feedback-learning}
\end{figure*}

\begin{table*}[t]
\centering
\caption{Full-feedback DGP2 learning-curve summary over $100$ replications. Entries are means followed by Monte Carlo standard errors in parentheses.}
\label{tab:appendix-full-feedback-learning}
\begin{tabular}{rlcccc}
\toprule
$n$ & Method & $\zeta$ & Welfare & Regret & Agreement \\
\midrule
\multirow{5}{*}{100} & GBPLNet & CV & 0.1157 (0.0072) & 0.1566 (0.0072) & 0.7730 (0.0065) \\
 & DirectWelfare & -- & 0.1182 (0.0070) & 0.1541 (0.0070) & 0.7752 (0.0063) \\
 & DiffReg & -- & 0.0611 (0.0100) & 0.2112 (0.0100) & 0.7238 (0.0088) \\
 & PluginReg & -- & 0.0965 (0.0070) & 0.1757 (0.0070) & 0.7551 (0.0063) \\
 & WeightedLogistic & -- & 0.1162 (0.0075) & 0.1561 (0.0075) & 0.7729 (0.0069) \\
\addlinespace
\multirow{5}{*}{500} & GBPLNet & CV & 0.2233 (0.0015) & 0.0490 (0.0013) & 0.8860 (0.0018) \\
 & DirectWelfare & -- & 0.2218 (0.0013) & 0.0504 (0.0011) & 0.8834 (0.0016) \\
 & DiffReg & -- & 0.1983 (0.0025) & 0.0739 (0.0025) & 0.8545 (0.0027) \\
 & PluginReg & -- & 0.2037 (0.0021) & 0.0686 (0.0021) & 0.8615 (0.0023) \\
 & WeightedLogistic & -- & 0.2213 (0.0018) & 0.0509 (0.0017) & 0.8826 (0.0024) \\
\addlinespace
\multirow{5}{*}{1000} & GBPLNet & CV & 0.2354 (0.0009) & 0.0368 (0.0007) & 0.9047 (0.0012) \\
 & DirectWelfare & -- & 0.2348 (0.0009) & 0.0375 (0.0007) & 0.9036 (0.0012) \\
 & DiffReg & -- & 0.2147 (0.0017) & 0.0576 (0.0015) & 0.8762 (0.0019) \\
 & PluginReg & -- & 0.2138 (0.0015) & 0.0585 (0.0013) & 0.8756 (0.0015) \\
 & WeightedLogistic & -- & 0.2391 (0.0011) & 0.0332 (0.0009) & 0.9109 (0.0016) \\
\addlinespace
\multirow{5}{*}{2000} & GBPLNet & CV & 0.2424 (0.0008) & 0.0299 (0.0005) & 0.9182 (0.0010) \\
 & DirectWelfare & -- & 0.2408 (0.0008) & 0.0314 (0.0005) & 0.9148 (0.0010) \\
 & DiffReg & -- & 0.2365 (0.0010) & 0.0358 (0.0006) & 0.9053 (0.0009) \\
 & PluginReg & -- & 0.2398 (0.0010) & 0.0325 (0.0006) & 0.9108 (0.0009) \\
 & WeightedLogistic & -- & 0.2482 (0.0008) & 0.0241 (0.0004) & 0.9315 (0.0010) \\
\addlinespace
\multirow{5}{*}{3000} & GBPLNet & CV & 0.2454 (0.0008) & 0.0269 (0.0004) & 0.9248 (0.0009) \\
 & DirectWelfare & -- & 0.2438 (0.0008) & 0.0285 (0.0003) & 0.9207 (0.0007) \\
 & DiffReg & -- & 0.2479 (0.0008) & 0.0244 (0.0003) & 0.9231 (0.0006) \\
 & PluginReg & -- & 0.2503 (0.0007) & 0.0219 (0.0003) & 0.9272 (0.0005) \\
 & WeightedLogistic & -- & 0.2504 (0.0007) & 0.0218 (0.0003) & 0.9378 (0.0008) \\
\bottomrule
\end{tabular}
\par\smallskip
\begin{minipage}{0.96\textwidth}
\footnotesize Agreement is the fraction of evaluation points at which the learned deterministic action equals the Bayes-optimal action defined by the known conditional means. DirectWelfare directly maximizes empirical full-feedback welfare over the policy class used in this experiment.
\end{minipage}
\end{table*}

\subsection{Learning Curves under Logged Feedback}
\label{app:logged-feedback-learning-curves}

We next report logged-feedback learning curves for binary DGP2. Only the outcome under the logged action is observed. GBPLNet-DR uses cross-fitted outcome regressions to form DR pseudo-outcomes, while GBPLNet-IPW uses IPW pseudo-outcomes. Figure~\ref{fig:appendix-logged-feedback-learning} reports welfare and regret. Table~\ref{tab:appendix-logged-feedback-learning} reports the numerical summaries.

\begin{figure*}[t]
\centering
\includegraphics[width=0.49\textwidth]{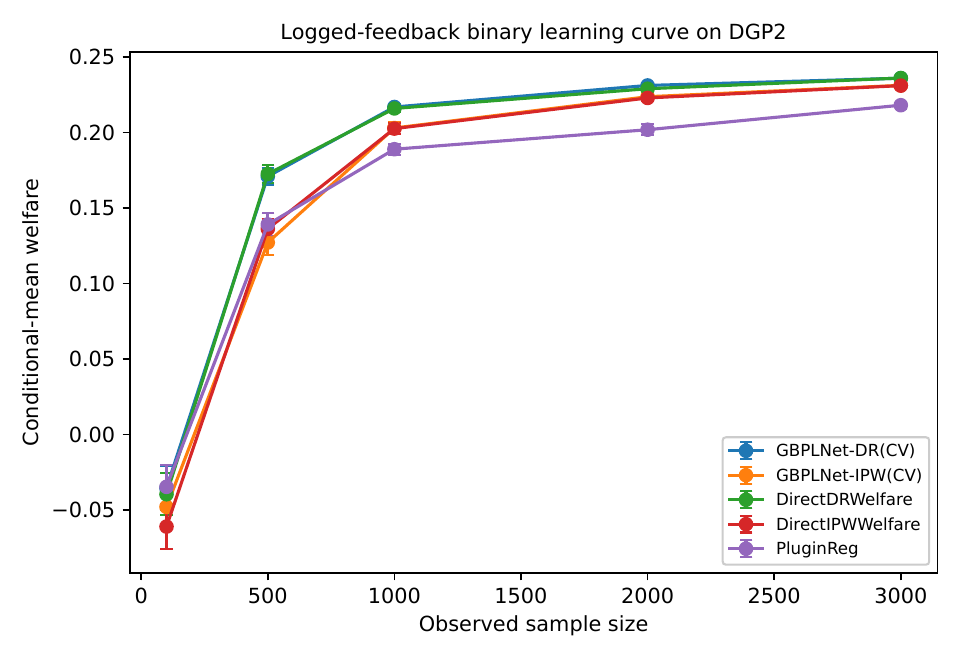}
\includegraphics[width=0.49\textwidth]{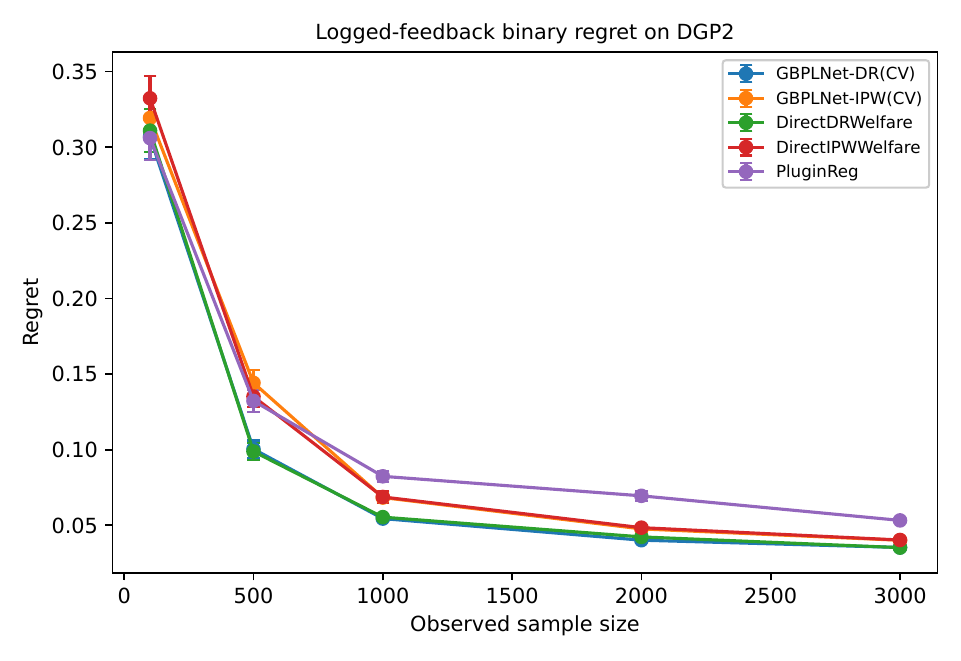}
\caption{Logged-feedback learning curves for binary DGP2 over $50$ replications. The left panel reports welfare, and the right panel reports regret. Error bars show Monte Carlo standard errors.}
\label{fig:appendix-logged-feedback-learning}
\end{figure*}

\begin{table*}[t]
\centering
\caption{Logged-feedback DGP2 learning-curve summary over $50$ replications. Entries are means followed by Monte Carlo standard errors in parentheses.}
\label{tab:appendix-logged-feedback-learning}
\begin{tabular}{rlcccc}
\toprule
$n$ & Method & $\zeta$ & Welfare & Regret & Agreement \\
\midrule
\multirow{5}{*}{100} & GBPLNet-DR & CV & -0.0356 (0.0146) & 0.3068 (0.0147) & 0.6405 (0.0126) \\
 & GBPLNet-IPW & CV & -0.0480 (0.0140) & 0.3192 (0.0140) & 0.6299 (0.0119) \\
 & DirectDRWelfare & -- & -0.0397 (0.0140) & 0.3109 (0.0141) & 0.6367 (0.0120) \\
 & DirectIPWWelfare & -- & -0.0611 (0.0151) & 0.3323 (0.0150) & 0.6185 (0.0129) \\
 & PluginReg & -- & -0.0347 (0.0147) & 0.3060 (0.0146) & 0.6411 (0.0124) \\
\addlinespace
\multirow{5}{*}{500} & GBPLNet-DR & CV & 0.1708 (0.0056) & 0.1004 (0.0057) & 0.8267 (0.0057) \\
 & GBPLNet-IPW & CV & 0.1271 (0.0081) & 0.1442 (0.0083) & 0.7851 (0.0077) \\
 & DirectDRWelfare & -- & 0.1724 (0.0059) & 0.0988 (0.0059) & 0.8284 (0.0059) \\
 & DirectIPWWelfare & -- & 0.1361 (0.0068) & 0.1351 (0.0069) & 0.7931 (0.0066) \\
 & PluginReg & -- & 0.1390 (0.0075) & 0.1322 (0.0075) & 0.7950 (0.0069) \\
\addlinespace
\multirow{5}{*}{1000} & GBPLNet-DR & CV & 0.2168 (0.0022) & 0.0544 (0.0017) & 0.8777 (0.0023) \\
 & GBPLNet-IPW & CV & 0.2029 (0.0039) & 0.0684 (0.0037) & 0.8624 (0.0042) \\
 & DirectDRWelfare & -- & 0.2158 (0.0024) & 0.0554 (0.0022) & 0.8773 (0.0029) \\
 & DirectIPWWelfare & -- & 0.2025 (0.0037) & 0.0687 (0.0038) & 0.8621 (0.0044) \\
 & PluginReg & -- & 0.1888 (0.0036) & 0.0824 (0.0035) & 0.8458 (0.0038) \\
\addlinespace
\multirow{5}{*}{2000} & GBPLNet-DR & CV & 0.2311 (0.0017) & 0.0402 (0.0013) & 0.8993 (0.0021) \\
 & GBPLNet-IPW & CV & 0.2236 (0.0027) & 0.0476 (0.0023) & 0.8890 (0.0032) \\
 & DirectDRWelfare & -- & 0.2289 (0.0015) & 0.0423 (0.0010) & 0.8950 (0.0017) \\
 & DirectIPWWelfare & -- & 0.2228 (0.0022) & 0.0484 (0.0017) & 0.8872 (0.0025) \\
 & PluginReg & -- & 0.2017 (0.0035) & 0.0695 (0.0033) & 0.8628 (0.0037) \\
\addlinespace
\multirow{5}{*}{3000} & GBPLNet-DR & CV & 0.2359 (0.0017) & 0.0354 (0.0012) & 0.9079 (0.0021) \\
 & GBPLNet-IPW & CV & 0.2311 (0.0016) & 0.0401 (0.0014) & 0.8999 (0.0021) \\
 & DirectDRWelfare & -- & 0.2360 (0.0015) & 0.0353 (0.0009) & 0.9073 (0.0016) \\
 & DirectIPWWelfare & -- & 0.2309 (0.0018) & 0.0403 (0.0015) & 0.8996 (0.0023) \\
 & PluginReg & -- & 0.2180 (0.0021) & 0.0533 (0.0017) & 0.8824 (0.0020) \\
\bottomrule
\end{tabular}
\par\smallskip
\begin{minipage}{0.96\textwidth}
\footnotesize DirectDRWelfare and DirectIPWWelfare directly maximize the DR and IPW pseudo-welfare criteria, respectively, over the policy class used in this experiment. Agreement is defined as in Appendix~\ref{app:full-feedback-learning-curves}.
\end{minipage}
\end{table*}

\subsection{Weak-Overlap Diagnostics}
\label{app:weak-overlap-diagnostics}

We examine the effect of propensity clipping under weak overlap. For each replication, we generate logged actions from a logistic policy with a stronger index and evaluate clipping levels $\epsilon_{\mathrm{clip}}\in\cb{0.01,0.05,0.10}$. Figure~\ref{fig:appendix-weak-overlap} shows welfare and the upper tail of the absolute pseudo-difference. Table~\ref{tab:appendix-weak-overlap} reports welfare, regret, the clipped fraction, the effective sample size (ESS), and pseudo-outcome tail summaries. Increasing the clipping level raises the effective sample size and reduces the tail of the pseudo-difference. Moreover, for GBPLNet-DR (CV), welfare increases slightly as clipping becomes more aggressive in this design.

\begin{figure*}[t]
\centering
\includegraphics[width=0.49\textwidth]{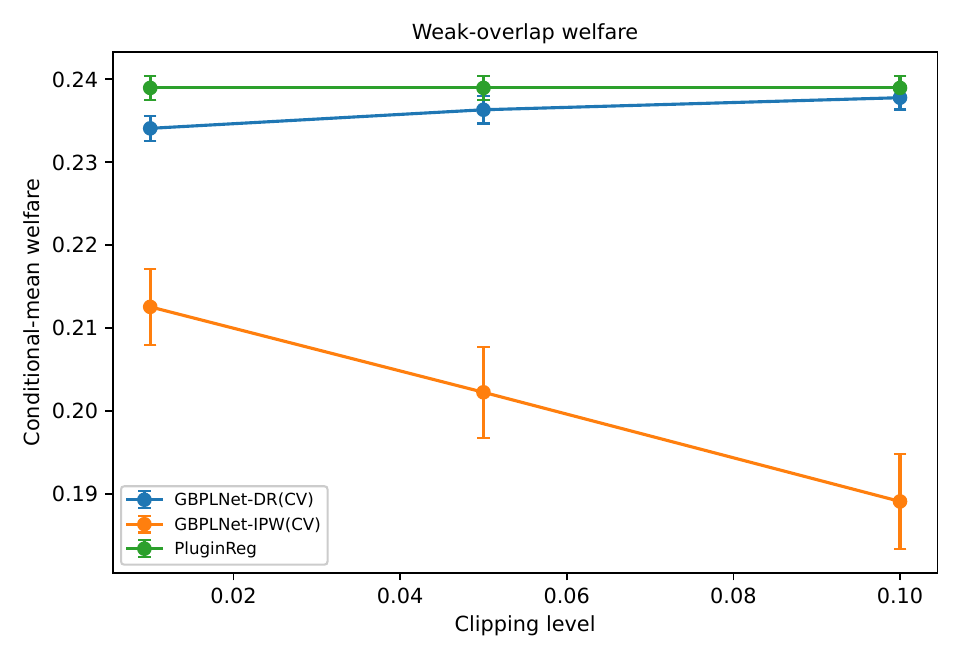}
\includegraphics[width=0.49\textwidth]{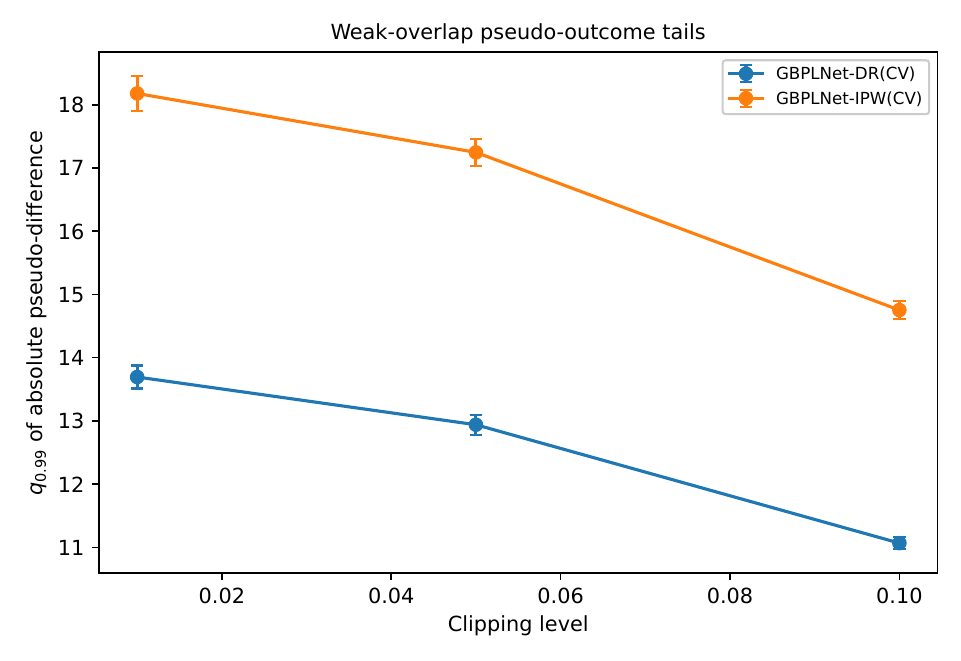}
\caption{Weak-overlap diagnostics over $50$ replications. The left panel reports welfare by clipping level. The right panel reports the $0.99$ quantile of the absolute pseudo-difference. Error bars show Monte Carlo standard errors.}
\label{fig:appendix-weak-overlap}
\end{figure*}

\begin{table*}[t]
\centering
\caption{Weak-overlap diagnostics over $50$ replications. Welfare and regret are means followed by Monte Carlo standard errors in parentheses. The clipped fraction, effective sample size (ESS), $q_{0.99}$, and maximum are computed from the training portion of the logged sample.}
\label{tab:appendix-weak-overlap}
\setlength{\tabcolsep}{2.5pt}
\begin{tabular}{rllcccccc}
\toprule
Clip & Method & $\zeta$ & Welfare & Regret & Clipped fraction & ESS & $q_{0.99}\p{\abs{\widetilde U}}$ & $\max_i\abs{\widetilde U_i}$ \\
\midrule
\multirow{3}{*}{0.01} & GBPLNet-DR & CV & 0.2341 (0.0015) & 0.0371 (0.0010) & \multirow{3}{*}{0.0684} & \multirow{3}{*}{1090.1} & 13.6938 & 179.0290 \\
 & GBPLNet-IPW & CV & 0.2125 (0.0046) & 0.0587 (0.0042) &  &  & 18.1773 & 196.5811 \\
 & PluginReg & -- & 0.2390 (0.0015) & 0.0323 (0.0008) &  &  & -- & -- \\
\addlinespace
\multirow{3}{*}{0.05} & GBPLNet-DR & CV & 0.2363 (0.0017) & 0.0349 (0.0011) & \multirow{3}{*}{0.2433} & \multirow{3}{*}{2071.9} & 12.9393 & 57.6668 \\
 & GBPLNet-IPW & CV & 0.2022 (0.0055) & 0.0690 (0.0052) &  &  & 17.2457 & 66.3725 \\
 & PluginReg & -- & 0.2390 (0.0015) & 0.0323 (0.0008) &  &  & -- & -- \\
\addlinespace
\multirow{3}{*}{0.10} & GBPLNet-DR & CV & 0.2378 (0.0014) & 0.0334 (0.0008) & \multirow{3}{*}{0.3845} & \multirow{3}{*}{2760.7} & 11.0675 & 32.1321 \\
 & GBPLNet-IPW & CV & 0.1891 (0.0058) & 0.0821 (0.0054) &  &  & 14.7522 & 39.0025 \\
 & PluginReg & -- & 0.2390 (0.0015) & 0.0323 (0.0008) &  &  & -- & -- \\
\bottomrule
\end{tabular}
\par\smallskip
\begin{minipage}{0.96\textwidth}
\footnotesize The ESS is $\p{\sum_i \omega_i}^2/\sum_i \omega_i^2$, where $\omega_i=1/\widehat e^{\mathrm{clip}}_{A_i}\p{X_i}$.
\end{minipage}
\end{table*}

\subsection{Repeated-Simulation Evaluation of Welfare Uncertainty}
\label{app:posterior-uncertainty-evaluation}

This subsection evaluates posterior welfare intervals in the one-dimensional binary design described in Appendix~\ref{app:posterior-visualization}. For each replication, we fit GBPLNet, fix the hidden-layer parameters at their MAP values, and construct a generalized Gauss--Newton Gaussian approximation for the final layer. We draw $400$ approximate posterior samples and compute the welfare of each sampled policy on an independent evaluation sample. The resulting intervals are summaries from the loss-based posterior. Their frequentist coverage depends on the loss scale $\eta$, so we evaluate coverage by repeated simulation.

Table~\ref{tab:appendix-uq-calibration} reports calibration results for $\eta\in\cb{0.25,0.5,1,2,4,8}$ over $100$ replications. The value $\eta=4$ gives empirical coverage closest to $0.90$ for the nominal $90\%$ interval. Figure~\ref{fig:appendix-uq-calibration} shows the calibration curve. Table~\ref{tab:appendix-uq-evaluation} then reports an independent evaluation over $200$ new replications for $\eta=1$ and $\eta=4$. Calibration reduces interval width and brings the $90\%$ and $95\%$ intervals close to their nominal coverage levels in this design. Figure~\ref{fig:appendix-uq-evaluation} displays the same independent-evaluation coverages.

\begin{table}[t]
\centering
\caption{Welfare-interval calibration over $100$ replications. Each entry is empirical coverage followed by mean interval width in parentheses.}
\label{tab:appendix-uq-calibration}
\begin{tabular}{rcccc}
\toprule
$\eta$ & $50\%$ & $80\%$ & $90\%$ & $95\%$ \\
\midrule
0.25 & 0.93 (0.0479) & 1.00 (0.1114) & 1.00 (0.1670) & 1.00 (0.2219) \\
0.5 & 0.78 (0.0376) & 0.99 (0.0822) & 0.99 (0.1195) & 1.00 (0.1566) \\
1 & 0.69 (0.0274) & 0.95 (0.0604) & 0.99 (0.0869) & 0.99 (0.1152) \\
2 & 0.63 (0.0195) & 0.88 (0.0426) & 0.97 (0.0615) & 0.99 (0.0830) \\
4 & 0.54 (0.0135) & 0.80 (0.0289) & 0.90 (0.0413) & 0.93 (0.0545) \\
8 & 0.47 (0.0093) & 0.72 (0.0192) & 0.80 (0.0265) & 0.84 (0.0342) \\
\bottomrule
\end{tabular}
\end{table}

\begin{figure}[t]
\centering
\includegraphics[width=0.70\columnwidth]{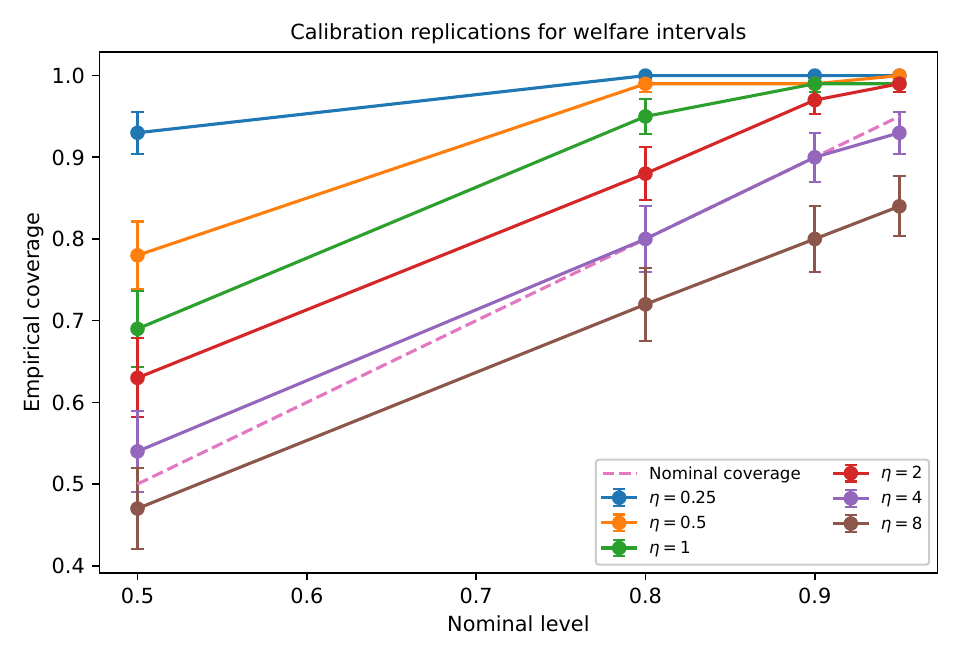}
\caption{Empirical welfare-interval coverage on the $100$ calibration replications.}
\label{fig:appendix-uq-calibration}
\end{figure}

\begin{table}[t]
\centering
\caption{Independent welfare-interval evaluation over $200$ replications. Each entry is empirical coverage followed by mean interval width in parentheses.}
\label{tab:appendix-uq-evaluation}
\begin{tabular}{rcccc}
\toprule
$\eta$ & $50\%$ & $80\%$ & $90\%$ & $95\%$ \\
\midrule
1 & 0.750 (0.0283) & 0.945 (0.0624) & 0.980 (0.0913) & 0.990 (0.1231) \\
4 & 0.635 (0.0139) & 0.855 (0.0300) & 0.910 (0.0433) & 0.950 (0.0575) \\
\bottomrule
\end{tabular}
\end{table}

\begin{figure}[t]
\centering
\includegraphics[width=0.70\columnwidth]{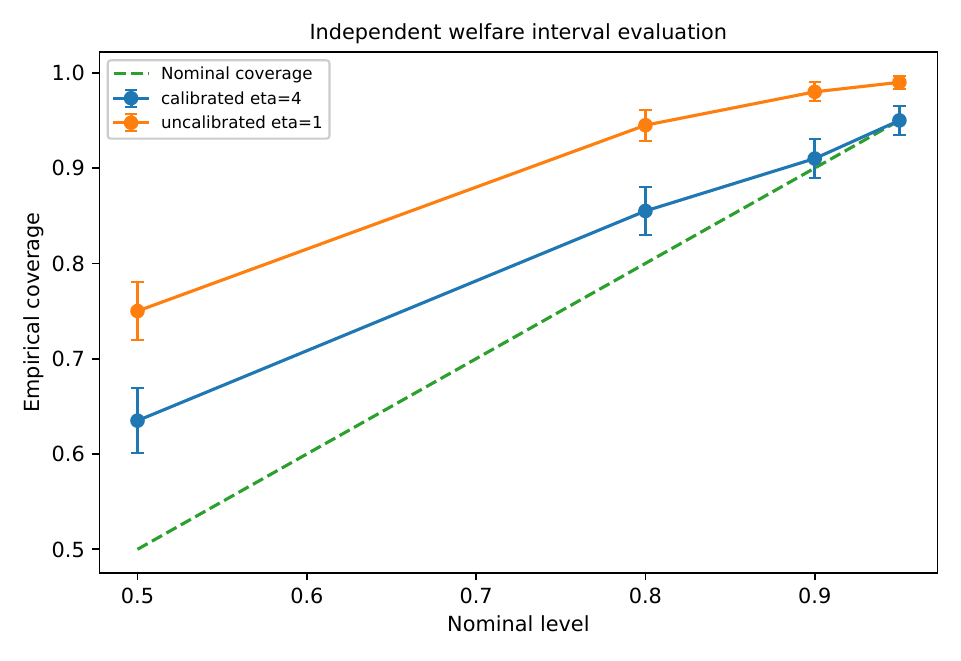}
\caption{Independent welfare-interval evaluation over $200$ replications. The dashed line is nominal coverage.}
\label{fig:appendix-uq-evaluation}
\end{figure}

\subsection{Posterior Action Disagreement}
\label{app:posterior-action-disagreement}

We next examine whether posterior uncertainty is large in regions where the learned action is unstable. For each evaluation point, define
$p_{\calD}\p{x}
=
\mathbb{P}_{w\sim\Pi_{\eta}\p{\cdot\mid\calD}}
\p{f_w\p{x}\geq0}$
and
$u_{\calD}\p{x}
=
2\min\cb{
p_{\calD}\p{x},
1-p_{\calD}\p{x}
}$.
We approximate $p_{\calD}\p{x}$ using the same last-layer generalized Gauss--Newton draws as above. The score $u_{\calD}\p{x}$ is close to zero when posterior draws mostly agree on the action and close to one when posterior draws are split.

Let
$\widehat d_{\calD}\p{x}
=
\mathbbm{1}\sqb{f_{\widehat w_{\mathrm{MAP}}}\p{x}\geq0}$
and
$d^\star\p{x}
=
\mathbbm{1}\sqb{\tau\p{x}\geq0}$.
We define the conditional margin by
$M\p{x}=\abs{\tau\p{x}}$
and the regret contribution by
$r_{\calD}\p{x}
=
M\p{x}\mathbbm{1}\sqb{\widehat d_{\calD}\p{x}\neq d^\star\p{x}}$.
We divide evaluation points into five groups according to
$u_{\calD}\p{x}\in[0,0.2)$,
$u_{\calD}\p{x}\in[0.2,0.4)$,
$u_{\calD}\p{x}\in[0.4,0.6)$,
$u_{\calD}\p{x}\in[0.6,0.8)$,
and
$u_{\calD}\p{x}\in\sqb{0.8,1}$.
Table~\ref{tab:appendix-action-disagreement} reports the results. Higher posterior disagreement is associated with lower agreement with the Bayes-optimal action and a larger regret contribution. Figure~\ref{fig:appendix-action-disagreement} shows the same pattern graphically.

\begin{table*}[t]
\centering
\caption{Posterior action disagreement over $200$ replications. Points is the pooled number of evaluation points in each group. The remaining entries are averages of replication-level group means followed by Monte Carlo standard errors in parentheses.}
\label{tab:appendix-action-disagreement}
\begin{tabular}{rrrrrr}
\toprule
Group & Points & Disagreement & Agreement & Margin & Regret \\
\midrule
1 & 391154 & 0.0023 (0.0001) & 0.9955 (0.0005) & 0.8822 (0.0002) & 0.0002 (0.0000) \\
2 & 3026 & 0.2923 (0.0005) & 0.7176 (0.0173) & 0.0863 (0.0032) & 0.0165 (0.0017) \\
3 & 2210 & 0.4958 (0.0006) & 0.6568 (0.0153) & 0.0802 (0.0037) & 0.0219 (0.0018) \\
4 & 1859 & 0.6972 (0.0007) & 0.5799 (0.0122) & 0.0762 (0.0039) & 0.0278 (0.0019) \\
5 & 1751 & 0.8992 (0.0006) & 0.5225 (0.0103) & 0.0748 (0.0040) & 0.0348 (0.0023) \\
\bottomrule
\end{tabular}
\end{table*}

\begin{figure*}[t]
\centering
\includegraphics[width=0.49\textwidth]{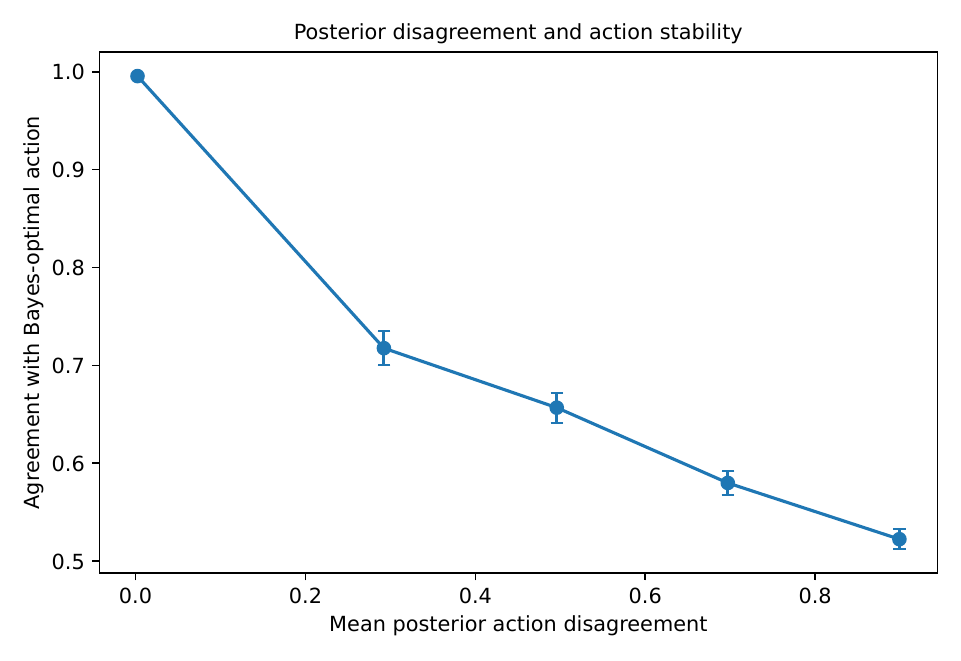}
\includegraphics[width=0.49\textwidth]{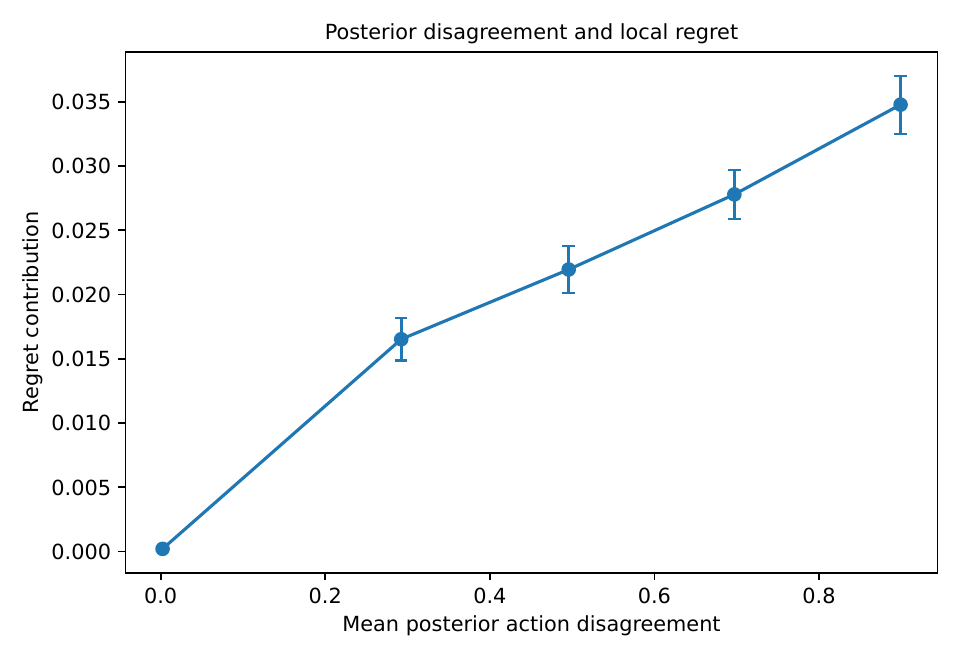}
\caption{Posterior action disagreement over $200$ replications. The left panel reports agreement with the Bayes-optimal action. The right panel reports the regret contribution. Error bars show Monte Carlo standard errors.}
\label{fig:appendix-action-disagreement}
\end{figure*}

\subsection{Posterior-Threshold Rules}
\label{app:posterior-threshold-rules}

The MAP rule is a point decision and does not directly use the width of the welfare interval. The posterior can also define rules based on posterior action probabilities. For $\alpha\in\cb{0.05,0.10,0.20,0.30}$, we consider the threshold rule that selects action $1$ when $p_{\calD}\p{x}>1-\alpha$ and selects action $0$ otherwise. The uncertain-region rate is the fraction of evaluation points satisfying $\alpha\leq p_{\calD}\p{x}\leq1-\alpha$, and the action-change rate is the fraction for which the threshold rule differs from the MAP rule.

Table~\ref{tab:appendix-posterior-threshold-rules} reports the results. In this design, the threshold rule changes a small fraction of actions and has a small welfare loss relative to the MAP rule. Figure~\ref{fig:appendix-posterior-threshold-rules} plots the welfare difference against the uncertain-region rate.

\begin{table}[t]
\centering
\caption{Posterior-threshold rules over $200$ replications. Welfare and Difference are means followed by Monte Carlo standard errors in parentheses. The fifth percentile is computed across replications.}
\label{tab:appendix-posterior-threshold-rules}
\begin{tabular}{rcccccc}
\toprule
$\alpha$ & Welfare & 5th pct. & MAP & Difference & Uncertain & Change \\
\midrule
0.05 & 0.4309 (0.0001) & 0.4278 & 0.4315 & -0.0006 (0.0001) & 0.0285 & 0.0144 \\
0.10 & 0.4311 (0.0001) & 0.4287 & 0.4315 & -0.0004 (0.0001) & 0.0222 & 0.0112 \\
0.20 & 0.4313 (0.0001) & 0.4293 & 0.4315 & -0.0002 (0.0001) & 0.0146 & 0.0074 \\
0.30 & 0.4314 (0.0001) & 0.4293 & 0.4315 & -0.0001 (0.0000) & 0.0090 & 0.0046 \\
\bottomrule
\end{tabular}
\end{table}

\begin{figure}[t]
\centering
\includegraphics[width=0.70\columnwidth]{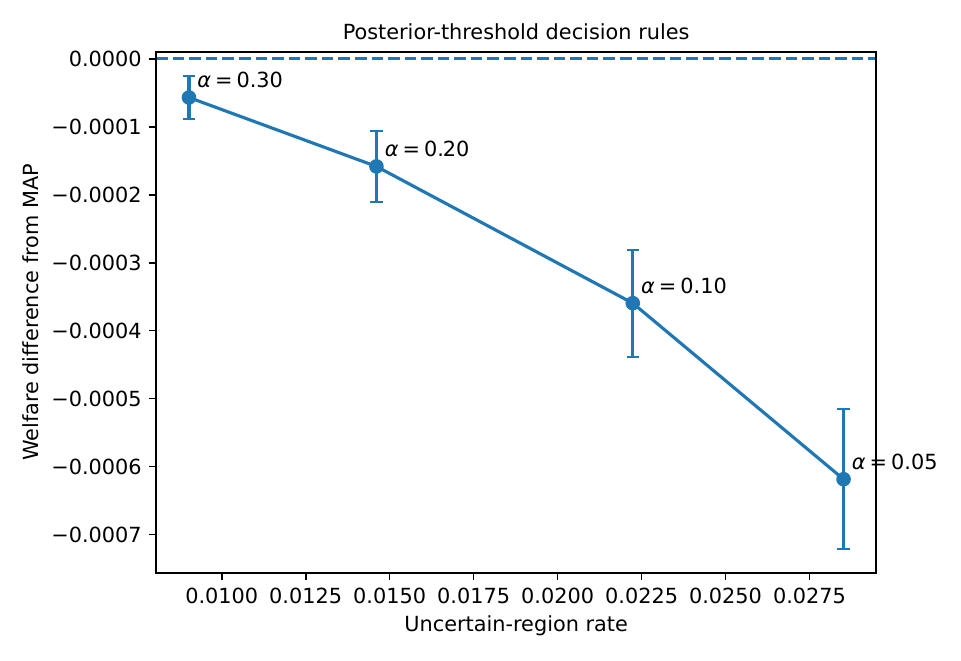}
\caption{Posterior-threshold rules over $200$ replications. The vertical axis reports the welfare difference from the MAP rule.}
\label{fig:appendix-posterior-threshold-rules}
\end{figure}

\clearpage
\section{Experimental Details}
\label{app:exp-details}

\subsection{Common Protocol}
This subsection summarizes the experimental protocol used in Section~\ref{sec:experiments}.
In all synthetic experiments, we observe the full feedback vector for each unit, and welfare is computed on a held-out test set using the realized potential outcomes.
We repeat each experiment $100$ times with independent random seeds.
For each trial, we split the data into training, validation, and test sets with proportions $0.6$, $0.2$, and $0.2$.
We report the mean and variance of the test welfare across trials, together with boxplots.
For GBPL and GBPLNet, we compare four values of $\zeta$, namely $\zeta\in\cb{1,0.1,0.01}$ and a validation-selected value among a fixed grid.

\subsection{Synthetic Experiments with Binary Actions}
In the binary action experiments, we generate i.i.d. covariates $X\in\bbR^{d}$ with $d=10$ from a standard normal distribution.
For each draw, we generate two potential outcomes $Y\p{0}$ and $Y\p{1}$ with additive Gaussian noise.
We consider three data generating processes.

In DGP1, the baseline outcome mean is
$\gamma_{0}\p{x}=x_{1}+0.5x_{2}^{2}-0.25x_{3}x_{4}$ and the conditional treatment effect is
$\tau\p{x}=2\tanh\p{x^{\top}w/\sqrt{d}}$ for a fixed unit-norm vector $w$.
We set $Y\p{0}=\gamma_{0}\p{X}+\varepsilon_{0}$ and $Y\p{1}=\gamma_{0}\p{X}+\tau\p{X}+\varepsilon_{1}$ with independent $\varepsilon_{0},\varepsilon_{1}\sim\mathcal{N}\p{0,1}$.

In DGP2, the baseline outcome mean is
$\gamma_{0}\p{x}=0.5\sin\p{x_{1}}+0.3x_{2}-0.2x_{3}^{2}$ and the treatment effect is
$\tau\p{x}=1.5\sin\p{x_{1}+x_{2}}$, and we generate outcomes as above with independent standard normal noise.

In DGP3, the baseline outcome mean is
$\gamma_{0}\p{x}=0.2x_{1}-0.1x_{2}+0.1\tanh\p{x_{3}}$ and the treatment effect is
$\tau\p{x}=2.5\p{\mathbbm{1}\sqb{x_{1}>0}-0.5+0.2x_{2}}$, and we generate outcomes as above with independent standard normal noise.

For evaluation, a fitted score $f(x)\in(-1,1)$ induces a deterministic policy that selects action $1$ if $f(x)\ge 0$ and selects action $0$ otherwise.
The oracle policy selects the action with the larger realized potential outcome.

\subsection{Synthetic Experiments with Multiple Actions}
In the multi-action experiments, we set $K=5$ and generate i.i.d. covariates $X\in\bbR^{d}$ with $d=10$ from a standard normal distribution.
For each unit, we generate the full feedback vector $\p{Y(1),\dots,Y(K)}$ with additive Gaussian noise.
We again consider three data generating processes.

In DGP1, the baseline component is
$b\p{x}=x_{1}+0.5x_{2}^{2}-0.25x_{3}x_{4}$, and for each action $a\in\cb{1,\dots,K}$ we generate a random unit-norm vector $w_{a}$ and define 
$\gamma_{a}\p{x}=b\p{x}+1.5\sin\p{x^{\top}w_{a}/\sqrt{d}+0.3a}$.
We set $Y\p{a}=\gamma_{a}\p{X}+\varepsilon_{a}$ with independent $\varepsilon_{a}\sim\mathcal{N}\p{0,1}$.

In DGP2, the baseline component is
$b\p{x}=0.5\sin\p{x_{1}}+0.3x_{2}-0.2x_{3}^{2}$, and
$\gamma_{a}\p{x}=b\p{x}+2\tanh\p{x^{\top}w_{a}/\sqrt{d}-0.2a}$.

In DGP3, the baseline component is
$b\p{x}=0.2x_{1}-0.1x_{2}+0.1\tanh\p{x_{3}}$, and
$\gamma_{a}\p{x}=b\p{x}+1.0\mathbbm{1}\sqb{x^{\top}w_{a}/\sqrt{d}>0}+0.1a$.

A fitted model outputs a probability vector $\delta\p{x}\in\Delta_{K}$, and we evaluate the corresponding stochastic policy whose welfare is given by $\bbE\sqb{\sum_{a\in\cb{1,\dots,K}}\delta_{a}\p{X}Y\p{a}}$, which is estimated by the sample average on the test set. This differs from the deterministic argmax rule in Section~\ref{sec:computation}; the reported welfare therefore includes the effect of policy randomization.

\subsection{UCI Experiments}
The UCI/OpenML experiments use real covariates and a real-valued response $y$ obtained from OpenML \citep{Vanschoren2013openml} and the UCI repository \citep{Dua2019uci}.
Because these benchmark datasets provide only a single response, we construct synthetic full feedback vectors by defining action-dependent shifts as smooth functions of $x$ and adding them to a standardized version of $y$. 
Concretely, we construct $Y(a)$ by adding an action-specific term of the form $\tanh\p{x^{\top}w_{a}/\sqrt{d}+c_{a}}$ to the standardized response, and we then evaluate welfare using the resulting full feedback vector.
This construction preserves the observed covariates and the common response component. Because the same standardized response enters every action, action differences are determined by the added functions of $x$; the benchmark evaluates policy approximation under real covariates rather than action-specific outcome noise.

\section{Additional Experimental Results with Full-Feedback Settings}
\label{appdx:additionalsim}

\subsection{Experiments with Multiple Actions}
We consider $K=5$ actions and the baseline-free symmetric full-vector surrogate in Section~\ref{sec:K}.
Each trial uses $n=6000$ observations and $d=10$ features.
We compare DirectWelfare, a direct optimization baseline that maximizes empirical welfare in the policy class, and GBPLNet, the proposed General Bayes method based on the full-vector squared-loss surrogate.
For GBPLNet, we again compare $\zeta\in\cb{1,0.1,0.01}$ and a validation choice from $\cb{1,0.1,0.01,0.001}$.
Table~\ref{tab:synthetic-K} reports welfare and regret, and Figure~\ref{fig:synthetic-K-boxplots} shows welfare means with Monte Carlo standard errors.

Table~\ref{tab:synthetic-K} indicates that GBPLNet is competitive with DirectWelfare across the three scenarios.
Moreover, the results show that the choice of $\zeta$ can matter for welfare.
In particular, in DGP3, $\zeta=1.0$ yields noticeably lower welfare, while selection by validation welfare avoids this deterioration.
Across the $100$ DGP3 trials, the validation rule selects $\zeta=0.001$, $0.01$, and $0.1$ in $5$, $45$, and $50$ trials, respectively, and never selects $\zeta=1.0$. Because these experiments evaluate the simplex-valued stochastic policy, the reduction at $\zeta=1.0$ includes stronger shrinkage toward uniform randomization. It does not imply a change in the population action ranking of the deterministic argmax rule.

\begin{table*}[t]
\centering
\caption{Synthetic $K=5$ results, $100$ trials. Columns report welfare mean, welfare variance, welfare standard error, regret mean, and regret standard error across trials.}
\label{tab:synthetic-K}
\begin{tabular}{lllrrrrr}
\toprule
dataset & method & $\zeta$ & welfare\_mean & welfare\_var & welfare\_se & regret\_mean & regret\_se \\
\midrule
\multirow{5}{*}{DGP1} & DirectWelfare & -- & 1.3366 & 0.0117 & 0.0108 & 0.4972 & 0.0066 \\
\cmidrule(lr){2-8}
 & \multirow{4}{*}{GBPLNet} & 0.01 & 1.3404 & 0.0115 & 0.0107 & 0.4934 & 0.0063 \\
 &  & 0.1 & 1.3432 & 0.0105 & 0.0102 & 0.4906 & 0.0056 \\
 &  & 1.0 & 1.3417 & 0.0095 & 0.0097 & 0.4921 & 0.0050 \\
 &  & CV & 1.3588 & 0.0098 & 0.0099 & 0.4751 & 0.0051 \\
\midrule
\multirow{5}{*}{DGP2} & DirectWelfare & -- & 1.4239 & 0.0096 & 0.0098 & 0.4850 & 0.0052 \\
\cmidrule(lr){2-8}
 & \multirow{4}{*}{GBPLNet} & 0.01 & 1.4245 & 0.0090 & 0.0095 & 0.4844 & 0.0049 \\
 &  & 0.1 & 1.4226 & 0.0101 & 0.0100 & 0.4864 & 0.0057 \\
 &  & 1.0 & 1.4154 & 0.0103 & 0.0102 & 0.4936 & 0.0048 \\
 &  & CV & 1.4379 & 0.0095 & 0.0098 & 0.4710 & 0.0046 \\
\midrule
\multirow{5}{*}{DGP3} & DirectWelfare & -- & 0.8385 & 0.0031 & 0.0056 & 0.5136 & 0.0043 \\
\cmidrule(lr){2-8}
 & \multirow{4}{*}{GBPLNet} & 0.01 & 0.8389 & 0.0031 & 0.0055 & 0.5131 & 0.0043 \\
 &  & 0.1 & 0.8403 & 0.0029 & 0.0054 & 0.5117 & 0.0041 \\
 &  & 1.0 & 0.7466 & 0.0023 & 0.0048 & 0.6054 & 0.0037 \\
 &  & CV & 0.8413 & 0.0028 & 0.0053 & 0.5107 & 0.0041 \\
\bottomrule
\end{tabular}
\end{table*}

\begin{figure*}[t]
\centering
\includegraphics[width=0.95\linewidth]{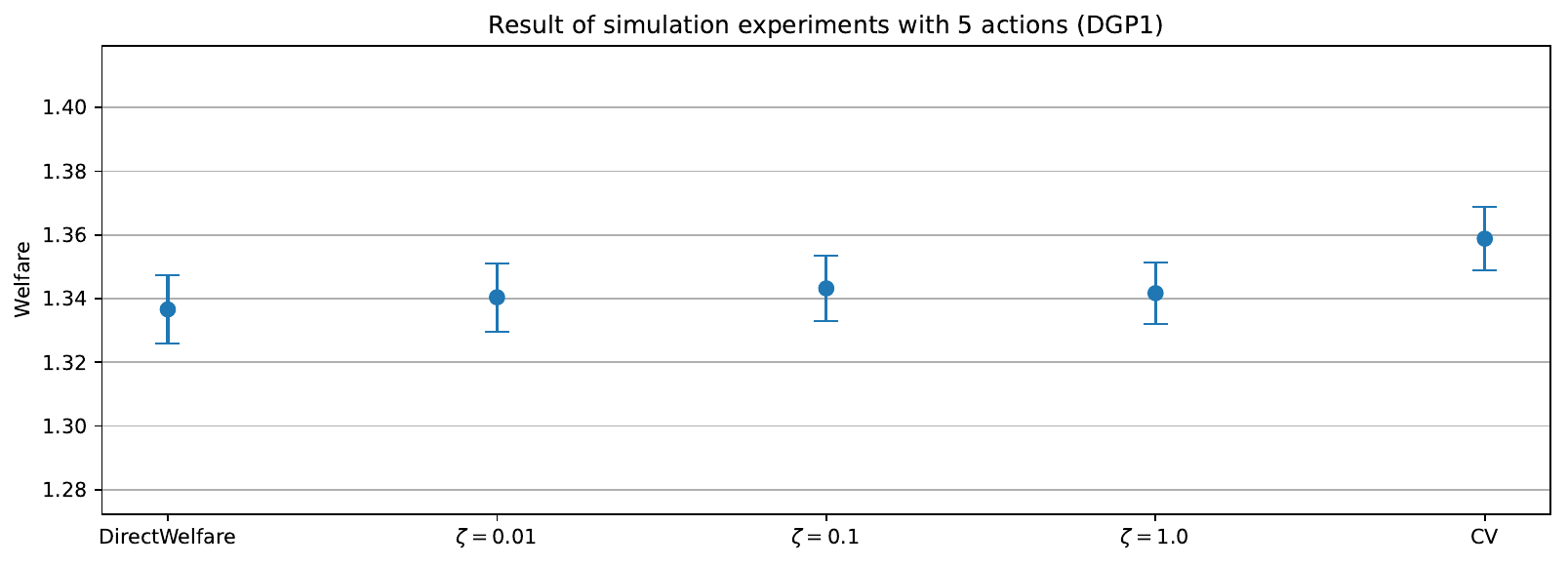}
\includegraphics[width=0.95\linewidth]{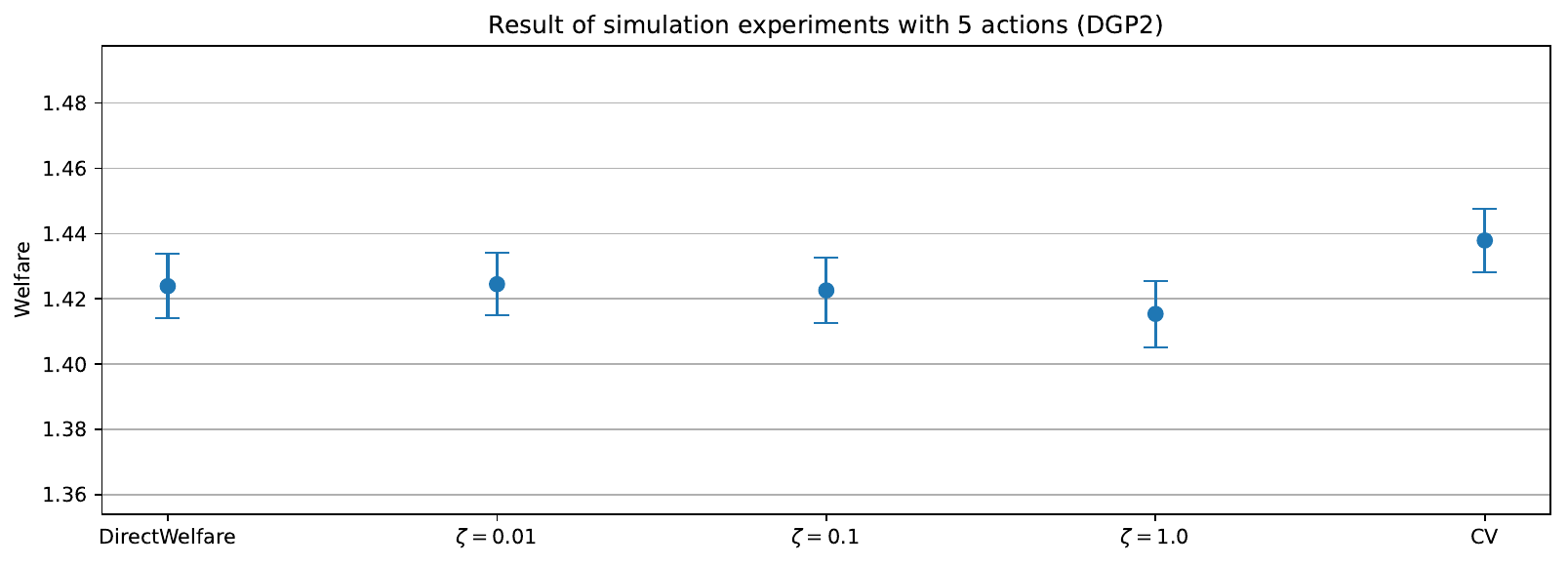}
\includegraphics[width=0.95\linewidth]{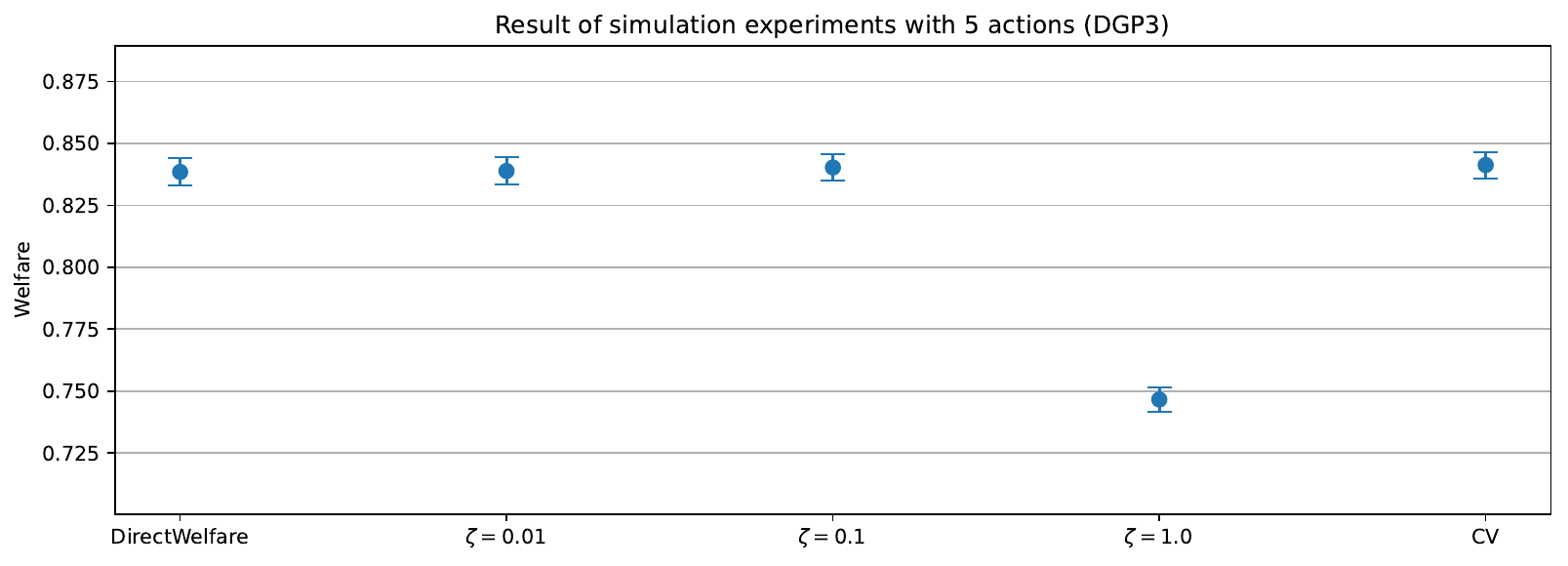}
\caption{Synthetic $K=5$ welfare means with Monte Carlo standard errors across $100$ trials for DGP1, DGP2, and DGP3.}
\label{fig:synthetic-K-boxplots}
\end{figure*}

\subsection{UCI and OpenML Experiments}
We also evaluate the binary action setting on two regression datasets, yacht and energy\_efficiency, accessed via OpenML \citep{Vanschoren2013openml}, and originally hosted in the UCI Machine Learning Repository \citep{Dua2019uci}. The yacht and energy\_efficiency datasets contain $308$ and $768$ observations, respectively.
Since these datasets provide a single response variable, we construct full potential outcomes by adding an action-dependent effect to the original response, so that each observation contains $\p{Y\p{0},Y\p{1}}$.
This construction yields a controlled full feedback benchmark that preserves the feature distribution of each dataset.

We run $100$ trials using random splits, and we compare the same methods as in the synthetic binary experiments.
Table~\ref{tab:uci} reports welfare and regret, and Figure~\ref{fig:uci-boxplots} shows welfare boxplots.
In these two datasets, all methods achieve similar welfare. The small regret values reflect the controlled construction in which action differences are deterministic functions of the covariates. DiffReg is slightly better on average, and WeightedLogistic is slightly worse.

\begin{table*}[t]
\centering
\caption{UCI and OpenML results, binary actions, $100$ trials. Columns report welfare mean, welfare variance, welfare standard error, regret mean, and regret standard error across trials.}
\label{tab:uci}
\setlength{\tabcolsep}{4pt}
\begin{tabular}{lllrrrrr}
\toprule
dataset & method & $\zeta$ & welfare\_mean & welfare\_var & welfare\_se & regret\_mean & regret\_se \\
\midrule
\multirow{7}{*}{energy\_efficiency} & DiffReg & -- & 0.3569 & 0.0295 & 0.0172 & 0.0001 & 0.0000 \\
\cmidrule(lr){2-8}
 & \multirow{4}{*}{GBPLNet} & 0.01 & 0.3565 & 0.0295 & 0.0172 & 0.0004 & 0.0001 \\
 &  & 0.1 & 0.3568 & 0.0295 & 0.0172 & 0.0002 & 0.0001 \\
 &  & 1.0 & 0.3568 & 0.0295 & 0.0172 & 0.0002 & 0.0000 \\
 &  & CV & 0.3568 & 0.0295 & 0.0172 & 0.0002 & 0.0000 \\
\cmidrule(lr){2-8}
 & PluginReg & -- & 0.3567 & 0.0296 & 0.0172 & 0.0002 & 0.0000 \\
\cmidrule(lr){2-8}
 & WeightedLogistic & -- & 0.3553 & 0.0296 & 0.0172 & 0.0016 & 0.0002 \\
\midrule
\multirow{7}{*}{yacht} & DiffReg & -- & 0.3465 & 0.0234 & 0.0153 & 0.0001 & 0.0000 \\
\cmidrule(lr){2-8}
 & \multirow{4}{*}{GBPLNet} & 0.01 & 0.3458 & 0.0234 & 0.0153 & 0.0008 & 0.0002 \\
 &  & 0.1 & 0.3463 & 0.0234 & 0.0153 & 0.0004 & 0.0001 \\
 &  & 1.0 & 0.3464 & 0.0234 & 0.0153 & 0.0002 & 0.0001 \\
 &  & CV & 0.3464 & 0.0234 & 0.0153 & 0.0003 & 0.0001 \\
\cmidrule(lr){2-8}
 & PluginReg & -- & 0.3463 & 0.0234 & 0.0153 & 0.0003 & 0.0000 \\
\cmidrule(lr){2-8}
 & WeightedLogistic & -- & 0.3449 & 0.0233 & 0.0153 & 0.0017 & 0.0002 \\
\bottomrule
\end{tabular}
\end{table*}

\begin{figure*}[t]
\centering
\includegraphics[width=0.95\linewidth]{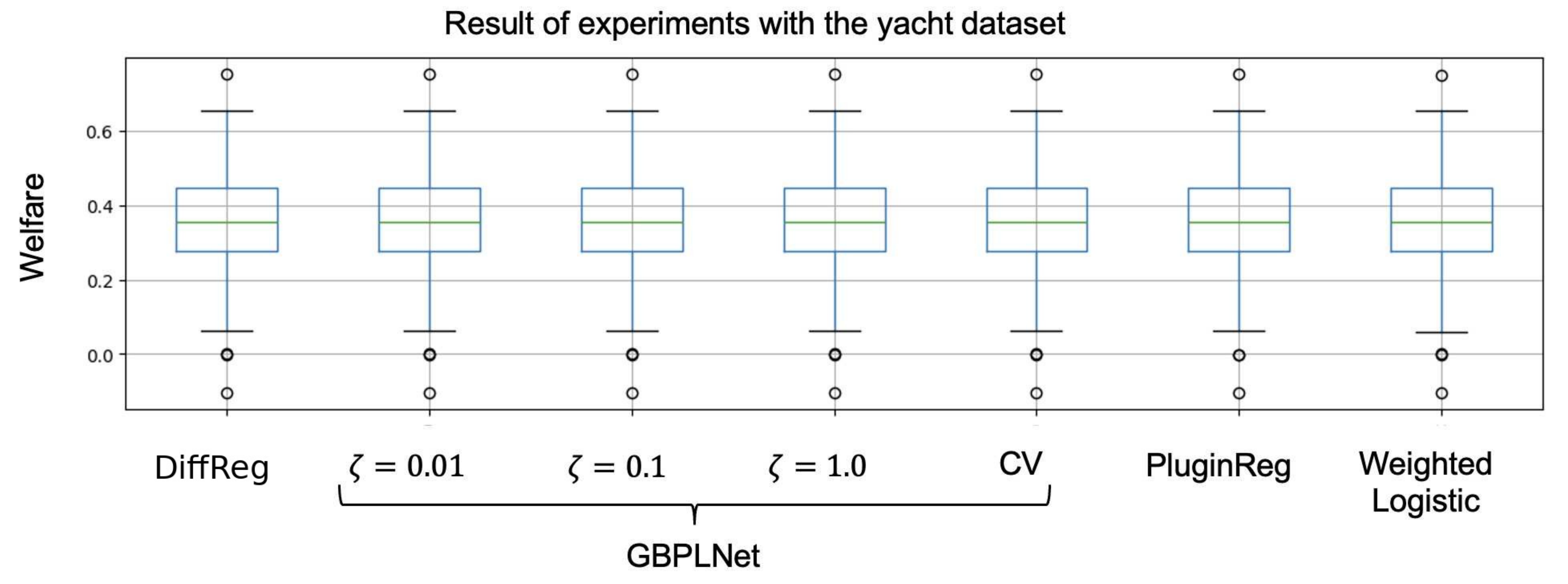}
\includegraphics[width=0.95\linewidth]{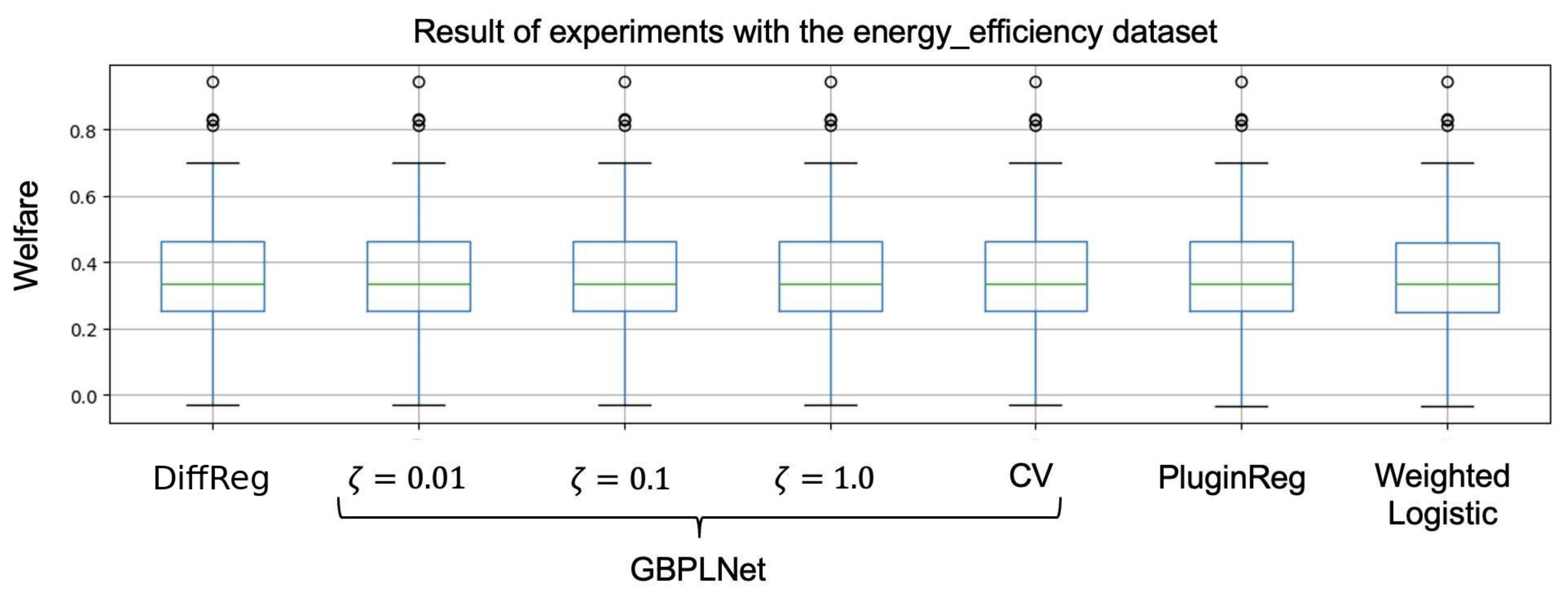}
\caption{UCI and OpenML welfare boxplots across $100$ trials for yacht and energy\_efficiency.}
\label{fig:uci-boxplots}
\end{figure*}

\section{Additional Experimental Results with Counterfactual Settings}
\label{app:cf-exp}

This appendix reports additional empirical results for the missing outcome setting in Section~\ref{sec:missing}.
In these experiments, the learner observes only the logged outcome $Y_i=Y_i\p{A_i}$ and the action $A_i$, and the full outcome vector is used only for evaluation on a held-out test set.

\subsection{Common Protocol}
We follow the same training protocol as in Appendix~\ref{app:exp-details}.
Each trial splits the data into training, validation, and test sets with proportions $0.6$, $0.2$, and $0.2$.
For each candidate model, we train on the training set and select the epoch by early stopping on the validation loss.
For the surrogate tuning parameter, we compare $\zeta\in\cb{1,0.1,0.01}$ and a validation-selected choice from the grid $\cb{1,0.1,0.01,0.001}$. In the tables, the label (CV) indicates the validation-selected $\zeta$. Because the full outcome vector is not observed in this setting, we select $\zeta$ by the IPW or DR pseudo-welfare on the validation set, matching the pseudo-outcome construction used for training.

\subsection{Logged Data Construction and Pseudo-Outcomes}
The synthetic experiments generate potential outcomes as in Appendix~\ref{app:exp-details} and then convert them into logged data by drawing actions from a stochastic logging policy.
For binary actions, we generate a propensity score $e\p{x}=\bbP\p{A=1\mid X=x}$ from a logistic model with a random linear index, and we clip $e\p{x}$ to satisfy overlap.
For $K$ actions, we generate propensities $\cb{e_a\p{x}}_{a=1}^K$ from a softmax model with random linear logits, and we sample $A$ from the resulting multinomial distribution.

Given logged data $\cb{\p{X_i,A_i,Y_i}}_{i=1}^n$, we compute pseudo-outcomes by IPW estimator in \eqref{eq:ipw-pseudo} or by the DR construction in \eqref{eq:dr-pseudo}.
In the binary action case, we form a pseudo-outcome difference by $\widetilde{U}_i=\widetilde{Y}_i\p{1}-\widetilde{Y}_i\p{0}$, and we train a bounded score model by minimizing the squared-loss surrogate in \eqref{eq:surrogate-empirical} with $Y_i\p{1}-Y_i\p{0}$ replaced by $\widetilde{U}_i$.
Furthermore, in the multi-action case, we use the full-vector empirical losses $\widehat{L}^{\mathrm{IPW}}_n\p{\theta}$ and its DR analog.

For DR pseudo-outcomes, we estimate the nuisance outcome regression $\widehat\gamma_a\p{x}$ by a neural network trained with a masked squared loss on observed outcomes.
Moreover, for propensity scores, we use the true propensity in the synthetic experiments to isolate the effect of the surrogate and the pseudo-outcome construction.
In the UCI and OpenML experiments, the propensity is estimated by multinomial or binary logistic regression on the training set.

\subsection{Synthetic Experiments with Binary Actions}
We use the three synthetic DGPs described in Appendix~\ref{app:exp-details} and generate logged data with $n=6000$ and $d=10$.
We repeat the experiment for $50$ trials.
Table~\ref{tab:cf-synthetic-binary} reports welfare and regret.

The comparison methods are as follows.
GBPLNet-DR and GBPLNet-IPW apply the proposed squared-loss surrogate with DR and IPW pseudo-outcome differences.
PluginReg fits a nuisance outcome regression model and chooses the action with the larger predicted outcome.
Since the outcome difference is not observed, DiffReg is not applicable in this setting.

Across the three DGPs, DR-based GBPLNet is typically more stable than IPW-based GBPLNet.
The sensitivity to $\zeta$ remains scenario dependent, and the validation-selected choice performs competitively in all three DGPs.

\begin{table*}[t]
\centering
\caption{Counterfactual synthetic binary results, $50$ trials. Columns report welfare mean, welfare variance, welfare standard error, regret mean, and regret standard error across trials. GBPLNet-DR and GBPLNet-IPW use DR and IPW pseudo-outcome differences, respectively. For PluginReg, welfare standard errors are computed as the square root of the reported welfare variance divided by the number of trials; a dash indicates that a regret standard error is not reported.}
\label{tab:cf-synthetic-binary}
\begin{tabular}{lllrrrrr}
\toprule
dgp & method & $\zeta$ & welfare\_mean & welfare\_var & welfare\_se & regret\_mean & regret\_se \\
\midrule
\multirow{9}{*}{DGP1} & \multirow{4}{*}{GBPLNet-DR} & 0.01 & 1.0184 & 0.0086 & 0.0131 & 0.2336 & 0.0055 \\
 &  & 0.1 & 1.0161 & 0.0082 & 0.0128 & 0.2359 & 0.0051 \\
 &  & 1.0 & 1.0212 & 0.0088 & 0.0133 & 0.2309 & 0.0054 \\
 &  & CV & 1.0211 & 0.0081 & 0.0127 & 0.2309 & 0.0050 \\
\cmidrule(lr){2-8}
 & \multirow{4}{*}{GBPLNet-IPW} & 0.01 & 1.0041 & 0.0085 & 0.0130 & 0.2479 & 0.0054 \\
 &  & 0.1 & 1.0002 & 0.0082 & 0.0128 & 0.2518 & 0.0054 \\
 &  & 1.0 & 1.0068 & 0.0082 & 0.0128 & 0.2452 & 0.0052 \\
 &  & CV & 1.0075 & 0.0081 & 0.0127 & 0.2445 & 0.0052 \\
\cmidrule(lr){2-8}
 & PluginReg & -- & 1.0198 & 0.0077 & 0.0124 & 0.2322 & -- \\
\midrule
\multirow{9}{*}{DGP2} & \multirow{4}{*}{GBPLNet-DR} & 0.01 & 0.7096 & 0.0024 & 0.0069 & 0.3296 & 0.0052 \\
 &  & 0.1 & 0.7063 & 0.0025 & 0.0071 & 0.3329 & 0.0053 \\
 &  & 1.0 & 0.6970 & 0.0025 & 0.0071 & 0.3422 & 0.0055 \\
 &  & CV & 0.6984 & 0.0026 & 0.0072 & 0.3408 & 0.0051 \\
\cmidrule(lr){2-8}
 & \multirow{4}{*}{GBPLNet-IPW} & 0.01 & 0.6967 & 0.0022 & 0.0066 & 0.3425 & 0.0055 \\
 &  & 0.1 & 0.6896 & 0.0026 & 0.0072 & 0.3496 & 0.0051 \\
 &  & 1.0 & 0.6778 & 0.0027 & 0.0073 & 0.3613 & 0.0058 \\
 &  & CV & 0.6780 & 0.0023 & 0.0068 & 0.3612 & 0.0052 \\
\cmidrule(lr){2-8}
 & PluginReg & -- & 0.6720 & 0.0031 & 0.0079 & 0.3672 & -- \\
\midrule
\multirow{9}{*}{DGP3} & \multirow{4}{*}{GBPLNet-DR} & 0.01 & 0.4744 & 0.0052 & 0.0102 & 0.3057 & 0.0043 \\
 &  & 0.1 & 0.4757 & 0.0055 & 0.0105 & 0.3044 & 0.0045 \\
 &  & 1.0 & 0.4744 & 0.0053 & 0.0103 & 0.3056 & 0.0042 \\
 &  & CV & 0.4764 & 0.0053 & 0.0103 & 0.3036 & 0.0041 \\
\cmidrule(lr){2-8}
 & \multirow{4}{*}{GBPLNet-IPW} & 0.01 & 0.4729 & 0.0058 & 0.0107 & 0.3072 & 0.0045 \\
 &  & 0.1 & 0.4695 & 0.0057 & 0.0107 & 0.3106 & 0.0046 \\
 &  & 1.0 & 0.4736 & 0.0058 & 0.0108 & 0.3065 & 0.0045 \\
 &  & CV & 0.4744 & 0.0053 & 0.0103 & 0.3056 & 0.0040 \\
\cmidrule(lr){2-8}
 & PluginReg & -- & 0.4749 & 0.0049 & 0.0099 & 0.3051 & -- \\
\bottomrule
\end{tabular}
\end{table*}

\subsection{Synthetic Experiments with Multiple Actions}
We next consider $K=5$ actions and use the three multi-action DGPs described in Appendix~\ref{app:exp-details}.
We generate logged data with $n=8000$ and $d=10$, and we repeat the experiment for $50$ trials.
Table~\ref{tab:cf-synthetic-K} reports welfare and regret.

In this setting, we use the baseline-free full-vector surrogate in Section~\ref{sec:K} and the missing-outcome empirical losses in Section~\ref{sec:missing}.
We implement the policy class by a softmax network that outputs $\pi_{\theta}\p{x}\in\Delta_K$, and we refer to the resulting method as GBPLNet.
The comparison method PluginRegK fits a nuisance outcome regression model and chooses the action with the largest predicted outcome. Table~\ref{tab:cf-synthetic-K} shows that PluginRegK has higher mean welfare in DGP1 and DGP2, whereas the best GBPLNet specification has higher mean welfare in DGP3. GBPLNet is evaluated as a stochastic simplex-valued policy, while PluginRegK uses a deterministic argmax rule, so the comparison also includes the welfare effect of policy randomization.

\begin{table*}[t]
\centering
\caption{Counterfactual synthetic $K=5$ results, $50$ trials. Columns report welfare mean, welfare variance, welfare standard error, regret mean, and regret standard error across trials. GBPLNet-DR and GBPLNet-IPW use DR and IPW pseudo-outcome vectors, respectively. For PluginRegK, welfare standard errors are computed as the square root of the reported welfare variance divided by the number of trials; a dash indicates that a regret standard error is not reported.}
\label{tab:cf-synthetic-K}
\begin{tabular}{lllrrrrr}
\toprule
DGP & method & $\zeta$ & welfare\_mean & welfare\_var & welfare\_se & regret\_mean & regret\_se \\
\midrule
\multirow{9}{*}{DGP1} & \multirow{4}{*}{GBPLNet-DR} & 0.01 & 1.3312 & 0.0112 & 0.0150 & 0.5122 & 0.0081 \\
 &  & 0.1 & 1.3467 & 0.0127 & 0.0159 & 0.4967 & 0.0084 \\
 &  & 1.0 & 1.3313 & 0.0124 & 0.0157 & 0.5122 & 0.0079 \\
 &  & CV & 1.3325 & 0.0125 & 0.0158 & 0.5109 & 0.0082 \\
\cmidrule(lr){2-8}
 & \multirow{4}{*}{GBPLNet-IPW} & 0.01 & 1.3081 & 0.0130 & 0.0161 & 0.5353 & 0.0096 \\
 &  & 0.1 & 1.3139 & 0.0125 & 0.0158 & 0.5296 & 0.0084 \\
 &  & 1.0 & 1.2927 & 0.0137 & 0.0165 & 0.5507 & 0.0094 \\
 &  & CV & 1.2959 & 0.0138 & 0.0166 & 0.5475 & 0.0092 \\
\cmidrule(lr){2-8}
 & PluginRegK & -- & 1.4011 & 0.0108 & 0.0147 & 0.4424 & -- \\
\midrule
\multirow{9}{*}{DGP2} & \multirow{4}{*}{GBPLNet-DR} & 0.01 & 1.4595 & 0.0098 & 0.0140 & 0.4790 & 0.0074 \\
 &  & 0.1 & 1.4601 & 0.0092 & 0.0135 & 0.4784 & 0.0072 \\
 &  & 1.0 & 1.4347 & 0.0102 & 0.0143 & 0.5037 & 0.0062 \\
 &  & CV & 1.4360 & 0.0107 & 0.0146 & 0.5025 & 0.0064 \\
\cmidrule(lr){2-8}
 & \multirow{4}{*}{GBPLNet-IPW} & 0.01 & 1.4240 & 0.0108 & 0.0147 & 0.5145 & 0.0085 \\
 &  & 0.1 & 1.4187 & 0.0111 & 0.0149 & 0.5198 & 0.0088 \\
 &  & 1.0 & 1.3951 & 0.0117 & 0.0153 & 0.5434 & 0.0083 \\
 &  & CV & 1.3976 & 0.0109 & 0.0148 & 0.5409 & 0.0081 \\
\cmidrule(lr){2-8}
 & PluginRegK & -- & 1.5138 & 0.0093 & 0.0136 & 0.4247 & -- \\
\midrule
\multirow{9}{*}{DGP3} & \multirow{4}{*}{GBPLNet-DR} & 0.01 & 0.8323 & 0.0028 & 0.0075 & 0.5194 & 0.0062 \\
 &  & 0.1 & 0.8302 & 0.0029 & 0.0076 & 0.5216 & 0.0064 \\
 &  & 1.0 & 0.7224 & 0.0028 & 0.0074 & 0.6293 & 0.0061 \\
 &  & CV & 0.7186 & 0.0027 & 0.0074 & 0.6331 & 0.0061 \\
\cmidrule(lr){2-8}
 & \multirow{4}{*}{GBPLNet-IPW} & 0.01 & 0.8287 & 0.0033 & 0.0081 & 0.5230 & 0.0068 \\
 &  & 0.1 & 0.8227 & 0.0036 & 0.0085 & 0.5291 & 0.0073 \\
 &  & 1.0 & 0.7124 & 0.0030 & 0.0077 & 0.6394 & 0.0064 \\
 &  & CV & 0.7134 & 0.0031 & 0.0078 & 0.6383 & 0.0064 \\
\cmidrule(lr){2-8}
 & PluginRegK & -- & 0.8177 & 0.0030 & 0.0077 & 0.5341 & -- \\
\bottomrule
\end{tabular}
\end{table*}

\subsection{UCI and OpenML Experiments}
We also report counterfactual experiments on the regression datasets yacht and energy\_efficiency, accessed via OpenML.
Since these datasets provide a single response variable, we follow the same semi-synthetic construction as in Appendix~\ref{appdx:additionalsim} and generate potential outcomes by adding an action-dependent effect to the original response.
We then generate logged data by sampling actions from a stochastic logging policy and observing only $Y\p{A}$.
We repeat the experiment for $30$ trials with random splits.
Table~\ref{tab:cf-uci-binary} reports welfare and regret.

\begin{table*}[t]
\centering
\caption{Counterfactual UCI and OpenML results, binary actions, $30$ trials. Columns report welfare mean, welfare variance, welfare standard error, regret mean, and regret standard error across trials. For PluginReg, welfare standard errors are computed as the square root of the reported welfare variance divided by the number of trials; a dash indicates that a regret standard error is not reported.}
\label{tab:cf-uci-binary}
\setlength{\tabcolsep}{4pt}
\begin{tabular}{lllrrrrr}
\toprule
dataset & method & $\zeta$ & welfare\_mean & welfare\_var & welfare\_se & regret\_mean & regret\_se \\
\midrule
\multirow{9}{*}{energy\_efficiency} & \multirow{4}{*}{GBPLNet-DR} & 0.01 & 0.3687 & 0.0422 & 0.0375 & 0.0020 & 0.0004 \\
 &  & 0.1 & 0.3688 & 0.0422 & 0.0375 & 0.0019 & 0.0004 \\
 &  & 1.0 & 0.3684 & 0.0424 & 0.0376 & 0.0023 & 0.0004 \\
 &  & CV & 0.3680 & 0.0423 & 0.0376 & 0.0026 & 0.0005 \\
\cmidrule(lr){2-8}
 & \multirow{4}{*}{GBPLNet-IPW} & 0.01 & 0.3399 & 0.0462 & 0.0393 & 0.0307 & 0.0066 \\
 &  & 0.1 & 0.3419 & 0.0479 & 0.0400 & 0.0288 & 0.0062 \\
 &  & 1.0 & 0.3394 & 0.0471 & 0.0396 & 0.0313 & 0.0050 \\
 &  & CV & 0.3387 & 0.0494 & 0.0406 & 0.0319 & 0.0065 \\
\cmidrule(lr){2-8}
 & PluginReg & -- & 0.3674 & 0.0418 & 0.0373 & 0.0033 & -- \\
\midrule
\multirow{9}{*}{yacht} & \multirow{4}{*}{GBPLNet-DR} & 0.01 & 0.3690 & 0.0245 & 0.0286 & 0.0032 & 0.0009 \\
 &  & 0.1 & 0.3684 & 0.0244 & 0.0285 & 0.0038 & 0.0009 \\
 &  & 1.0 & 0.3683 & 0.0243 & 0.0285 & 0.0040 & 0.0007 \\
 &  & CV & 0.3690 & 0.0244 & 0.0285 & 0.0033 & 0.0006 \\
\cmidrule(lr){2-8}
 & \multirow{4}{*}{GBPLNet-IPW} & 0.01 & 0.3189 & 0.0257 & 0.0293 & 0.0534 & 0.0080 \\
 &  & 0.1 & 0.3095 & 0.0270 & 0.0300 & 0.0628 & 0.0083 \\
 &  & 1.0 & 0.3051 & 0.0254 & 0.0291 & 0.0671 & 0.0093 \\
 &  & CV & 0.3082 & 0.0261 & 0.0295 & 0.0641 & 0.0096 \\
\cmidrule(lr){2-8}
 & PluginReg & -- & 0.3659 & 0.0230 & 0.0277 & 0.0064 & -- \\
\bottomrule
\end{tabular}
\end{table*}

\begin{remark}
The counterfactual experiments illustrate how the framework in Section~\ref{sec:missing} can be implemented with standard nuisance estimation and validation.
In applications with logged outcomes, one can proceed as follows.
First, choose a policy class, either a bounded score model $f_{\theta}\p{x}$ for binary actions or a simplex-valued model $\pi_{\theta}\p{x}$ for multiple actions.
Second, estimate propensities $\widehat e_a\p{x}$ and, for DR, outcome regression functions $\widehat\gamma_a\p{x}$ using flexible supervised learners.
Third, construct IPW or DR pseudo-outcomes and minimize the corresponding empirical loss.
Finally, select $\zeta$ by validation, and report welfare using an off-policy evaluation estimator that matches the pseudo-outcome construction.
In settings where posterior uncertainty is desired, the same empirical losses can be used in generalized posterior sampling by the approximation methods described in Section~\ref{sec:computation}.
\end{remark}

\end{document}